\useunder{\uline}{\ul}{}
\definecolor{celestialblue}{rgb}{0.29, 0.59, 0.82}
\definecolor{cerulean}{rgb}{0.0, 0.48, 0.65}
\definecolor{cadmiumorange}{rgb}{0.93, 0.53, 0.18}
\DeclareMathOperator*{\argmax}{arg\,max}
\newtheorem{theorem}{Theorem}
\newcommand*\circled[1]{\tikz[baseline=(char.base)]{
            \node[shape=circle,draw,inner sep=0.8pt] (char) {#1};}}
\begin{document}



\title{AuditVotes: Elevating Provable Defense for GNNs with Efficient Augmentation and  Conditional Smoothing}



\author{Yuni Lai}
\authornotemark[1]
\affiliation{%
  \institution{Xidian University}
  \city{Xi'an}
  \country{China}
}

\affiliation{%
  \institution{The Hong Kong Polytechnic University}
  \city{Hong Kong}
  \country{China}}
\email{laiyuni@xidian.edu.cn}

\author{Yulin Zhu}
\affiliation{%
  \institution{Hong Kong Chu Hai College}
  \city{Hong Kong}
  \country{China}}
\email{ylzhu@chuhai.edu.hk}
\authornote{Equal contribution}

\author{Yixuan Sun}
\affiliation{%
  \institution{The Hong Kong Polytechnic University}
  \city{Hong Kong}
  \country{China}}
\email{selina.sun@connect.polyu.hk}

\author{Yulun Wu}
\affiliation{%
  \institution{National University of Defense Technology}
  \city{Changsha}
  \country{China}}
\email{wuyulun14@nudt.edu.cn}

\author{Bin Xiao}
\affiliation{%
  \institution{The Hong Kong Polytechnic University}
  \city{Hong Kong}
  \country{China}}
\email{b.xiao@polyu.edu.hk}

\author{Gaolei Li}
\affiliation{%
  \institution{Shanghai JiaoTong University}
  \city{Shanghai}
  \country{China}}
\email{gaolei_li@sjtu.edu.cn}

\author{Jianhua Li}
\affiliation{%
  \institution{Shanghai JiaoTong University}
  \city{Shanghai}
  \country{China}}
\email{lijh888@sjtu.edu.cn}

\author{Qi Xie}
\affiliation{%
  \institution{Hubei University}
  \city{Hubei}
  \country{China}}
\email{d20230182@hubu.edu.cn}

\author{Kai Zhou}
\affiliation{%
  \institution{The Hong Kong Polytechnic University}
  \city{Hong Kong}
  \country{China}}
\email{kaizhou@polyu.edu.hk}
\authornote{Corresponding author}

\renewcommand{\shortauthors}{Yuni Lai et al.}

\begin{abstract}

Despite advancements in Graph Neural Networks (GNNs), adaptive attacks continue to challenge their robustness. Certified robustness via randomized smoothing offers provable guarantees but suffers from a severe accuracy–robustness trade-off, limiting its practical use. To bridge this gap, we introduce AuditVotes, the first framework that simultaneously achieves high clean accuracy and strong certified robustness. AuditVotes seamlessly integrates two novel components into the randomized smoothing pipeline: (1) graph rewiring augmentation, which denoises randomized graphs to recover data quality, and (2) conditional smoothing, which filters low-confidence votes to ensure prediction consistency. We establish a novel theoretical result, proving that certified robustness is preserved under arbitrary filtering functions. Designed for inductive learning, our framework generalizes to unseen nodes and applies broadly to other smoothing schemes, including de-randomized smoothing for graphs and Gaussian smoothing for images. Extensive experiments show AuditVotes delivers substantial gains: on Cora-ML under 20-edge attacks, it improves clean accuracy by $437.1\%$ and certified accuracy by $409.3\%$, while maintaining comparable runtime to vanilla smoothing. As a widely applicable and efficient plug-in, AuditVotes offers higher accuracy and stronger guarantees, enabling the practical and certifiably robust GNNs in security-sensitive domains.

\end{abstract}

\begin{CCSXML}
<ccs2012>
<concept>
<concept_id>10002978.10002986.10002989</concept_id>
<concept_desc>Security and privacy~Formal security models</concept_desc>
<concept_significance>500</concept_significance>
</concept>
<concept>
<concept_id>10002978.10002986.10002987</concept_id>
<concept_desc>Security and privacy~Trust frameworks</concept_desc>
<concept_significance>500</concept_significance>
</concept>
<concept>
<concept_id>10002978.10002986.10002990</concept_id>
<concept_desc>Security and privacy~Logic and verification</concept_desc>
<concept_significance>500</concept_significance>
</concept>
</ccs2012>
\end{CCSXML}

\ccsdesc[500]{Security and privacy~Formal security models}
\ccsdesc[500]{Security and privacy~Trust frameworks}
\ccsdesc[500]{Security and privacy~Logic and verification}

\keywords{Certified robustness; graph neural networks; provable defense; randomized smoothing; graph augmentation.}

\maketitle

\section{Introduction}
Graph Neural Networks (GNNs)~\cite{kipf2016semi,Hamilton2017inductive,wu2022graph} have emerged as a powerful tool for learning and inference on graph-structured data, finding applications in various domains such as recommendation systems~\cite{fan2019graph,yuan2024contextgnn}, financial fraud detection~\cite{motie2024financial}, and traffic analysis~\cite{dong2023graph}. Despite their success, GNNs are vulnerable to adversarial attacks~\cite{sun2022adversarial,zhai2023state}, which can significantly degrade their prediction accuracy by introducing small, carefully crafted perturbations to the input data
~\cite{zugner2018adversarial,wu2019adversarial,zügner2018adversarial,wang2024efficient}. 
For instance, financial crimes can modify the graph structure by manipulating their transactions to escape fraudster detection~\cite{wu2024safeguarding}. 
This vulnerability has prompted extensive research to develop robust GNN models~\cite{jin2020prognn,liu2021graph,chen2021understanding,zhu2023focusedcleaner}. 
However, a central challenge lies in that the robustness of developed models can be further compromised by more advanced and \textit{adaptive} attacks~\cite{mujkanovic2022defenses,gosch2024adversarial}. Consequently, the robustness uncertainty limits the usage of GNNs in safety-critical applications. 
One promising solution is certified robustness~\cite{li2023sok}, which aims to provide \textit{provable} guarantees that a model's predictions will remain stable under \textit{any} possible adversarial perturbation within a specified range. 

The most representative approach to achieving certified robustness is randomized smoothing ~\cite{li2023sok,cohen2019certified,bojchevski2020efficient,lai2024node,schuchardt2023localized,wang2021certified},
which transforms any \textit{base} classifier into a \textit{smoothed} classifier with robustness guarantees.
Specifically, randomized smoothing utilizes a majority voting mechanism, where each vote is the prediction of the base classifier over a \textit{randomized} graph, produced by adding carefully calibrated noise to the original graph. The class with the most votes is then the final prediction, resulting in a smoothed classifier that can be proven to be robust.
Despite the tremendous advances in certified robustness for GNNs in recent years, several key challenges remain, hindering the practical deployment of certifiably robust GNNs.

\begin{figure}[!t]
    \centering
    \includegraphics[width=1.0\linewidth]{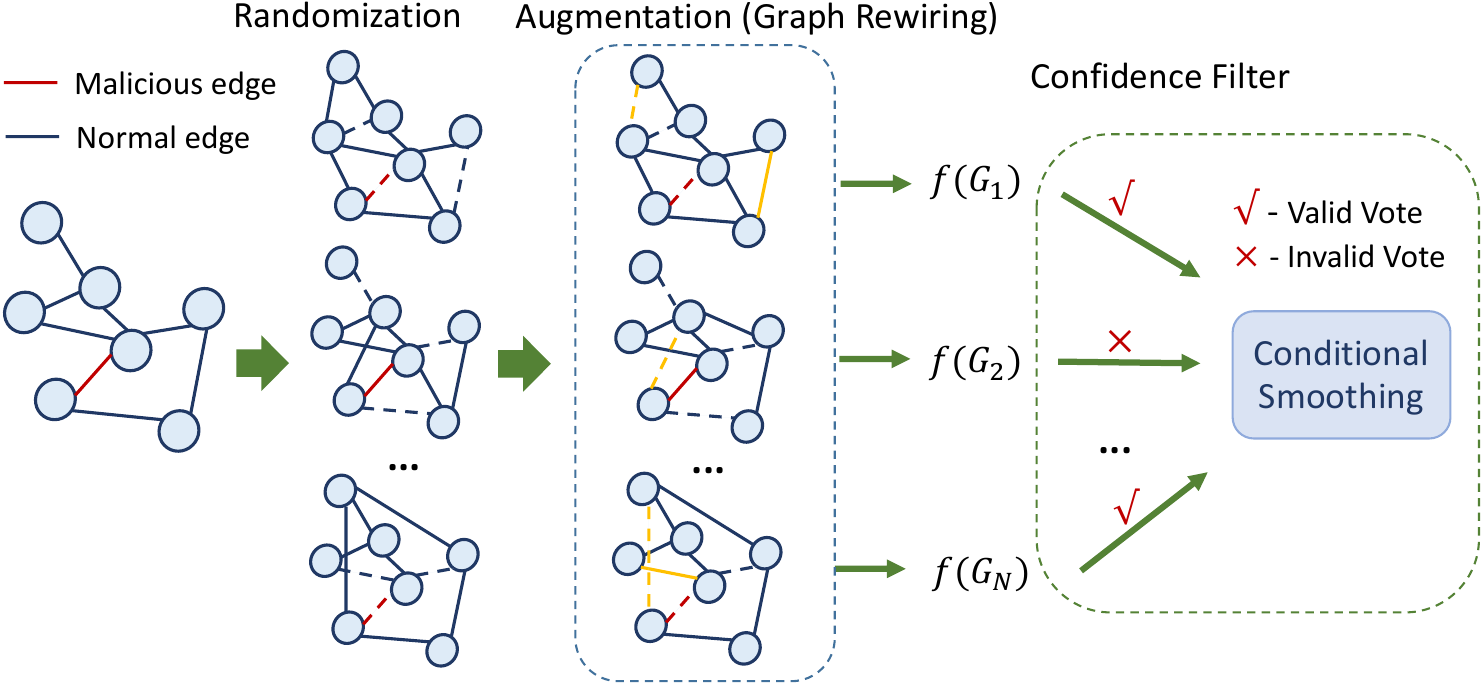}
    \vspace{-15pt}
    \caption{\textbf{AuditVotes} introduces two key components: \underline{au}gmentation and con\underline{dit}ional smoothing. The graph rewiring augmentation de-noises the randomized graph to improve data quality, and the confidence filter removes
    low-quality predictions to enhance prediction consistency.}
    \label{fig:Framework}
    \vspace{-10pt}
\end{figure}

\textbf{Accuracy-Robustness Trade-off}.  
Certified robustness is primarily assessed by certified accuracy, which is the proportion of predictions that are both correct (i.e., clean accuracy) and stable (i.e., robust ratio). Existing works~\cite{bojchevski2020efficient,jia2020certified,wang2021certified,xia2024gnncert,li2025agnncert} have not adequately addressed this inherent trade-off: they achieve a higher robustness ratio often at a cost of introducing larger random noise to the input graph, which can severely degrade data quality and significantly diminish clean accuracy. For example, SparseSmooth~\cite{bojchevski2020efficient} requires applying a higher probability of edge addition $p_{+}$ to ensure a larger certified radius. However, even with a modest $p_{+}$ value of $0.1$, the clean accuracy of the smoothed classifier will drop to less than $0.5$. 
Thus, a key challenge is to effectively reduce the severity of this trade-off to make certifiably robust GNNs practically applicable.

\textbf{Prediction Consistency.} 
It is commonly observed that current smoothed classifiers~\cite{cohen2019certified,bojchevski2020efficient} often exhibit low \textit{prediction consistency}, meaning that the predictions over randomized graphs vary significantly, which can lead to diminished certified performance.
Consequently, several works~\cite{salman2019provably,li2019certified, jeong2020consistency,horvath2021boosting,jeong2023confidence} have focused on improving prediction consistency. However, these approaches often impose a heavy computational burden, resulting in significant overhead for the certification framework. Additionally, most of the proposed techniques are tailored for image classification tasks, with limited exploration in the graph learning domain. 
Therefore, a critical challenge is to develop efficient and graph-specific methods that enhance prediction consistency without introducing excessive computational costs. 

\textbf{Robustness on unseen nodes}.
For the node classification task, existing certifiably robust models focus primarily on 
transductive learning setting ~\cite{bojchevski2020efficient,chen2020smoothing,jia2020certified,wang2021certified,lai2024collective}, where the model leverages the complete graph during training to make predictions on known nodes (already appeared in the graph). 
However, as noted in~\cite{gosch2024adversarial}, this transductive setting falls into a robustness pitfall: the model can remember the clean graph to 
achieve perfect robustness on known nodes while the robustness is unsure on unseen nodes. 
Furthermore, real-world applications often require GNN models to operate in an inductive setting \cite{liu2021indigo,qin2023towards,gao2024graph,motie2024financial}, where they must generalize to accommodate new nodes continuously introduced to the graph. For instance, in financial networks, as new accounts or transactions emerge, models must classify and analyze these entities without prior knowledge of their characteristics. This highlights the necessity of an inductive certification framework to avoid the pitfall of transductive learning and ensure superior robustness for unseen nodes under evasion attacks.

To address the above challenges, we propose \textbf{AuditVotes}, a general certification framework that can achieve both high clean accuracy and certifiably robust performance for GNNs.  Our core strategy is to maintain a larger degree of randomization (for a higher robust ratio) while enhancing the quality of both graph data and votes (to \textit{restore} model accuracy). AuditVotes (Figure~\ref{fig:Framework}) introduces two components seamlessly integrated into the randomized smoothing pipeline: graph rewiring \underline{Au}gmentation and con\underline{dit}ional smoothing. Specifically, augmentation improves input data quality by recovering original structural patterns from noisy graphs, while conditional smoothing enhances output consistency by filtering out low-confidence predictions.

Nevertheless, effectively designing these two components and incorporating them into the smoothing pipeline presents several design requirements: \circled{1} Both augmentation and conditional filter must be computationally efficient to handle thousands of randomized graphs, typical in randomized smoothing. 
\circled{2} The augmentation must adapt to the type and level of injected noise to effectively recover the original graph structure.
\circled{3} Both components should be applicable in inductive learning, generalizing to unseen nodes, which is a practical necessity in evasion attack scenarios~\cite{dai2024comprehensive,gosch2024adversarial}. 
\circled{4} The resulting integrated framework must retain provable robustness guarantees, requiring new theoretical proofs to ensure certification.


In this paper, we instantiate AuditVotes with efficient, noise adaptive, and inductive strategies that meet these demands. For augmentation, we propose three rewiring methods based solely on node features, ensuring they generalize to unseen nodes:
\begin{itemize}[nosep]
    \item[$\bullet$] \textbf{JacAug}, a lightweight and training-free Jaccard‑similarity filter that prunes and adds edges based on feature overlap.
    \item[$\bullet$] \textbf{FAEAug}, a feature auto‑encoder that learns to reconstruct edge patterns from node features.
    \item[$\bullet$] \textbf{SimAug}, a multi‑head similarity model that captures diverse feature relations for rewiring.
\end{itemize}
These augmenters rely only on node features for edge prediction, enabling them to generalize to unseen graphs or nodes.  Moreover, they can be trained and precomputed once before smoothing, avoiding repeated computation. 
For conditional smoothing, we employ a confidence‑based filter (\textbf{Conf}) that excludes low‑confidence predictions from voting. This post‑processing step requires only a scalar comparison per sample, adding negligible overhead while substantially improving prediction consistency. 
We further establish a new robustness certificate for AuditVotes, proving that augmentation and conditional filtering preserve the underlying smoothing guarantees.
Finally, we demonstrate that AuditVotes, as a general framework, can be applied to other smoothing schemes, such as de-randomized smoothing for GNNs~\cite{xia2024gnncert,li2025agnncert,yang2024distributed}, randomized smoothing for image classification~\cite{cohen2019certified,weber2023rab}. 

To validate our approach, we conduct extensive evaluations demonstrating that \textbf{AuditVotes significantly boosts clean accuracy, certified accuracy, and empirical robustness while maintaining high efficiency and wide applicability}.
For example, against edge‑insertion attacks (up to $20$ edges) on the Citeseer dataset, our SimAug improves the clean accuracy from $14.7\%$ to $70.9\%$, and AuditVotes (SimAug+Conf) raises the certified accuracy from $14.7\%$ to $72.6\%$ with the runtime increase of $<1\%$. 
Moreover, we evaluate AuditVotes as an empirical defense under Nettack~\cite{zugner2018adversarial} and IG-attack~\cite{wu2019adversarial}. AuditVotes achieves $83.3\%$ robust accuracy on Cora‑ML with an attack budget of $5$ edges per node.
Furthermore, AuditVotes generalizes effectively to other smoothing schemes: when applied to GNNCert~\cite{xia2024gnncert}, AuditVotes (SimAug) improves clean accuracy by $12.7\%$ and certified accuracy by $13.9\%$ on the Cora-ML dataset. 
With Gaussian smoothing~\cite{cohen2019certified} on the CIFAR-10 dataset, AuditVotes (Conf) raises certified accuracy by  $19.6\%$. To demonstrate the scalability of AuditVotes, we further apply it to AGNNCert~\cite{li2025agnncert} on a large graph dataset with over 2 million nodes, and it only increases the runtime by $<6\%$. 
These results demonstrate that AuditVotes delivers higher accuracy, stronger guarantees, and practical runtime, making it a compelling solution for deploying certifiably robust GNNs in security‑sensitive applications that require high accuracy. In summary, AuditVotes introduces augmentation and conditional smoothing to break the accuracy–robustness trade‑off, operates inductively for real‑world generalization, and applies broadly across smoothing schemes—all with minimal overhead.

\section{Background and Problem Definition}
\label{sec:backg}
In this section, we provide the necessary background by defining notations, describing the node classification task with GNNs,  introducing certifying robustness frameworks applied to GNNs and image classifiers, and defining the threat model. 
\subsection{Inductive Graph Node Classification}
We represent a graph as $\mathcal{G}=(\mathbf{V},\mathbf{E},\mathbf{X})$, where $\mathbf{V}$ and $\mathbf{E}$ denote the set of nodes and edges, respectively,
and $\mathbf{X}$ is the node feature matrix of size $|\mathbf{V}|\times d$.
We use $\mathbf{A}$ to denote the adjacency matrix of the graph $\mathcal{G}$. Each node is associated with a label among $\mathcal{Y}=\{1,2,\cdots, C\}$, and the node classifier, such as a GNN, aims to predict the labels. 
To avoid the robustness pitfall of transductive learning as studied in \cite{gosch2024adversarial}, we consider a fully inductive graph learning that a GNN node classifier $f:\mathbb{G}\rightarrow \mathcal{Y}$ is trained on a training graph $\mathcal{G}_{train}$ and then the model can generalize to unseen nodes in the testing graph $\mathcal{G}_{test}$. Note that the validation nodes and testing nodes are strictly excluded from the training graph (see the experimental setup in Section~\ref{sec:evaluation} for more details).

\subsection{Certified Robustness for GNNs}
The certified robustness model for GNNs can be grouped as a randomized and de-randomized smoothing framework. The former adds random noise to the graph, while the latter partition the graph into several subgraphs. Then, the certificate is established based on ``majority votes" over multiple inputs.
\subsubsection{Randomized Smoothing}
The mainstream approach to realize certified robustness is \textit{randomized smoothing} with representative works ~\cite{bojchevski2020efficient,jia2020certified,wang2021certified,lai2024node,lai2024collective} calibrated for GNNs. 
Specifically, given an input graph $\mathcal{G}$ and \textit{any} base classifier $f(\cdot)$ (such as a GNN), they will add random noise to $\mathcal{G}$, resulting in a collection of randomized graphs denoted by $\phi(\mathcal{G})$, where $\phi(\cdot)$ denotes the randomization process. Then, the base classifier $f$ is used to make predictions over the random graphs, and the final prediction is obtained through majority voting. Equivalently, randomized smoothing can convert any base classifier into a \textit{smoothed classifier} $g(\cdot)$, defined as:
\begin{align}
\label{eqn:smooth_g}
    &g_v(\mathcal{G}):=\argmax_{y\in \mathcal{Y}}p_{v,y}(\mathcal{G}) :=\mathbb{P}(f_v(\phi(\mathcal{G}))=y),
\end{align}
where $\mathbb{P}(f_v(\phi(\mathcal{G}))=y)$ denotes the probability of predicting a node $v$ as class $y$.
Then, it can be proved that the smoothed classifier $g$ is provably robust with respect to a certain perturbation space $\mathcal{B}$, which defines the set of perturbations introduced by the attacker. In this paper, we take SparseSmooth~\cite{bojchevski2020efficient} as an example, and certify against Graph Modification Attacks (GMA) perturbation where the attacker can add at most $r_a$ edges and delete at most $r_d$ edges among existing nodes.
The $\phi(\mathcal{G})$ is defined as randomly adding edges with probability $p_{+}$ and removing edges with probability $p_{-}$. Larger $p_-$ and $p_+$ yield larger certifiable $r_a$ and $r_d$. However, a high level of noise hinders the accuracy of the smoothed model. In this paper, we aim to improve the trade-off between model accuracy and robustness. 

\subsubsection{De-randomized Smoothing} 
De-randomized smoothing~\cite{xia2024gnncert,yang2024distributed,li2025agnncert} divides the graphs into several groups with fixed randomness. For instance, GNNCert~\cite{xia2024gnncert} and AGNNCert-E~\cite{li2025agnncert} partition the edges $\mathbf{E}$ in the graph into $T_s$ groups $\{\mathbf{E}_1, \mathbf{E}_2,\cdots, \mathbf{E}_{T_s}\}$ via a hash function $\mathcal{H}(\cdot)$:
\begin{equation}
    \mathbf{E}_{i}= \{\mathcal{H}(s_u\oplus s_v) \% T_s+1 = i|(u,v)\in \mathbf{E}\},
\end{equation}
where $i=1,2,\cdots,T_s$, $s_u$ denotes the ID of node $u$, and $\oplus$ represents the concatenation of two strings. These edge sets then form $T_s$ subgraphs $\{\mathcal{G}_1, \mathcal{G}_2,\cdots, \mathcal{G}_{T_s}\}$. Then, the final prediction is also obtained by ``majority vote" over $T_s$ subgraphs:
\begin{align}
\label{eqn:gc_gnncert}
    g_v(\mathcal{G})=\argmax_{y\in \mathcal{Y}} N_v(y):=\sum_{i=1}^{T_s} \mathbb{I}\{f_v(\mathcal{G}_i)=y\},
\end{align}
where $f_v(\mathcal{G}_i)$ is a base classifier that take $\mathcal{G}_i$ as input, and the subscript $(\cdot)_v$ represents the classification results for node $v$. $N_v(y)$ denotes the number of counts that the base classifier votes class $y$ to node $v$. Note that GNNCert~\cite{xia2024gnncert} can also partition node features at the same time to defend against feature manipulation. Since we focus on structural attack, we set the group number of feature partition $T_f=1$ (keep all the nodes). 

Assuming that the attacker can insert several nodes to perturb the graph classification, one malicious node only shows in one subgraph. Consequently, the model is certifiably robust to a certain number of edge perturbations (addition or deletion). The group number $T_s$ (analogous to $p_-$) controls the trade-off between model accuracy and robustness. As the $T_s$ increases, the subgraph becomes sparser while the model can tolerate more edge perturbation. Nevertheless, 
the model accuracy is not satisfying because the subgraph contains poor information for each node. Hence, the accuracy-robustness trade-off also exists.




\subsection{Threat Model}
\subsubsection{\textbf{Attacker}} It is known that GNN models are vulnerable to adversarial attacks~\cite{sun2022adversarial,zügner2018adversarial,zugner2018adversarial} where the attacker can modify the input graph $\mathcal{G}$ (e.g., structure attack) to mislead node classification. In this paper, we focus on GMA in which the attacker can modify some of the edges among the existing nodes. Specifically, we define the attacker setting as follows:
\begin{itemize}
    \item \textbf{Attacker's knowledge}: We assume the white-box (worst-case) attacker knows all the graph structure, node features, and node classifier.  
    \item \textbf{Attacker's power}: The attacker can add at most $r_a$ edges and delete at most $r_d$ edges among existing nodes, which we formally describe the perturbation space as: $B_{r_a,r_d}(\mathcal{G}):=\{\mathcal{G}'|\sum_{ij} \mathbb{I}(\mathbf{A}'_{ij}=\mathbf{A}_{ij}-1)\leq r_d, \sum_{ij} \mathbb{I}(\mathbf{A}'_{ij}=\mathbf{A}_{ij}+1)\leq r_a\}$.
    \item \textbf{Attacker's goal}: The attacker aims to perturb the classification result of a target node in the testing phase (after the model training).
\end{itemize}

\subsubsection{\textbf{Defender}} 
%

For any node classifier, the defender's goal is to provide certified robustness that verifies whether the prediction for a node is consistent under a bounded attack power $B_{r_a,r_d}(\mathcal{G})$. Furthermore, an ideal provable defense is supposed to provide:
\begin{itemize}
    \item \textbf{High clean accuracy}: the model accuracy on the clean graph without considering perturbation. 
    \item \textbf{High certified accuracy}: the ratio of nodes that are both correctly classified and certifiably robust. 
\end{itemize}

\section{The AuditVotes Framework}
\label{sec:auditV}


We design AuditVotes as a general framework (Figure~\ref{fig:Framework}) to improve the performance of certified robustness for GNNs. The term \textbf{AuditVotes} derives from its two essential components, \textbf{\underline{Au}}gmentation and Con\textbf{\underline{dit}}ional Smoothing, encapsulating our main idea of auditing or enhancing the quality of votes to bolster the majority-vote-based certification. Here, we introduce the high-level idea of AuditVotes and instantiate it in Section~\ref{sec:Instantiation}. 

\subsection{Graph Rewiring Augmentation}

Randomized smoothing will inject undesirable noise into the graph, which is the fundamental cause of the trade-off between clean accuracy and certified robustness. Existing works often struggle to balance this trade-off effectively: achieving high certified robustness frequently results in a significant sacrifice in clean accuracy.  For example,  
existing research shows that most attacks on GNNs prefer adding edges than deleting edges ~\cite{zügner2018adversarial,wu2019adversarial,jin2021adversarial,sun2022adversarial,li2023revisiting}. 
Consequently, achieving high certified accuracy against \textit{edge addition} is practically significant.
To achieve this, employing a larger $p_{+}$ (i.e., noise level of adding edges) becomes essential to ensure a larger certification radius. 
However, even with a modest $p_{+}$ value of $0.1$, the clean accuracy of the smoothed classifier will drop to less than $0.50$~\cite{bojchevski2020efficient}.


To better balance the trade-off, we propose an augmentation component that operates on the randomized graph $\phi(\mathcal{G})$ before it goes through the base classifier $f$, thus serving as a \textit{pre-processing} step to remove the noise. 
Our strategy is to \textit{keep good randomization parameters} that will lead to superior certification performance, and process the randomized graph to \textit{restore model accuracy}.
Our choice for processing the randomized graphs is \textit{graph rewiring augmentation}~\cite{ding2022data,zhao2022graph}, which is a widely used technique for improving the utility of graphs by rewiring the edges. 


However, effectively fitting augmentation into the randomized smoothing pipeline has several requirements. \textbf{First}, the augmentation process should be efficient, as it needs to process a large number of randomized graphs. 
\textbf{Second}, the augmentation should be adaptive to the type and level of noise added to the graph. For instance, when the add-edge probability $p_+$ is high, the augmentation is supposed to remove more edges in order to recover the original graph. 
\textbf{Third}, the augmenter should be applicable in inductive learning, as the evasion attack on the graph presents in inductive learning that the model has not seen the testing nodes/graphs. 

To apply augmentation, we create a function composition $f(\mathcal{A}(\cdot))$ to be the new base classifier, where $\mathcal{A}(\cdot)$ denotes an augmentation model. With this scheme, the existing black-box certificates such as~\cite{bojchevski2020efficient,jia2020certified,wang2021certified,xia2024gnncert} can be easily applied to our augmented model without change because these certificates are model-agnostic. Notably, it also offers
us the freedom to design the augmenter $\mathcal{A}$. \textbf{In Section \ref{sec:Instantiation}, we realized three simple yet effective augmentation schemes based on node feature similarities to demonstrate the feasibility of this idea}.

\subsection{Conditional Smoothing with Certified Robustness}
\label{sec:condition}

While augmentation enhances the input graph, the base classifier could still make low-quality predictions over the processed graph. Robustness certificates based on randomized smoothing~\cite{bojchevski2020efficient,chen2020smoothing,jia2020certified,wang2021certified} fundamentally depend on prediction consistency, with higher consistency leading to stronger robustness guarantees~\cite{jeong2020consistency,horvath2021boosting}. To address this, we further propose a \textit{conditional smoothing} framework that \textit{post-processes} the predictions of the base classifier before entering the voting procedure. This framework filters out low-quality predictions to enhance prediction consistency.

Formally, we define a \textit{filtering function}, denoted as $h(\cdot)$, which takes only a randomized graph as input and outputs either $0$ or $1$, where $0$ indicates that the prediction is included in the voting process and $1$ excludes it. Then, we can construct a \textbf{conditional smoothed classifier} $g^c(\cdot)$ as follows:
\begin{align}
\label{eqn:consmooth_g}
    &g^c(\mathcal{G}):=\argmax_{y\in \mathcal{Y}}\mathbb{P}(f(\phi(\mathcal{G}))=y|h(\phi(\mathcal{G}))=0),
\end{align}
where $p_{v,y}(\mathcal{G}):=\mathbb{P}(f(\phi(\mathcal{G}))=y|h(\phi(\mathcal{G}))=0)$ represents the probability of predicting an input graph $\mathcal{G}$  as class $y$ \textit{conditioned on the criterion that $h(\phi(\mathcal{G}))=0$}.

\subsubsection{\textbf{Certified Robustness with Arbitrary Filters.}
The key theoretical challenge is that the filter $h$ could distort the smoothing distribution and break existing certificates. Our main theoretical contribution is to prove that conditional smoothing retains certifiable robustness under \textit{arbitrary filtering functions}, provided we adapt the certification procedure appropriately.}

We state the robustness condition formally in the following theorem (proof in Appendix~\ref{AppendixA.1}).
Specifically, given a node $v$, let $y_A$ and $y_B$ denote the top predicted class and the runner-up class, respectively. Let $\underline{p_A}$ and $\overline{p_B}$ denote the lower bound of $p_{v,y_A}$ and the upper bound of $p_{v,y_B}$, respectively. 
\begin{theorem}
\label{theorem-condition}
    Let the conditional smoothed classifier $g^c(\cdot)$ be as defined in Eq.~\eqref{eqn:consmooth_g}, where $h(\cdot)$ is an arbitrary filtering function that takes only a randomized graph as input and outputs either $0$ or $1$.
    We divide the total sample space $\mathbb{G}$ into $I$ disjoint regions $\mathbb{G}=\bigcup_{i=1}^I \mathcal{R}_i$, where $\mathcal{R}_i$ denote the consent likelihood ratio region that $\frac{\mathbb{P}(\phi(\mathcal{G})=Z)}{\mathbb{P}(\phi(\mathcal{G}')=Z)}=c_i$ (a constant), $\forall Z\in\mathcal{R}_i$. 
    Let $r_i=\mathbb{P}(\phi(\mathcal{G})\in\mathcal{R}_i)$, $r'_i=\mathbb{P}(\phi(\mathcal{G}')\in\mathcal{R}_i)$ denote the probability that the random sample fall in the partitioned region $\mathcal{R}_i$. We define $\mu_{r_a,r_b}$ as follows:
    \begin{align}
    \label{opt:randomsmooth}
    \mu_{r_a,r_b}:=\min_{\mathbf{s},\mathbf{t}}\quad &\mathbf{s}^T\mathbf{r}'-\mathbf{t}^T\mathbf{r}', \\
    \text{s.t.} \quad &\mathbf{s}^T\mathbf{r}=\underline{p_A},\mathbf{t}^T\mathbf{r}=\overline{p_B},\mathbf{s}\in[0,1]^{I},\mathbf{t}\in[0,1]^{I}.\nonumber
    \end{align}
    Then for $\forall \mathcal{G}'\in \mathcal{B}_{r_a,r_b}(\mathcal{G})$, we have $g_v(\mathcal{G}')=g_v(\mathcal{G})$, if $\mu_{r_a,r_b}>0$.  
\end{theorem}
Theorem \ref{theorem-condition} lays the theoretical foundation for integrating the filtering function $h(\cdot)$ into the randomized smoothing framework. Notably, it offers us the freedom to design the function $h$ to improve the quality of votes. In practice, it enhances the certified accuracy of smoothed classifiers when $h(\cdot)$ can efficiently improve prediction consistency while filtering out only a small number of samples (There is a trade-off, because the sample size affects the Clopper-Pearson bound in Appendix~\ref{Sec:AppendixB.3}, and a large sample size is related to better certified performance). In Section~\ref{sec:filterfunc}, we implement a simple yet effective filtering function based on the confidence of the prediction made by the classifier.

\section{Instantiating AuditVotes: Efficient and Inductive Implementation}
 \label{sec:Instantiation}

In this section, we instantiate the augmentation and conditional smoothing in AuditVotes with efficient and effective strategies. More implementation details and algorithms are in Appendix~\ref{Sec:implement_detail}.
\subsection{Efficient, Noise-Adaptive, and Inductive Augmentations}

We design graph rewiring augmentation (de-noise) methods that are computationally efficient, suitable for the inductive setting, and adaptive to noise levels. Randomized smoothing in the graph domain typically involves two types of noise:
\begin{itemize}
\item \textit{Edge addition} (with probability $p_+$), which introduces spurious edges that can increase graph density and heterophily, making it harder for the model to distinguish meaningful patterns.
\item \textit{Edge deletion} (with probability $p_-$), which might remove important edges, disrupting the original graph structure and reducing connectivity.
\end{itemize}
To address the side effects introduced by the randomization, we propose three augmentation strategies--JacAug, FAEAug, and SimAug, to de-noise the noisy graphs (Figure~~\ref{fig:aug_framework}). These strategies are:
\begin{itemize}
    \item \textbf{Efficient:} The augmentation pipeline minimizes redundant computations by precomputing reusable components (e.g., edge intensity matrix) and applying lightweight operations during inference. This design ensures scalability to large sampling sizes in randomized smoothing.
    \item \textbf{Noise-Adaptive:} The augmentation dynamically adjusts to noise parameters $p_+$ and $p_-$ by estimating the number of edge additions and deletions introduced by randomization, pruning, and recovering edges accordingly.
    \item \textbf{Inductive:} The augmentation methods learn transferable patterns from node features, ensuring effective generalization to unseen testing graphs.
\end{itemize}

\subsubsection{\textbf{Lightweight Jaccard‑Based Augmentation (JacAug)}} Inspired by the simple but effective defense approach proposed for GNNs, the GCNJaccard~\cite{wu2019adversarial}, we propose a simple graph rewiring augmentation. We set a threshold $\tau$ to prune the existing edge with $J_{u,v}\leq\tau$, and we add the edge if the Jaccard similarity exceeds a threshold $\xi$. Specifically, the Jaccard similarity (binary features) between node $u$ and $v$ is represented as follows:
\begin{equation}
   J_{u,v}= \frac{\mathbf{x}_u^T \mathbf{x}_v}{\mathbf{x}_u^T\mathbf{1}+\mathbf{x}_v^T\mathbf{1}-\mathbf{x}_u^T \mathbf{x}_v},
\end{equation}
where the $\mathbf{x}_u$ and $\mathbf{x}_v$ are the node features, and $\mathbf{1}$ denotes the all one vector, and $T$ denotes the matrix transpose.




\begin{figure}[!t]
    \centering
    \includegraphics[width=0.9\linewidth]{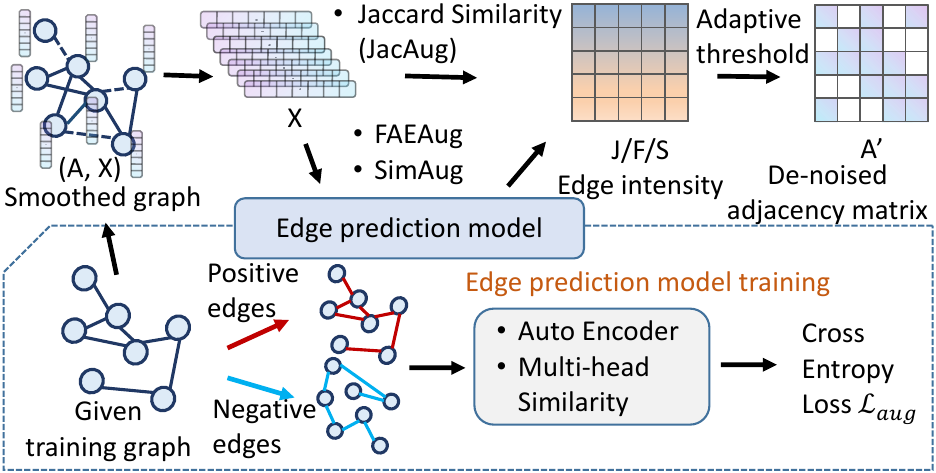}
    \vspace{-5pt}
    \caption{We propose three graph rewiring augmentations (JacAug, FAEAug, and SimAug). JacAug is a training-free augmenter, and the others are learnable augmenters that are trained to recover the original graph via edge prediction.}
    \label{fig:aug_framework}
    \vspace{-8pt}
\end{figure}

\subsubsection{\textbf{Feature‑Autoencoder Augmentation (FAEAug)}} Auto-encoder is widely used for graph augmentation~\cite{zhao2021data}. Nevertheless, when the smoothing applies a high level of noise that randomly adds edges ($p_+$), GNN-based models are ineffective because the graph becomes too dense and highly heterophilic. To tackle the challenge, we propose a simple graph augmentation that only relies on node features:
\begin{align}
\mathbf{F}=\sigma(\mathbf{Z}\mathbf{Z}^T), \quad \mathbf{Z}= \mathbf{W}_1(\text{ReLU}(\mathbf{W}_2(\mathbf{X}))),
\end{align}
where $\mathbf{W}_1$ and $\mathbf{W}_2$ are trainable parameters, and $\sigma$ is a sigmoid activation function. This augmentation model is suitable for the inductive learning setting. The parameters $\mathbf{W}_1$ and $\mathbf{W}_2$ trained on the training graph can be generalized to testing graphs with new nodes. Specifically, given a training subgraph, we can sample a set of positive edges $E_{pos}$ and negative edges $E_{neg}$. Then, we employ binary cross-entropy loss for the graph augmentation (edge prediction) model training:
\begin{align}
\label{eqn:loss_fae}
    \mathcal{L}_{aug}=&-\frac{1}{|E_{pos}\cup E_{neg}|}[\sum_{(u,v)\in E_{pos}}log(F_{u,v})
    \nonumber\\
    &+\sum_{(u,v)\in E_{neg}}log(1-F_{u,v})],
\end{align}
where $F_{u,v}$ is an element in $\mathbf{F}$ corresponding to the edge $(u,v)$, representing the probability of edge existence. 

\subsubsection{\textbf{Multi‑Head Similarity Augmentation (SimAug)}} Inspired by~\cite{zhang2020gnnguard,jiang2019semi,chen2020iterative}, we implement an effective graph augmentation model based on a multi-head similarity function:
\begin{align}
&S_{u,v}^q= cos(\mathbf{w}_q \circ \mathbf{x}_u,\mathbf{w}_q \circ \mathbf{x}_v), \, S_{u,v}=\frac{1}{m}\sum_{q=1}^m s_{u,v}^{q},
\end{align}
where $\mathbf{w}_q$ is a trainable parameter that weighted the node feature dimension, and $m$ is the number of heads. Compared to the Jaccard similarity, this multi-head similarity function is supposed to capture a more comprehensive pattern because different weights correspond to a different similarity function. 
This augmentation model is also suitable for inductive settings and follows a similar training procedure with the loss function in Eq.~\eqref{eqn:loss_fae} (substituting $F_{u,v}$ by $S_{u,v}$).


\subsubsection{\textbf{Efficient augmentation}}
To reduce computation workload and avoid repeated similarity or edge intensity computation during smoothing, we only need to pre-calculate $J_{u,v}/F_{u,v}/S_{u,v}$ for all potential edges and store it in an edge intensity matrix $\mathbf{I}:=\mathbf{J}\text{ or }\mathbf{F}\text{ or }\mathbf{S}$. During inference, graph rewiring is performed efficiently using the following formula:
\begin{equation}
\label{eqn:augment_filter}
    \mathbf{A}'= \mathbf{A} \circ (\mathbf{I}>\tau) +  (\mathbf{A}=0)\circ(\mathbf{I}>\xi),
\end{equation}
where $\circ$ denotes element-wise matrix multiplication, and $(\mathbf{I} > \tau)$ is a binary matrix where each element is 1 if $I_{u,v} > \tau$ and 0 otherwise. The term $\mathbf{A} \circ (\mathbf{I} > \tau)$ retains edges with $I_{u,v} > \tau$. Similarly, $(\mathbf{A} = 0)$ is the complement of the adjacency matrix, where each element is 1 if $A_{u,v} = 0$, and the term $(\mathbf{A} = 0) \circ (\mathbf{I} > \xi)$ adds edges with $I_{u,v} > \xi$.

\subsubsection{\textbf{Noise-adaptive thresholds}}
\label{sec:threshold}
The augmentation strategies require two thresholds, $\tau$ and $\xi$. We propose a simple and effective method to select noise-adaptive thresholds automatically. Specifically, given the noise parameters $p_+$ and $p_-$, we can estimate the number of edges that are randomly added or deleted in the smoothing randomization. Firstly, we calculate the edge sparsity in the training graph: $e_{ratio}:=\sum\mathbf{A}_{train}/|\mathbf{V}_{train}|^2$, where $\mathbf{A}_{train}$ is the adjacency matrix and $|\mathbf{V}_{train}|$ is the size of training graph. For the testing graph, the expected number of edges before perturbation is $E':=e_{ratio} \times |\mathbf{V}_{test}|^2$. 
The number of edges to be added and removed is then estimated as:
\begin{align}
    ADD &= \lfloor E' \times p_-\rfloor, \\
    DEL &= \lfloor (|\mathbf{V}_{test}|^2 - E') \times p_+\rfloor,
\end{align}
where $ADD$ corresponds to edges deleted during randomization, while $DEL$ represents spurious edges added. Threshold $\xi$ is set based on the top-$ADD$ values (in descending order), and threshold $\tau$ is set based on the top-$DEL$ values (in ascending order). This approach ensures that the thresholds adapt efficiently to the noise level.


\begin{figure*}[!ht]
\centering
\subfigure[Citeseer]{\includegraphics[width=0.160\textwidth,height=2.5cm]{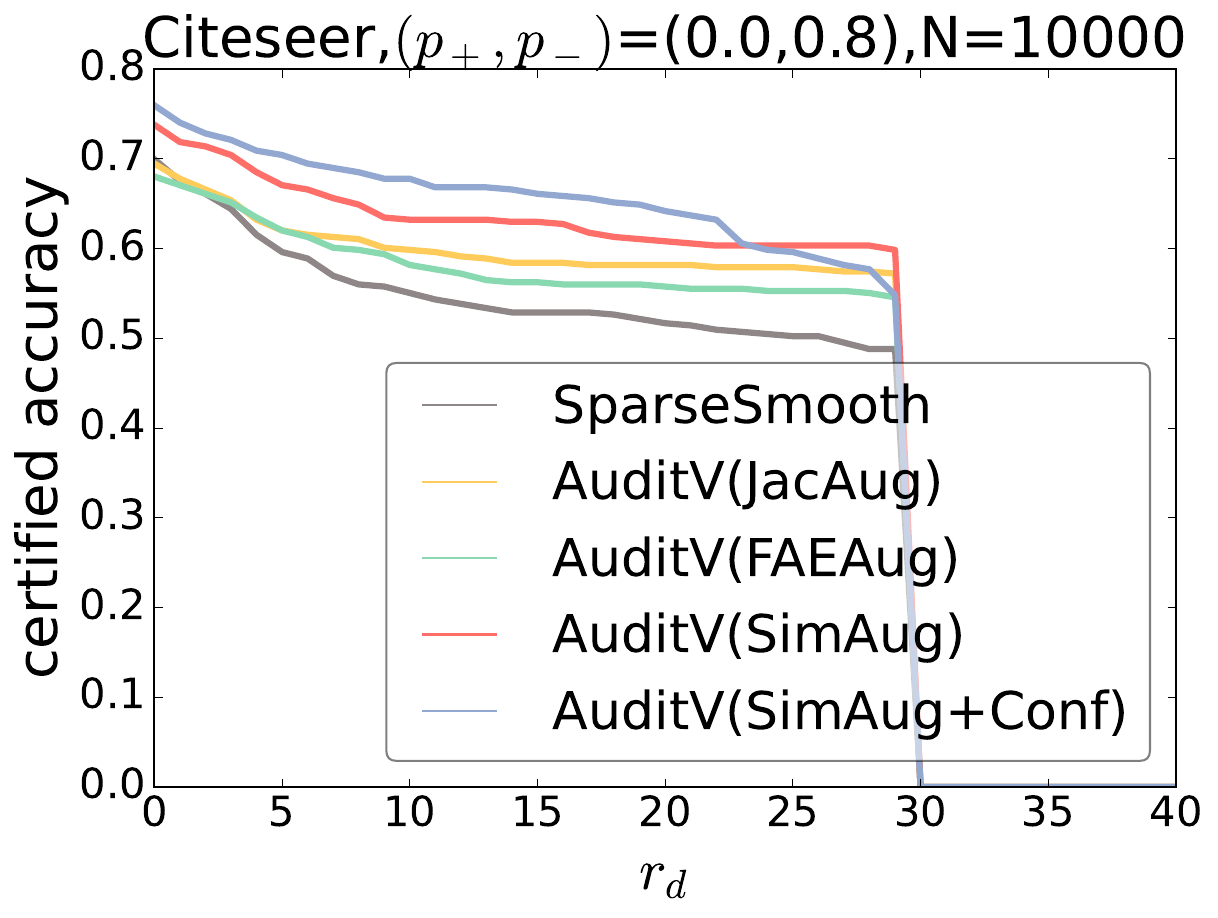}}
\subfigure[Citeseer]{\includegraphics[width=0.160\textwidth,height=2.5cm]{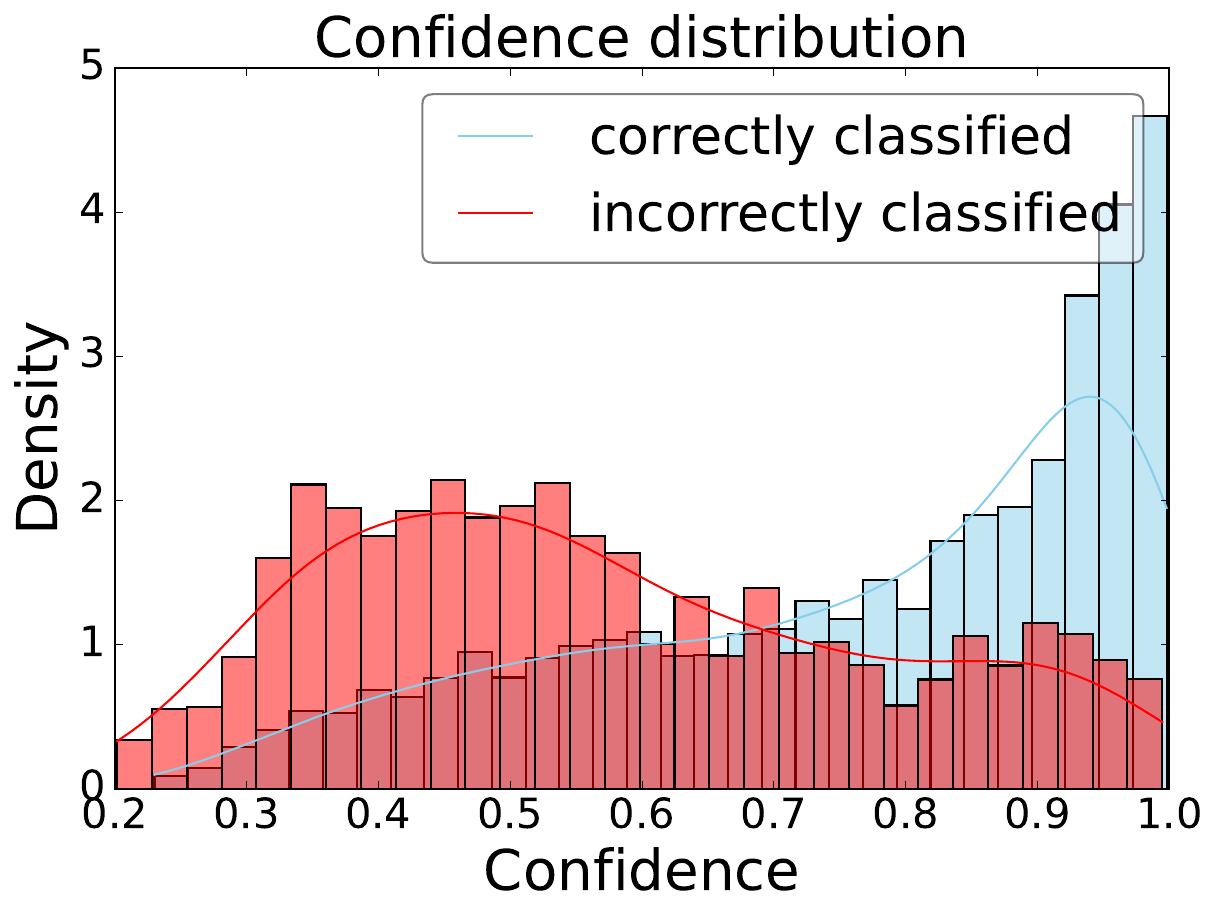}}
\subfigure[Citeseer]{\includegraphics[width=0.160\textwidth,height=2.5cm]{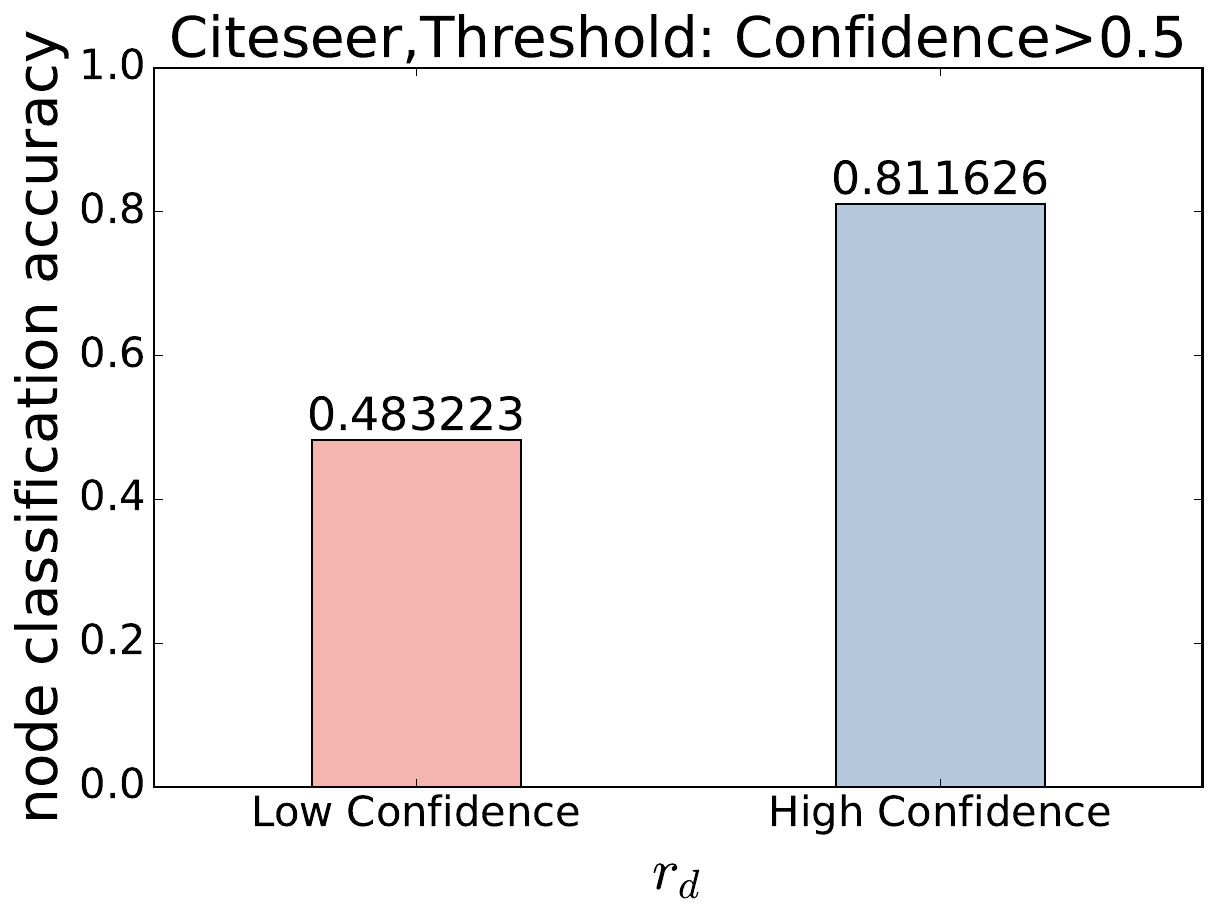}}
\subfigure[Cora-ML]{\includegraphics[width=0.160\textwidth,height=2.5cm]{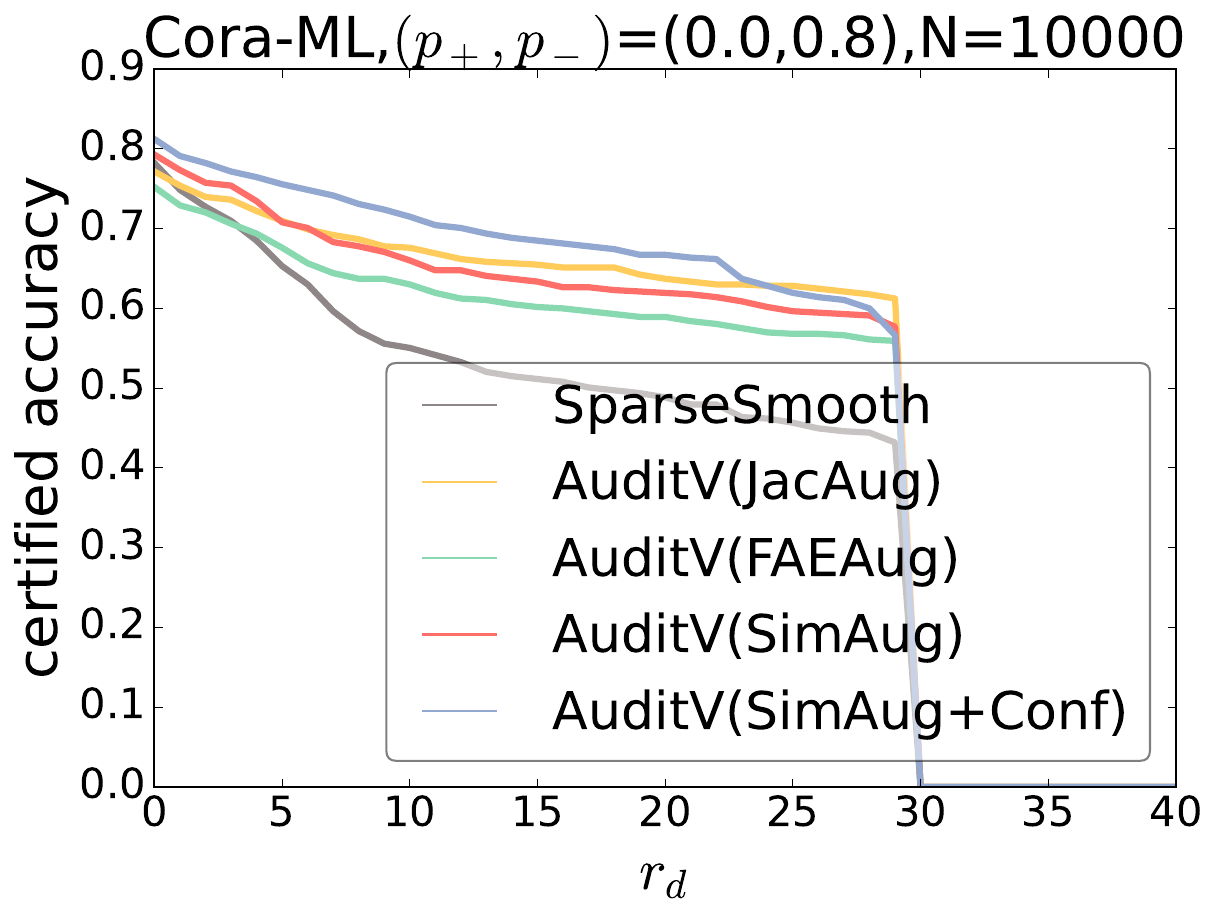}}
\subfigure[Cora-ML]{\includegraphics[width=0.160\textwidth,height=2.5cm]{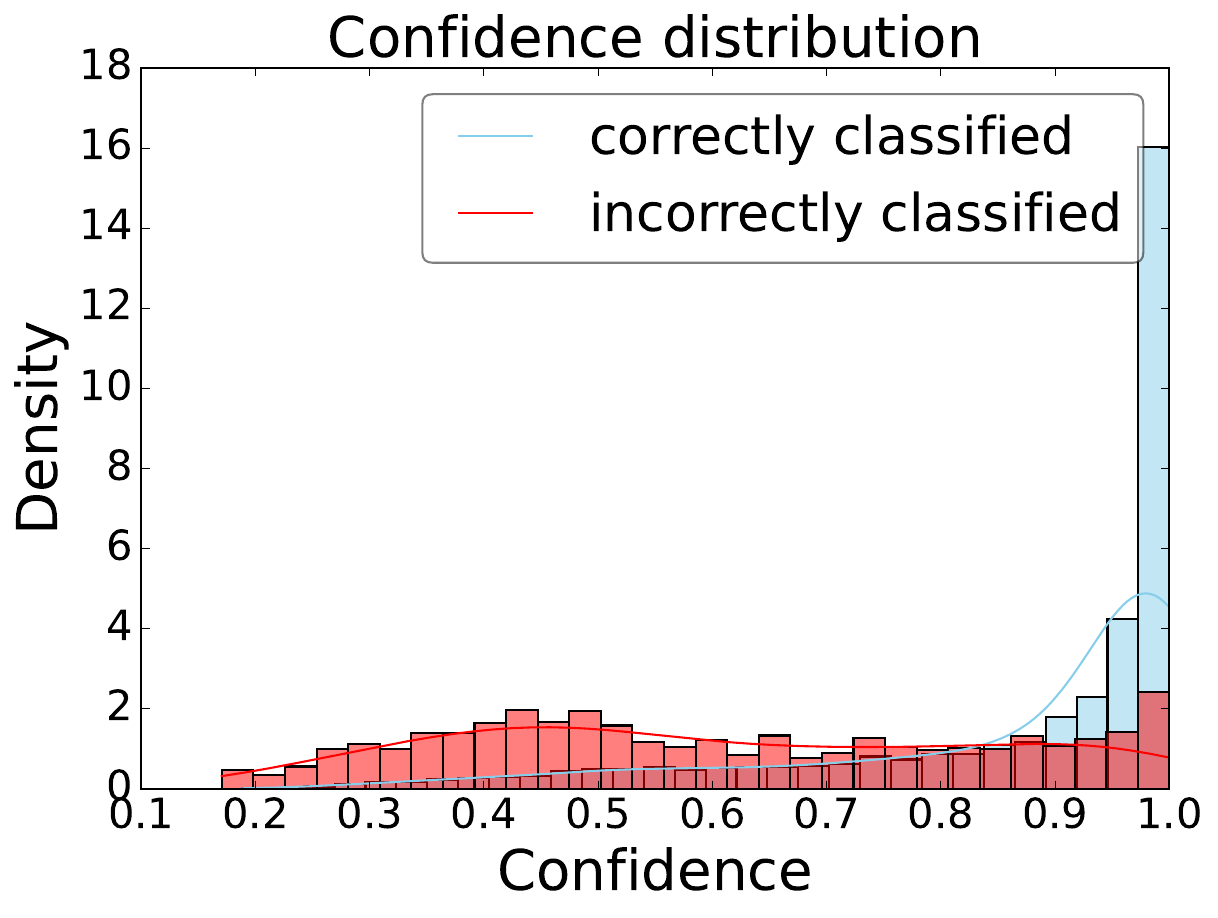}}
\subfigure[Cora-ML]{\includegraphics[width=0.160\textwidth,height=2.5cm]{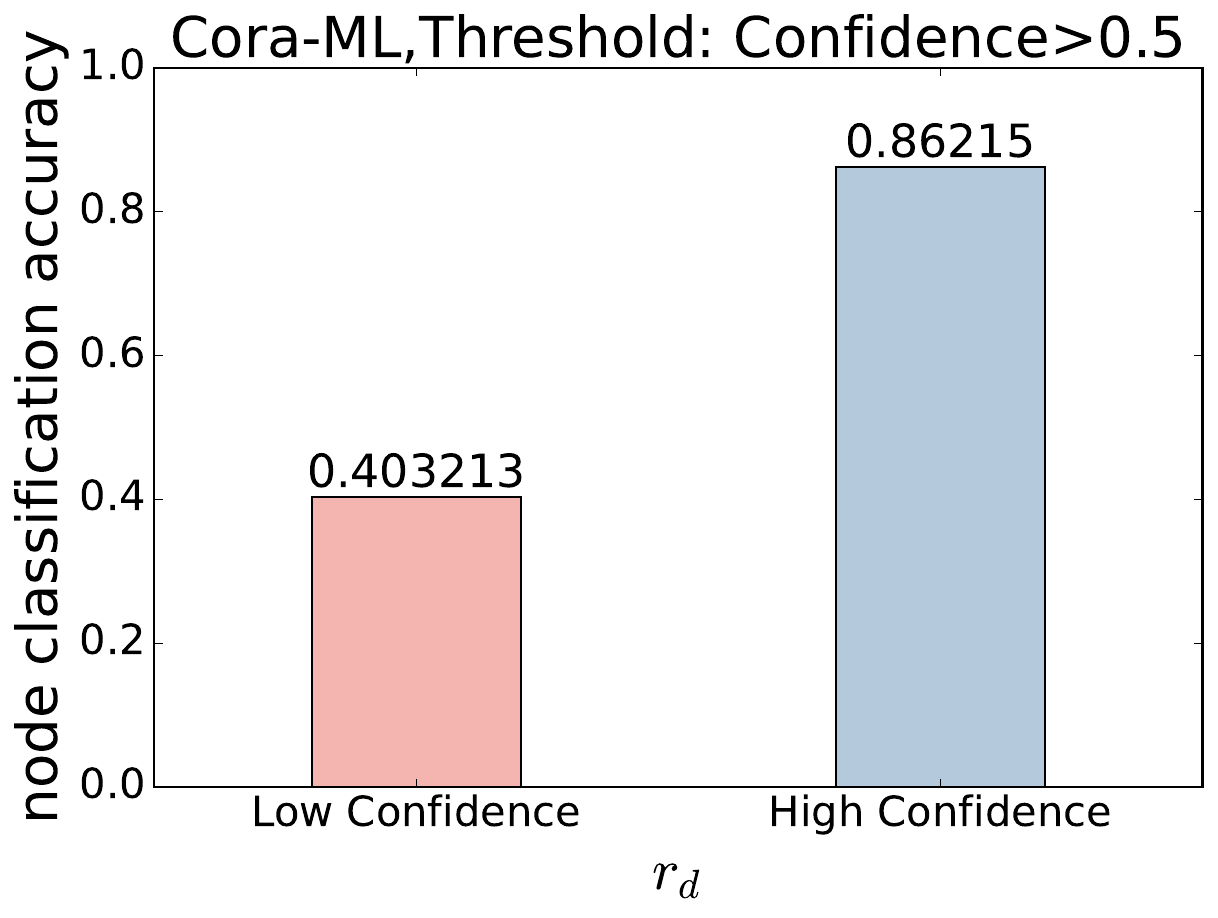}}
\subfigure[Citeseer]{\includegraphics[width=0.160\textwidth,height=2.5cm]{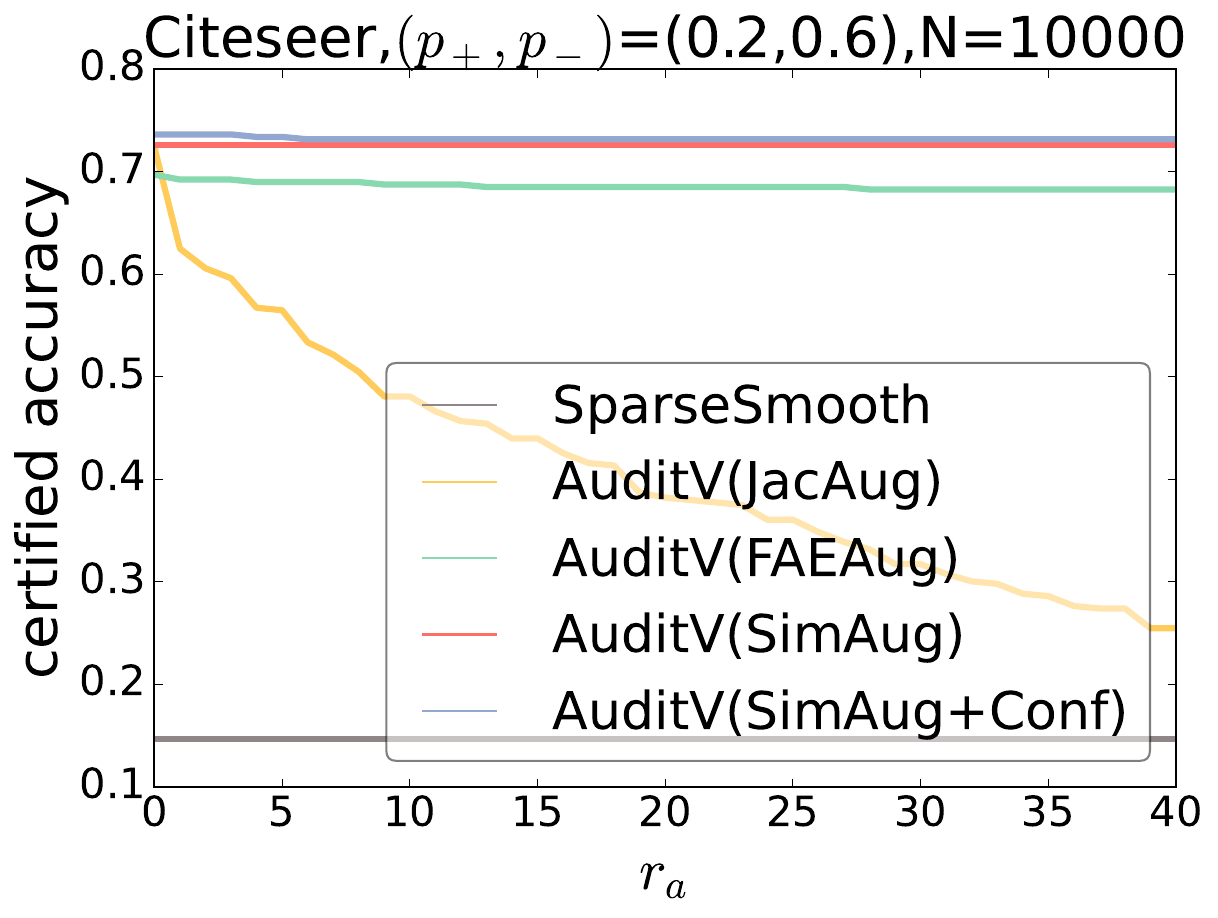}}
\subfigure[Citeseer]{\includegraphics[width=0.160\textwidth,height=2.5cm]{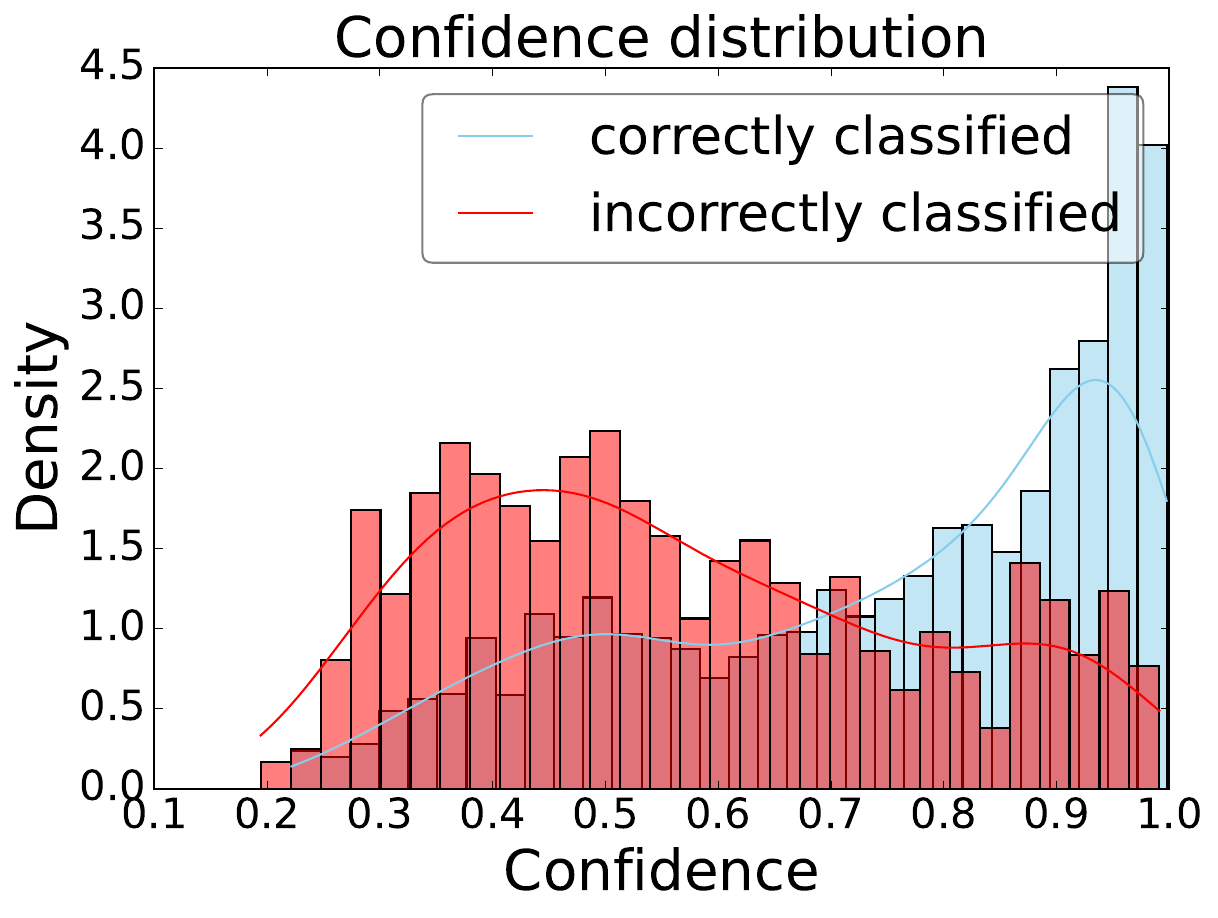}}
\subfigure[Citeseer]{\includegraphics[width=0.160\textwidth,height=2.5cm]{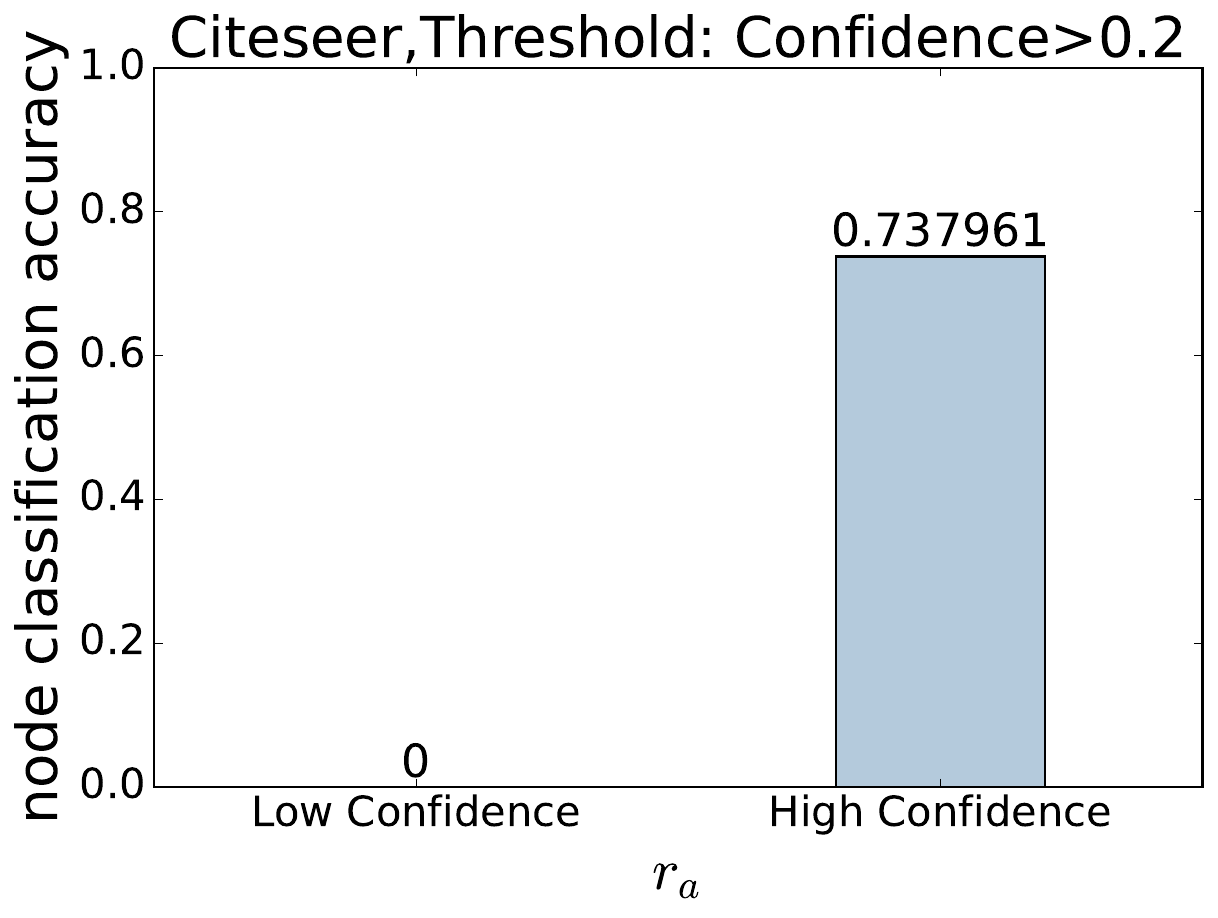}}
\subfigure[Cora-ML]{\includegraphics[width=0.160\textwidth,height=2.5cm]{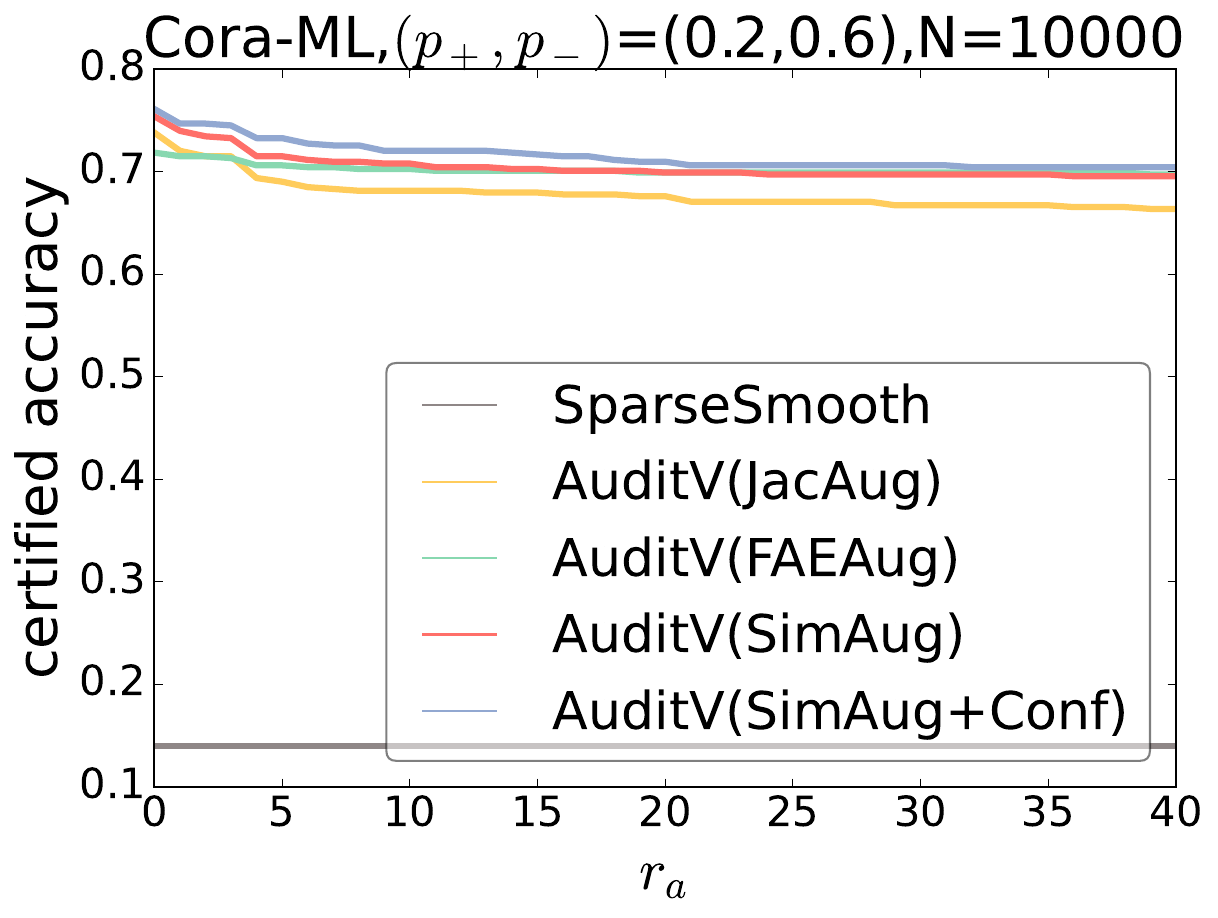}}
\subfigure[Cora-ML]{\includegraphics[width=0.160\textwidth,height=2.5cm]{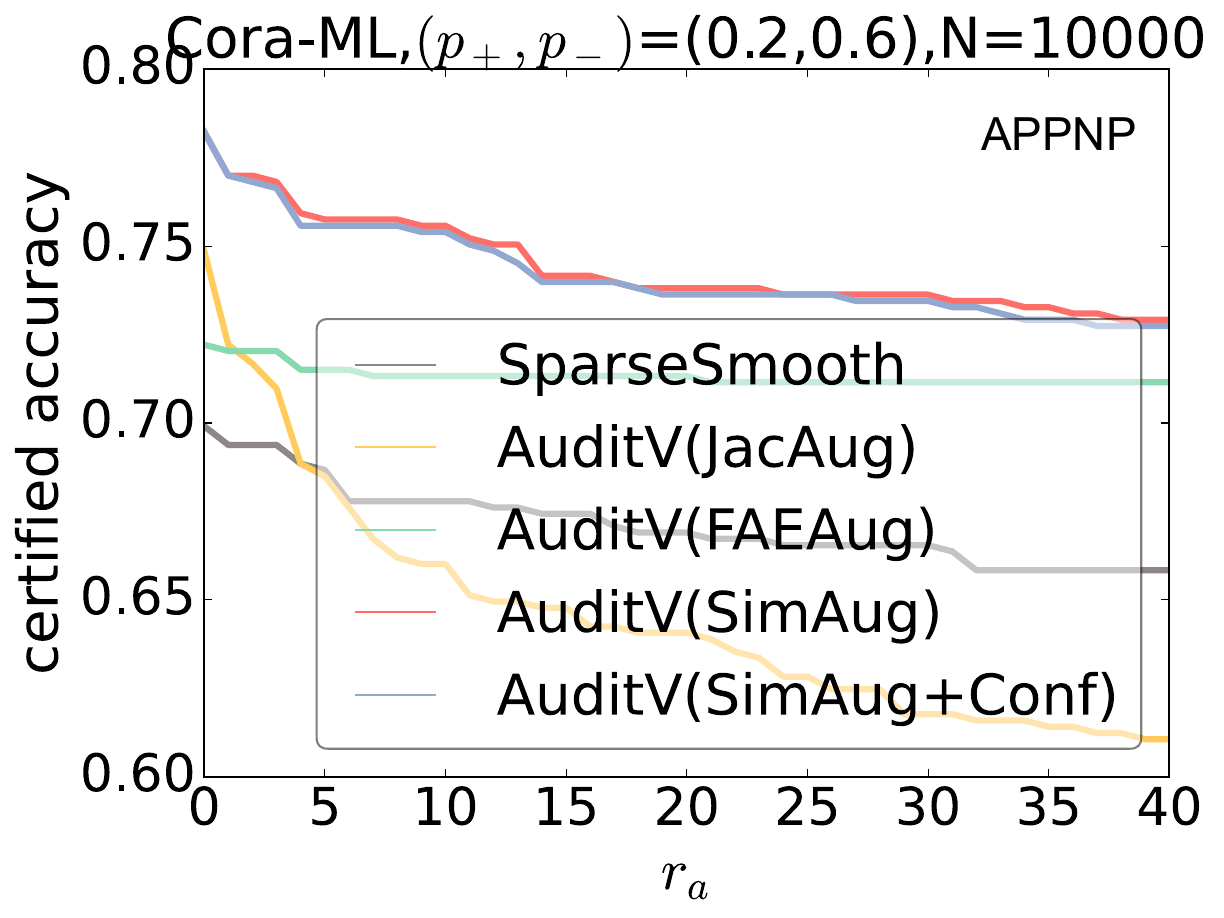}}
\subfigure[Cora-ML]{\includegraphics[width=0.160\textwidth,height=2.5cm]{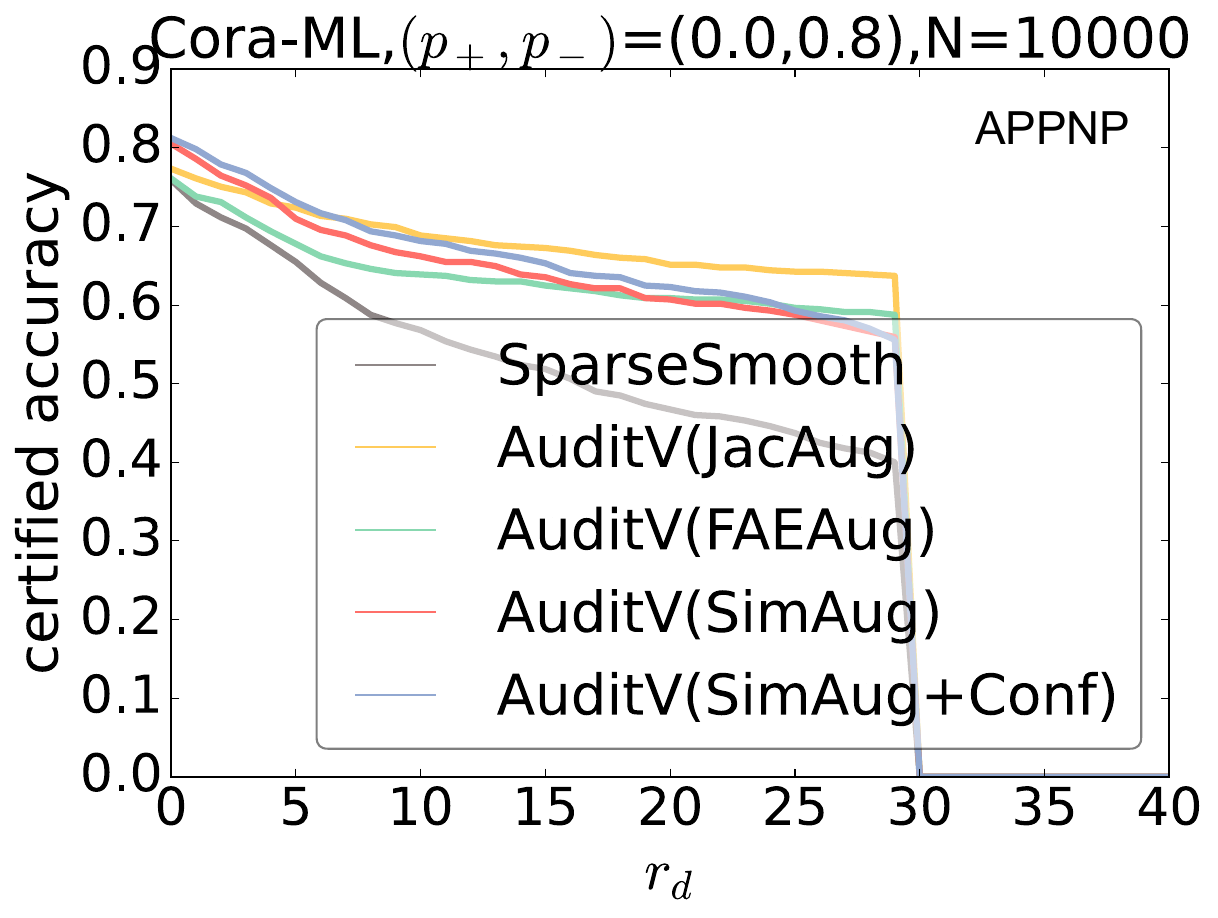}}
\vspace{-8pt}
\caption{Certified accuracy and confidence score distribution. The correctly classified nodes tend to have high confidence scores. By filtering the low-confidence predictions, conditional smoothing improves the certified accuracy.}
\label{fig:analyze_and_certify}
\end{figure*}

\subsection{Conditional Smoothing via Confidence Filtering}
\label{sec:filterfunc}


Building on the theoretical guarantee established in Sec.~\ref{sec:condition} that an arbitrary filtering function that takes only a randomized graph preserves certified robustness, we now present a simple yet highly effective instantiation of conditional smoothing using the prediction confidence of the base classifier. This approach improves certified accuracy with negligible computational overhead, effectively serving as a “free lunch” in the robustness–accuracy trade-off.

We observe that predictions made with high confidence are significantly more likely to be correct. Figure~\ref{fig:analyze_and_certify} (b,e,h) visualizes this phenomenon: correctly classified nodes concentrate in high-confidence regions, whereas errors predominantly occur in low-confidence regions. Leveraging this insight, we filter out low-confidence votes to increase the consistency of the smoothed classifier.


Formally, for each randomized graph $\phi(\mathcal{G})$
 the base classifier $f$ outputs a predicted class along with a confidence score $conf(\phi(\mathcal{G}))$. We define the filtering function as:
\begin{equation}
\label{eqn:h}
h(\phi(\mathcal{G}))=\left\{
\begin{aligned}
0 & \quad \text{if} \, \,conf(\phi(\mathcal{G}))>\theta,\\
1 & \quad \text{otherwise.} \\
\end{aligned}
\right.
\end{equation}
where $\theta$ is a tunable threshold. Only samples with confidence exceeding $\theta$ contribute to the final majority vote. Despite its simplicity, this confidence-based filtering delivers substantial gains in certified accuracy across all datasets, directly because it amplifies the signal from high-consistency predictions.

Notably, this method introduces almost no computational or training overhead. The confidence scores are already produced by the base classifier, and the filter requires only a scalar comparison per sample. This makes conditional smoothing via confidence filtering efficient, simple to deploy, and readily compatible with any base classifier, further broadening the practical applicability of AuditVotes.
We also discuss other potential filtering functions in Appendix~\ref{sec:other_filter}.

\section{Extending AuditVotes to Other Smoothing Schemes}
\label{sec:applicability}
The modular design of AuditVotes enables seamless adaptation to other certified smoothing frameworks beyond sparse randomized smoothing. In this section, we demonstrate its versatility by applying AuditVotes to (1) de‑randomized smoothing for graphs (GNNCert~\cite{xia2024gnncert}, AGNNCert~\cite{li2025agnncert}) and (2) Gaussian randomized smoothing ~\cite{cohen2019certified} for image classification. These extensions show that AuditVotes is not a specialized solution, but a general framework for enhancing both robustness and accuracy across domains and certification paradigms.


\subsection{Application to De‑Randomized Graph Smoothing }

De‑randomized smoothing offers a deterministic alternative to randomized methods for certifying GNNs. Representative frameworks like (A)GNNCert~\cite{xia2024gnncert,li2025agnncert} partition a graph’s edges into $T_s$ disjoint groups via a hash function, producing $T_s$ subgraphs. Predictions from a base classifier on each subgraph are aggregated via majority voting to produce a robust final prediction. However, this edge‑partitioning inherently sparsifies each subgraph, discarding structural information and often reducing both clean and certified accuracy. This problem also exists for node‑partitioning in \cite{li2025agnncert}.
 
The graph augmentation scheme in AuditVotes mitigates the loss of graph information caused by this division. Specifically, after dividing the graph into subgraphs and before classification, we apply augmentation to the subgraphs. Similar to the randomized smoothing framework, the guarantees provided by de-randomized smoothing are compatible with arbitrary base classifiers. Therefore, we can compose a new and composite base classifier as $f(\mathcal{A}(\cdot))$, where $\mathcal{A}(\cdot)$ represents graph augmentation. Then, the (A)GNNCert certificate remains directly applicable to the augmented model without modification.

\paragraph{Implementation with noise‑adaptive thresholds.} Unlike randomized smoothing, GNNCert does not randomly add edges. Instead, the edge partitioning resembles edge deletion in SparseSmooth~\cite{bojchevski2020efficient} with noise level controlled by $T_s$. Thus, we set the edge-pruning threshold $\tau = 0$ and focus on determining the edge-addition threshold $\xi$. Following the method in Section~\ref{sec:threshold}, we estimate the total number of edges in the original graph as $E' = e_{ratio} \times |\mathbf{V}_{test}|^2$. Given $T_s$ subgraph divisions, the expected number of edges to add is $ADD = \lfloor E' \times (1 - (1 / T_s))\rfloor$. Finally, we set the edge-addition threshold $\xi$ based on the top-$ADD$ values.

\subsection{Generalization to Image Classification via Gaussian Smoothing}

AuditVotes is not limited to graph-structured data; its conditional smoothing component naturally extends to continuous input domains such as images. We demonstrate this by applying AuditVotes to Gaussian randomized smoothing~\cite{cohen2019certified}, the most common certification approach for image classification.


\subsubsection{Gaussian smoothing for robust image classification} Given an image classifier $f(\cdot)$ that predicts the label of an image among $\mathcal{Y}=\{1,2,\cdots, C\}$, Gaussian smoothing~\cite{cohen2019certified} defines a smoothed classifier by adding isotropic Gaussian noise $\epsilon\sim \mathcal{N}(0,\sigma^2 I)$ to the input image:
\begin{align}
    g(x)=\argmax_{y\in \mathcal{Y}} \mathbb{P}(f(x+\epsilon)=y).
\end{align}
It was proved that the smoothed model is certifiably robust within $l_2$-norm perturbation $\{x':||x'-x||_2\leq R\}$ with radius $R=\frac{\sigma}{2}(\Phi^{-1}(\underline{p_A})-\Phi^{-1}(\overline{p_B}))$, where $\Phi^{-1}$ is the inverse of Gaussian cumulative distribution function, $\underline{p_A}$ is the lower bound of the top-class probability, and $\overline{p_B}$ is the upper bound of the runner-up class probability. 

\subsubsection{Applying conditional smoothing} We integrate the same confidence based filtering strategy from Section \ref{sec:filterfunc}. Specifically, we define the conditional smoothed classifier  as:
\begin{align}
\label{eqn:consmooth_image}
    g^c(x)=\argmax_{y\in \mathcal{Y}} \mathbb{P}(f(x+\epsilon)=y|h(x+\epsilon)=0),
\end{align}
where $\epsilon\sim \mathcal{N}(0,\sigma^2 I)$, and $h(\cdot)$ is the filtering function that takes only a randomized image as input and outputs either $0$ or $1$, where $0$ indicates that the prediction is included in the voting process and $1$ excludes it.
Notably, this framework involves Gaussian noise in continuous space while SparseSmooth~\cite{bojchevski2020efficient} employs discrete and binary noise distribution. 
Consequently, adapting the certification guarantee to this continuous, filtered setting requires a non-trivial extension of the original Gaussian certificate.
We resolve this by applying the Neyman-Pearson Lemma~\cite{Neyman1992} to avoid the direct estimation of $h(\cdot)$. Given an image $x$, let $y_A$ and $y_B$ denote the top predicted class and the runner-up class, respectively. Let $\underline{p_A}$ and $\overline{p_B}$ denote the lower bound of $\mathbb{P}(f(x+\epsilon)=y_A|h(x+\epsilon)=0)$ and the upper bound of $\mathbb{P}(f(x+\epsilon)=y_B|h(x+\epsilon)=0)$, respectively. Next, we establish the condition for the conditional smoothed classifier to be certifiably robust, which is a non-trivial adaptation from the existing certification method~\cite{cohen2019certified}. 
\begin{theorem}
\label{theorem-condition_image}
    Let the conditional smoothed classifier $g^c(\cdot)$ be as defined in Eq.~\eqref{eqn:consmooth_image}. $\forall x'\in \{x': ||x'-x||_2\leq R\}$, we have $g(x')=g(x)$, where the robust radius is defined as:   
    \begin{align}
    R=\frac{\sigma}{2}(\Phi^{-1}(\underline{p_A})-\Phi^{-1}(\overline{p_B})).
    \end{align}
\end{theorem}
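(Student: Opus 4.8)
The plan is to show that conditional smoothing is nothing more than ordinary Gaussian smoothing of an ``abstaining'' classifier, and then to invoke the Neyman--Pearson argument of \cite{cohen2019certified} verbatim. First I would fix notation by writing the keep region $K=\{z:h(z)=0\}$ and the decision regions $S_y=\{z:f(z)=y\}$, so that all quantities reduce to the two Gaussian measures $\mu_x=\mathcal{N}(x,\sigma^2 I)$ and $\mu_{x'}=\mathcal{N}(x',\sigma^2 I)$ evaluated on measurable sets. The decisive observation is that $g^c(x)=\argmax_{y}\mathbb{P}(f(x+\epsilon)=y,\,h(x+\epsilon)=0)\,/\,\mathbb{P}(h(x+\epsilon)=0)$, and since the denominator $\mathbb{P}(h(x+\epsilon)=0)=\mu_x(K)$ is a single positive scalar shared by every class, it cannot affect the $\argmax$. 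Hence $g^c(x)=\argmax_y \mu_x(S_y\cap K)$: the filter simply restricts each class region to $K$ and sends the rest to abstention, with the final comparison taken only over genuine labels.

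With this reduction, certifying $g^c(x')=g^c(x)=y_A$ amounts to proving $\mu_{x'}(A)>\mu_{x'}(B)$ for $A=S_{y_A}\cap K$, $B=S_{y_B}\cap K$, and every runner-up $y_B$, where the hypotheses provide $\mu_x(A)\ge \underline{p_A}$ and $\mu_x(B)\le \overline{p_B}$. I would then apply the Neyman--Pearson lemma for the pair $(\mu_x,\mu_{x'})$ exactly as in \cite{cohen2019certified}: because the likelihood ratio $d\mu_{x'}/d\mu_x(z)$ is strictly increasing in $\langle z-x,\,x'-x\rangle$, the worst-case sets consistent with the constraints are half-spaces orthogonal to $x'-x$. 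This gives $\mu_{x'}(A)\ge \Phi(\Phi^{-1}(\underline{p_A})-\|x'-x\|_2/\sigma)$ and $\mu_{x'}(B)\le \Phi(\Phi^{-1}(\overline{p_B})+\|x'-x\|_2/\sigma)$. Forcing the former to exceed the latter and using monotonicity of $\Phi$ yields $\|x'-x\|_2<\tfrac{\sigma}{2}(\Phi^{-1}(\underline{p_A})-\Phi^{-1}(\overline{p_B}))=R$, which is precisely the claimed radius.

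The step I expect to be the main obstacle is exactly the one the paper highlights as ``avoiding direct estimation of $h$'': ensuring the Neyman--Pearson argument never requires an explicit description of $K$. This goes through because the lemma treats $A$ and $B$ as opaque measurable sets and exploits only the monotone-likelihood-ratio geometry of the two Gaussians, so an arbitrary and possibly intractable filter $h$ is harmless. The delicate point I would be most careful about is that the probabilities fed into $\Phi^{-1}$ must be the joint (numerator) masses $\mu_x(S_y\cap K)=\mathbb{P}(f=y,\,h=0)$, to which the Neyman--Pearson mass-transport bounds apply directly, rather than the normalized conditional votes: the keep-rate denominators $\mu_x(K)$ and $\mu_{x'}(K)$ shift differently under the perturbation and do not transport like a single set. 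Since the $\argmax$ is invariant to the common denominator, certifying the ordering of the joint masses both suffices and is the quantity the lemma actually controls, and I would make $\underline{p_A},\overline{p_B}$ refer to these joint masses before invoking the bound.
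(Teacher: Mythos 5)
Your proposal follows the same overall route as the paper's proof---both reduce the claim to the Neyman--Pearson half-space argument of \cite{cohen2019certified}, treating the keep region $K=\{z:h(z)=0\}$ as an opaque measurable set so that $h$ never needs to be described explicitly---but it diverges on one substantive point, which is exactly the step you flag as delicate. The paper sets the half-space level at $x$ equal to the \emph{conditional} bound ($\mathbb{P}(X\in A)=\underline{p_A}$ with $\underline{p_A}\le\mathbb{P}(f(X)=y_A\mid h(X)=0)$), applies the lemma to the joint indicator $s(Z)=\mathbf{1}[f(Z)=y_A]\cdot\mathbf{1}[h(Z)=0]$, and divides by the common denominator $\mathbb{P}(h(Y)=0)$ only at the perturbed point. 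You instead certify the ordering of the unnormalized joint masses $\mu_{x'}(S_y\cap K)$ end to end, and you require $\underline{p_A},\overline{p_B}$ to bound the joint masses $\mathbb{P}(f(X)=y,\,h(X)=0)$. Your variant is the one for which the lemma's hypothesis actually holds: the premise $\mathbb{P}(s(X)=1)\ge\mathbb{P}(X\in A)$ concerns the joint mass, and the conditional bound only yields $\mathbb{P}(f(X)=y_A,\,h(X)=0)\ge\underline{p_A}\cdot\mathbb{P}(h(X)=0)$, which falls short of $\underline{p_A}$ whenever the filter rejects anything. The price is that what you prove is a corrected variant rather than the theorem as stated: with joint masses the radius is $\tfrac{\sigma}{2}(\Phi^{-1}(\underline{p_A}\,q)-\Phi^{-1}(\overline{p_B}\,q))$ for keep-rate $q=\mathbb{P}(h(X)=0)$, which shrinks as $q\to 0$, whereas the displayed formula is keep-rate independent. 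The difference is not cosmetic: in one dimension with $\sigma=1$, $x=0$, $K=(-\infty,-3]\cup[3.5,\infty)$, and $f\equiv y_A$ on the left component and $y_B$ on the right, the conditional probabilities at $x=0$ give a stated radius of about $1.05$, yet at $x'=1$ the conditional vote already flips to $y_B$. So keep your joint-mass formulation---it is the sound one---but state explicitly that the certified radius then differs from the displayed formula unless $\mathbb{P}(h(x+\epsilon)=0)=1$, or equivalently that recovering the stated formula requires an additional bound on the keep rate.
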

The proof of Theorem~\ref{theorem-condition_image} is provided in Appendix~\ref{AppendixA.2}. Theorem~\ref{theorem-condition_image} ensures that conditional smoothing provably preserves with any $h(\cdot)$ that takes only a randomized image as input. In practice, it enhances the certified accuracy of Gaussian-smoothed image classifiers when $h(\cdot)$ efficiently improves prediction consistency while filtering out only a small number of samples.

\section{Experimental Evaluations}
\label{sec:evaluation}
In this section, we comprehensively evaluate our proposed certifiably robust framework AuditVotes from five aspects: clean accuracy, certified accuracy, empirical accuracy, applicability, and efficiency.
In summary, we present our experimental results by answering the following research questions:

\begin{itemize}
    \item Q1 [\textbf{Certified Accuracy}]:  How effectively can AuditVotes improve certified accuracy? Why it works?
    \item Q2 [\textbf{Trade-off}]:  How effective is AuditVotes in mitigating the trade-off between certified accuracy (robustness) and clean accuracy?
    \item Q3 [\textbf{Empirical Accuracy}]:  How does AuditVotes perform in defending against empirical adversarial attacks? 
    \item Q4 [\textbf{Applicability}]:  How applicable are AuditVotes to other smoothing frameworks? 
    \item Q5 [\textbf{Efficiency}]:  How efficient is AuditVotes compared to vanilla randomized smoothing and other advanced models? Can AuditVotes scale to a large graph? 
\end{itemize}

\subsection{Experimental Setup} 
We describe the evaluation environment, including datasets, models, baselines, parameters, and attack settings.
\subsubsection{\textbf{Datasets}} 
We evaluate AuditVotes on several benchmark datasets, including graph datasets (Citeseer, Cora-ML, PubMed, Amazon2M~\cite{chiang2019cluster}), and image datasets (MNIST, CIFAR-10). For graph datasets, the statistics are shown in Table~\ref{tab:dataset1}. We adopt an inductive semi-supervised graph learning similar to~\cite{gosch2024adversarial}. Specifically, we sample $50$ nodes per class for labeled training nodes ($\mathbf{V}_{ltrain}$) and validation nodes ($\mathbf{V}_{val}$). Then, in each class, we sample $20\%$ of nodes to form testing nodes ($\mathbf{V}_{test}$) in the Citeseer and Cora-ML datasets. For the PubMed dataset, we sample $60\%$ of nodes as testing nodes. The remaining nodes are used as unlabeled training nodes ($\mathbf{V}_{utrain}$). The training graph $\mathcal{G}_{train}$ consists of the labeled training nodes and unlabeled training nodes ($\mathbf{V}_{ltrain}+\mathbf{V}_{utrain}$), the validation graph involved the nodes in $\mathcal{G}_{train}$ and validation nodes ($\mathbf{V}_{ltrain}+\mathbf{V}_{utrain}+\mathbf{V}_{val}$), and the testing graph $\mathcal{G}_{test}$ involves all the nodes ($\mathbf{V}_{ltrain}+\mathbf{V}_{utrain}+\mathbf{V}_{val}+\mathbf{V}_{test}$). For the Amazon2M dataset, we follow exactly the same setting as ~\cite{li2025agnncert}. 
\begin{table}[!h]
\centering
\caption{Statistics of graph datasets.}
\vspace{-8pt}
\begin{tabular}{lrrrrr}
\hline
Datasets       & Nodes & Edges & Classes & Dimension \\
\hline
Cora-ML      & 2,810   & 7,981   & 7        & 2,879      \\
Citeseer  & 2,110   & 3,757   & 6        & 3,703      \\
PubMed    & 19,717   & 44,338   & 3        & 500        \\
Amazon2M    & 2,449,029   & 61,859,140   & 47        & 100        \\
\hline
\end{tabular}
\label{tab:dataset1}
\end{table}

For image datasets (MNIST and CIFAR-10), we adopt the dataset configurations as detailed in Table~\ref{tab:dataset2}.
\begin{table}[!ht]
\centering
\caption{Statistics of image datasets.}
\vspace{-8pt}
\begin{tabular}{lrrrrr}
\hline
Datasets       & Training set & Testing set & Classes & Dimension \\
\hline
MNIST        & 60k   & 10k   & 10      & $28\times28\times1$      \\
CIFAR-10      & 50k   & 10k  &10      &  $32\times32\times3$    \\
\hline
\end{tabular}
\label{tab:dataset2}
\vspace{-8pt}
\end{table}

\subsubsection{\textbf{Models and Baselines}}

\paragraph{\textbf{Node Classification Task}}For node classification, we use SparseSmooth~\cite{bojchevski2020efficient} and GNNCert~\cite{xia2024gnncert} (or AGNNCert~\cite{li2025agnncert}) as the baselines and compare with various AuditVotes configurations applied to them. The evaluated models include:

\begin{itemize}
    \item SparseSmooth~\cite{bojchevski2020efficient}: Sparsity-aware randomized smoothing model for graph.
    \item GNNCert~\cite{xia2024gnncert} (or AGNNCert~\cite{li2025agnncert}): De-randomized smoothing model (based on graph division) for graph. AGNNCert~\cite{li2025agnncert} extends GNNCert~\cite{li2025agnncert} to defend against more kinds of attacks, such as node deletion/insertion. 
    \item AuditVotes: Our AuditVotes serve as a general plug-in, enhancing an existing smoothing scheme (SparseSmooth~\cite{bojchevski2020efficient} or (A)GNNCert~\cite{xia2024gnncert,li2025agnncert}) with specified components: augmentation and conditional filtering. For example, AuditVotes (SimAug+Conf) denotes using AuditVotes to enhance the existing method with \texttt{SimAug} augmentation and \texttt{Conf} filter. Unless otherwise specified, we apply AuditVotes to \cite{bojchevski2020efficient}. We also present a cross-comparison between \cite{bojchevski2020efficient}+AuditVotes and \cite{xia2024gnncert}+AuditVotes in Table~\ref{tab:cross_ra} and ~\ref{tab:cross_rd} (Appendix~\ref{sec:appenx_cross}).
\end{itemize}
By default, we use Graph Convolutional Network (GCN)~\cite{kipf2016semi} as the base classifier. Following~\cite{bojchevski2020efficient}, \textit{we also evaluate APPNP~\cite{gasteiger2019appnp} as the base classifier} (presented in Figure~\ref{fig:analyze_and_certify} (k,l), and Appendix~\ref{appendix:more_results}). 

\paragraph{\textbf{Image Classification Task}} For the image classification task, we apply our proposed conditional smoothing (\textbf{Conf}) to Gaussian~\cite{cohen2019certified} and compare to four baselines: Gaussian~\cite{cohen2019certified}, Stability~\cite{li2019certified}, CAT-RS~\cite{jeong2023confidence}, and 
Diffusion~\cite{carlini2023certified} (Detailed in Appendix~\ref{appendix:baselines}). 

\paragraph{\textbf{Adversarial Attack Defense}} Moreover, the smoothed models can also serve as empirical robust models defending against actual adversarial attacks. We also evaluate the effectiveness of our proposed AuditVotes as an empirical robust model under Nettack~\cite{zugner2018adversarial} and IG-attack~\cite{wu2019adversarial} (widely used structure attacks for graph data). We compare our models with regular (non-smoothed) robust GNNs: GCN~\cite{kipf2016semi}, GAT~\cite{veličković2018graph}, MedianGCN~\cite{chen2021understanding}, and AirGNN~\cite{liu2021graph} (Detailed in Appendix~\ref{appendix:baselines}).

\subsubsection{\textbf{Model settings and hyper-parameters}} 

For the node classifications, we employ a 2-layer GCN with the hidden layer dimension of size $128$. We use learning rate $lr=0.001$, regularization coefficient $\lambda=1\times 10^{-3}$ for training. Early stop with patience $100$ epochs and maximum $1000$ epochs is employed to control the training epochs. 
For the CIFAR-10 dataset, we employ ResNet-110~\cite{he2016deep} model with learning rate $lr=0.01$, regularization coefficient $\lambda=1\times 10^{-4}$, batch size $256$, and $150$ training epochs. 
For the MNIST dataset, we employ LeNet~\cite{lecun1998gradient} model with learning rate $lr=0.01$, Adam optimizer, regularization coefficient $\lambda=1\times 10^{-4}$, batch size $256$, and $150$ training epochs.    
For all the randomized smoothing models, we set $N=10,000$ and $\alpha=0.001$ for Monte Carlo probability approximation in~\cite{bojchevski2020efficient,cohen2019certified}. In node classification, we search the noise levels among $p_+=\{0.0,0.1\}$, $p_-=\{0.4, 0.6,0.7,0.8\}$ for certifying edge deletion, and $p_+=\{0.2,0.4\}$, $p_-=\{0.0, 0.2,0.4,0.6\}$ for certifying edge addition. For GNNCert~\cite{xia2024gnncert}, we employ the MD5 hash function to divide the edges with $T_s=15$ for Citeseer and $T_s=20$ for Cora-ML because the Citeseer contains fewer edges. For AGNNCert~\cite{li2025agnncert}, we set $T=100$ for the Amazon2M dataset, and we follow the same model setting as in \cite{li2025agnncert}. 
We set the noise level $\sigma=0.25$ for image classification tasks. 
For augmentation training, we sample $90\%$ of existing edges in the training subgraph as the positive edges, and sample non-edges in a quantity $10$ times that of the positive edges as negative edges. Then, we train the augmentation models with a learning rate $lr=0.001$ using Adam optimizer for $250$ epochs.
In the conditional smoothing model, we set the confidence threshold as $\theta=0.5$ for certifying edge deletion, $\theta=0.2$ for certifying edge addition, and $\theta=0.9$ for certifying image classification.
In the graph structure attacks (Nettack~\cite{zugner2018adversarial} and IG-attack~\cite{wu2019adversarial}), we select $30$ target nodes following~\cite{zugner2018adversarial} and evaluate the accuracy among these target nodes. We set the attacker budget as $\{1,2,3,4,5\}$ edges per target node. 
For all the baselines, we employed the recommended parameters in their papers.
\textbf{More implementation details are presented in Appendix~\ref{Sec:implement_detail}}.



\subsection{Improving the Certified Accuracy (Q1)}
In this section, we answer question Q1 by demonstrating the more advanced certified accuracy contributes to the AuditVotes. We evaluate its performance in defending against edge deletion perturbation (Table~\ref{tab:certify_rd} and  Figure~\ref{fig:analyze_and_certify} (a,d)) and edge addition perturbation (Table~\ref{tab:certify_ra} and Figure~\ref{fig:analyze_and_certify} (g,j)). 
To analyze the contribution of individual components in AuditVotes, we conduct ablation studies by applying augmentation and the confidence filter separately. Finally, we present the overall impact of the full AuditVotes framework. 

\begin{table}[!ht]
\centering
\caption{Certified accuracy ($p_+=0.0$, $p_-=0.8$). 
We apply AuditVotes to SparseSmooth, and \texttt{J/F/S} stands for the augmentation scheme JacAug/FAEAug/SimAug.}
\vspace{-8pt}
\setlength{\tabcolsep}{1.0pt}
\begin{tabular}{ccccccc}
\toprule[0.9pt]
\multirow{2}{*}{Datasets} & \multirow{2}{*}{\begin{tabular}[c]{@{}c@{}}Model + \\ augmentor\end{tabular}} & \multirow{2}{*}{\begin{tabular}[c]{@{}c@{}}Conditional \\ smoothing\end{tabular}} & \multicolumn{4}{c}{Certified accuracy   ($r_d$)} \\ \cline{4-7} 
 &  &  & Clean & 5 & 10 & 20 \\ \toprule[0.9pt]
\multirow{8}{*}{Citeseer} & \multirow{2}{*}{SparseSmooth\cite{bojchevski2020efficient}} & \cellcolor{gray!10}None & \cellcolor{gray!10}0.700 & \cellcolor{gray!10}0.596 & \cellcolor{gray!10}0.550 & \cellcolor{gray!10}0.517 \\
 &  & +Conf & \textbf{0.748} & {\ul 0.683} & {\ul 0.656} & {\ul 0.613} \\ \cline{2-7} 
 & \multirow{2}{*}{+AuditVotes (J)} & None & 0.695 & 0.620 & 0.599 & 0.582 \\
 &  & +Conf & 0.663 & 0.630 & 0.623 & 0.589 \\ \cline{2-7} 
 & \multirow{2}{*}{+AuditVotes (F)} & None & 0.680 & 0.620 & 0.582 & 0.558 \\
 &  & +Conf & 0.675 & 0.644 & 0.630 & 0.589 \\ \cline{2-7} 
 & \multirow{2}{*}{+AuditVotes (S)} & None & {\ul 0.738} & 0.671 & 0.632 & 0.608 \\
 &  & +Conf & 0.733 & \textbf{0.702} & \textbf{0.683} & \textbf{0.659} \\ \hline
\multirow{8}{*}{Cora-ML} & \multirow{2}{*}{SparseSmooth\cite{bojchevski2020efficient}} & \cellcolor{gray!10}None & \cellcolor{gray!10}0.781 & \cellcolor{gray!10}0.651 & \cellcolor{gray!10}0.549 & \cellcolor{gray!10}0.488 \\
 &  & +Conf & {\ul 0.804} & 0.745 & 0.665 & 0.573 \\ \cline{2-7} 
 & \multirow{2}{*}{+AuditVotes (J)} & None & 0.772 & 0.710 & 0.676 & 0.637 \\
 &  & +Conf & 0.781 & \textbf{0.749} & \textbf{0.735} & \textbf{0.690} \\ \cline{2-7} 
 & \multirow{2}{*}{+AuditVotes (F)} & None & 0.758 & 0.671 & 0.623 & 0.593 \\
 &  & +Conf & 0.770 & 0.726 & 0.687 & 0.653 \\ \cline{2-7} 
 & \multirow{2}{*}{+AuditVotes (S)} & None & 0.793 & 0.704 & 0.660 & 0.614 \\
 &  & +Conf & \textbf{0.811} & {\ul 0.745} & {\ul 0.712} & {\ul 0.667} \\ \hline
\multirow{8}{*}{PubMed} & \multirow{2}{*}{SparseSmooth\cite{bojchevski2020efficient}} & \cellcolor{gray!10}None & \cellcolor{gray!10}0.788 & \cellcolor{gray!10}0.712 & \cellcolor{gray!10}0.656 & \cellcolor{gray!10}0.587 \\
 &  & +Conf & 0.794 & 0.737 & 0.686 & 0.622 \\ \cline{2-7} 
 & \multirow{2}{*}{+AuditVotes (J)} & None & 0.789 & 0.723 & 0.680 & 0.626 \\
 &  & +Conf & {\ul 0.795} & 0.746 & {\ul 0.711} & 0.657 \\ \cline{2-7} 
 & \multirow{2}{*}{+AuditVotes (F)} & None & 0.790 & 0.739 & 0.700 & 0.655 \\
 &  & +Conf & 0.789 & 0.745 & 0.709 & \textbf{0.669} \\ \cline{2-7} 
 & \multirow{2}{*}{+AuditVotes (S)} & None & \textbf{0.800} & \textbf{0.786} & \textbf{0.761} & {\ul 0.662} \\
 &  & +Conf & \textbf{0.800} & {\ul 0.783} & \textbf{0.761} & \textbf{0.669} \\ \bottomrule[0.9pt]
\end{tabular}
\label{tab:certify_rd}
\end{table}

\begin{table}[!ht]
\centering
\caption{Certified accuracy ($p_+=0.2$, $p_-=0.6$).}
\vspace{-8pt}
\setlength{\tabcolsep}{0.5pt}
\begin{tabular}{ccccccc}
\toprule[0.9pt]
\multirow{2}{*}{Datasets} & \multirow{2}{*}{\begin{tabular}[c]{@{}c@{}}Model +\\  augmentor\end{tabular}} & \multirow{2}{*}{\begin{tabular}[c]{@{}c@{}}Conditional \\ smoothing\end{tabular}} & \multicolumn{4}{c}{Certified accuracy   ($r_a$)} \\ \cline{4-7}
 &  &  & \multicolumn{1}{c}{Clean} & \multicolumn{1}{c}{5} & \multicolumn{1}{c}{10} & \multicolumn{1}{c}{20} \\ \toprule[0.9pt]
\multirow{8}{*}{Citeseer} & \multirow{2}{*}{SparseSmooth\cite{bojchevski2020efficient}} & \cellcolor{gray!10}None & \cellcolor{gray!10}0.147 & \cellcolor{gray!10}0.147 & \cellcolor{gray!10}0.147 & \cellcolor{gray!10}0.147 \\
 &  & +Conf & 0.147 & 0.147 & 0.147 & 0.147 \\ \cline{2-7} 
 & \multirow{2}{*}{+AuditVotes (J)} & None & 0.704 & 0.596 & 0.522 & 0.486 \\
 &  & +Conf & 0.724 & 0.560 & 0.486 & 0.389 \\ \cline{2-7} 
 & \multirow{2}{*}{+AuditVotes (F)} & None & 0.690 & 0.683 & 0.680 & 0.675 \\
 &  & +Conf & 0.697 & 0.690 & 0.688 & 0.685 \\ \cline{2-7} 
 & \multirow{2}{*}{+AuditVotes (S)} & None & {\ul 0.709} & {\ul 0.709} & {\ul 0.709} & {\ul 0.709} \\
 &  & +Conf & \textbf{0.726} & \textbf{0.726} & \textbf{0.726} & \textbf{0.726} \\ \hline
\multirow{8}{*}{Cora-ML} & \multirow{2}{*}{SparseSmooth\cite{bojchevski2020efficient}} & \cellcolor{gray!10}None & \cellcolor{gray!10}0.140 & \cellcolor{gray!10}0.140 & \cellcolor{gray!10}0.140 & \cellcolor{gray!10}0.140 \\
 &  & +Conf & 0.140 & 0.140 & 0.140 & 0.140 \\ \cline{2-7} 
 & \multirow{2}{*}{+AuditVotes (J)} & None & 0.738 & 0.690 & 0.681 & 0.674 \\
 &  & +Conf & 0.738 & 0.688 & 0.680 & 0.674 \\ \cline{2-7} 
 & \multirow{2}{*}{+AuditVotes (F)} & None & 0.719 & 0.706 & 0.703 & 0.701 \\
 &  & +Conf & 0.720 & 0.706 & 0.703 & 0.699 \\ \cline{2-7} 
 & \multirow{2}{*}{+AuditVotes (S)} & None & {\ul 0.750} & \textbf{0.727} & {\ul 0.719} & {\ul 0.712} \\
 &  & +Conf & \textbf{0.752} & {\ul 0.726} & \textbf{0.720} & \textbf{0.713} \\ \hline
\multirow{8}{*}{PubMed} & \multirow{2}{*}{SparseSmooth\cite{bojchevski2020efficient}} & \cellcolor{gray!10}None & \cellcolor{gray!10}OOM & \cellcolor{gray!10}OOM & \cellcolor{gray!10}OOM & \cellcolor{gray!10}OOM \\
 &  & +Conf & \multicolumn{1}{r}{OOM} & \multicolumn{1}{r}{OOM} & \multicolumn{1}{r}{OOM} & \multicolumn{1}{r}{OOM} \\ \cline{2-7} 
 & \multirow{2}{*}{+AuditVotes (J)} & None & 0.766 & 0.745 & 0.739 & 0.732 \\
 &  & +Conf & 0.766 & 0.745 & 0.739 & 0.732 \\ \cline{2-7} 
 & \multirow{2}{*}{+AuditVotes (F)} & None & {\ul 0.775} & {\ul 0.767} & {\ul 0.765} & {\ul 0.764} \\
 &  & +Conf & {\ul 0.775} & {\ul 0.767} & {\ul 0.765} & {\ul 0.764} \\ \cline{2-7} 
 & \multirow{2}{*}{+AuditVotes (S)} & None & \textbf{0.800} & \textbf{0.800} & \textbf{0.800} & \textbf{0.800} \\
 &  & +Conf & \textbf{0.800} & \textbf{0.800} & \textbf{0.800} & \textbf{0.800} \\ \bottomrule[0.9pt]
\end{tabular}
\label{tab:certify_ra}
\end{table}

\begin{table}[!h]
\centering
\caption{Statistics of smoothed graphs ($p_+=0.2$, $p_-=0.6$) without/with augmentation.}
\vspace{-8pt}
\setlength{\tabcolsep}{1.5pt}
\begin{tabular}{llrr}
\toprule[0.9pt]
\multicolumn{1}{c}{Datasets} & \multicolumn{1}{c}{Graphs} & \multicolumn{1}{c}{\begin{tabular}[c]{@{}c@{}}Reconstruction AUC\end{tabular}} & \multicolumn{1}{c}{\begin{tabular}[c]{@{}c@{}} Homophily\end{tabular}} \\ \toprule[0.9pt]
\multirow{5}{*}{Citeseer} & Original graph & 1.000 & 0.803 \\
 & Smoothed graph & None & 0.191 \\
 & +JacAug & 0.899 & 0.821 \\
 & +FAEAug & {\ul 0.916} & {\ul 0.872} \\
 & +SimAug & \textbf{0.920} & \textbf{0.895} \\ \hline
\multirow{5}{*}{Cora-ML} & Original graph & 1.000 & 0.848 \\
 & Smoothed graph & None & 0.172 \\
 & +JacAug & 0.817 & 0.792 \\
 & +FAEAug & \textbf{0.921} & {\ul 0.866} \\
 & +SimAug & {\ul 0.836} & \textbf{0.924} \\ \toprule[0.9pt]
\end{tabular}
\label{tab:Stats_aug}
\vspace{-8pt}
\end{table}


\subsubsection{\textbf{Impact of Augmentations}}
Firstly, we begin by evaluating the impact of augmentations on certified accuracy by comparing models without augmentations (SparseSmooth) and those with augmentations (JacAug, FAEAug, and SimAug). 
When certifying edge deletion ($r_d$), all three augmentations improve certified accuracy across all datasets (Table~\ref{tab:certify_rd}). When the attacker deletes 20 edges ($r_d=20$) on the Citeseer dataset, the certified accuracy improves from $51.7\%$ (SparseSmooth) to $58.2\%$ (+JacAug), $55.8\%$ (+FAEAug), and $60.8\%$ (+SimAug). Similar improvements are observed across the Cora-ML and PubMed datasets. 

Moreover, more significant improvements can be observed when certifying $r_a$ with a high noise level of $p_+$ (Table~\ref{tab:certify_ra} and Figure~\ref{fig:analyze_and_certify}).
Notably, the AuditVotes (SimAug) achieves at least $70.0\%$ certified accuracy when $r_a=20$. More specifically, SimAug improves the certified accuracy by $382.3\%$ and $408.6\%$ in the Citeseer and Cora-ML datasets. 
Note that this parameter is essential for certifying against realistic attacks that tend to add edges (Figure~\ref{fig:edge_change}). 
The certified accuracy with vanilla GCN drops below $15.0\%$ due to its low model accuracy. The noise $p_+$ breaks the pattern of the graph, such as the homophily, and the GCN corrupts. Furthermore, the computation memory requirement increases sharply due to the graph density increasing and causes out-of-memory (OOM), especially in the PubMed dataset. In contrast, our augmentations adaptively adjust the graph's sparsity (Figure~\ref{fig:e_sparsity}), significantly mitigating these issues. As shown in Table~\ref{tab:Stats_aug}, our augmentations restore the smoothed graph to the original graph with high reconstruction AUC over $0.80$. Moreover, the augmentations not only recover the homophily but also enhance the homophily. These results underscore the effectiveness of our proposed augmentations in pre-processing the smoothed graphs.

\subsubsection{\textbf{Impact of Confidence Filter}}
Next, we investigate the impact of the conditional smoothing approach by applying the confidence filter (Conf) to SparseSmooth, AuditVotes (JacAug), and AuditVotes (FAEAug) models. By leveraging our confidence filter, we achieve a notable enhancement in certified accuracy, especially in the case of certifying edge deletion. For all four models and in all three datasets, we observe a positive effect of the confidence filter in raising the certified accuracy (Table~\ref{tab:certify_rd}). On the Citeseer dataset, the Conf improves the certified accuracy of SparseSmooth from $51.7\%$ to $61.3\%$. In Figure~\ref{fig:pA_distribution}, we visualize the distribution of $p_A$ in the vanilla smoothing (without Conf) and conditional smoothing with confidence filter (Conf). Higher $p_A$ indicates a higher prediction consistency. We observe that the confidence filter improves the prediction consistency significantly, and this is the main reason for the improvement in certified accuracy. 

For edge addition robustness ($r_a$), the confidence filter also yields slight improvements (Table~\ref{tab:certify_ra}). This is because the prediction consistency is already high under this noise level, and it is hard to be further improved.
Importantly, as shown in Section~\ref{sec:runtime}, these benefits come with minimal additional computational overhead, making the confidence filter an efficient and practical enhancement.

\begin{figure}[!ht]
\centering
\subfigure[Citeseer]{\includegraphics[width=0.235\textwidth,height=2.95cm]{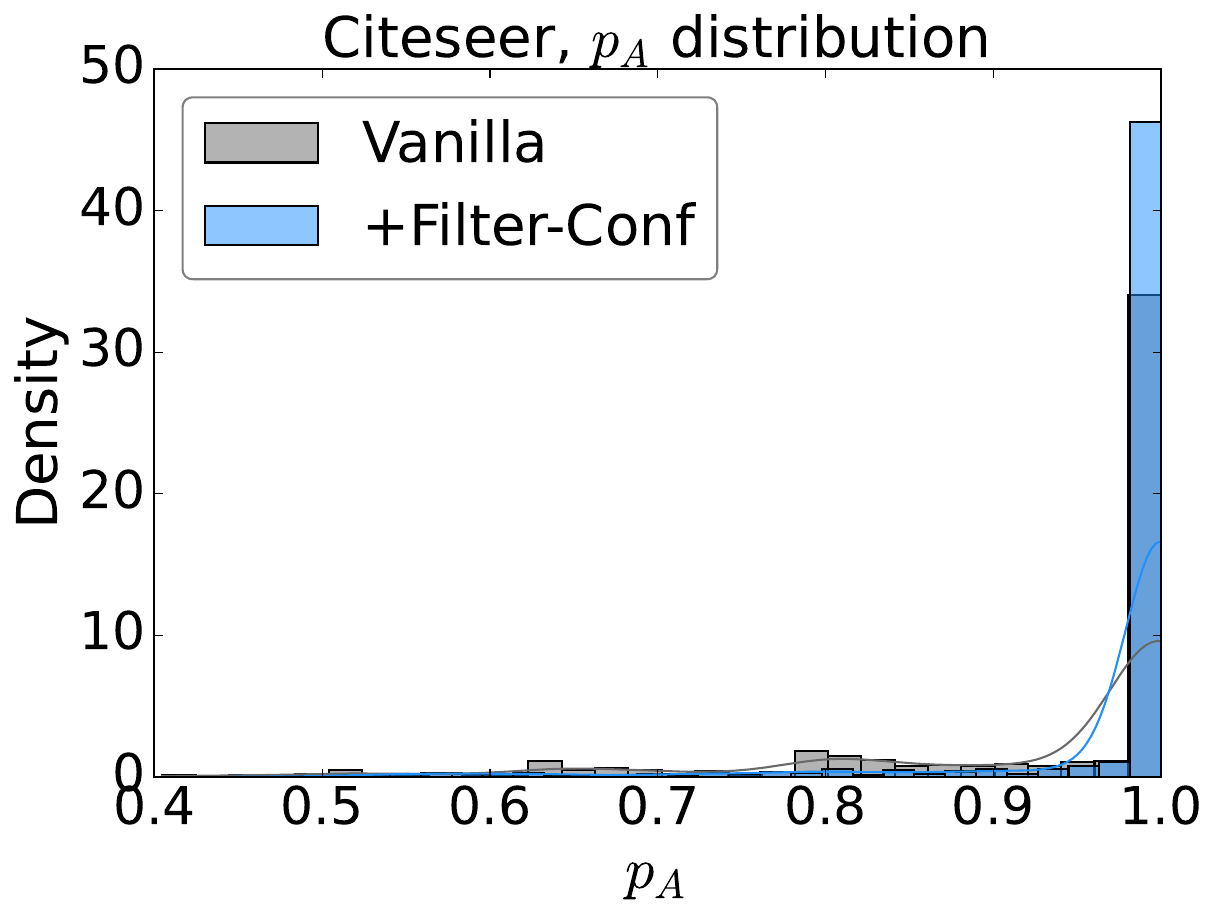}}
\subfigure[Cora-ML]{\includegraphics[width=0.235\textwidth,height=2.95cm]{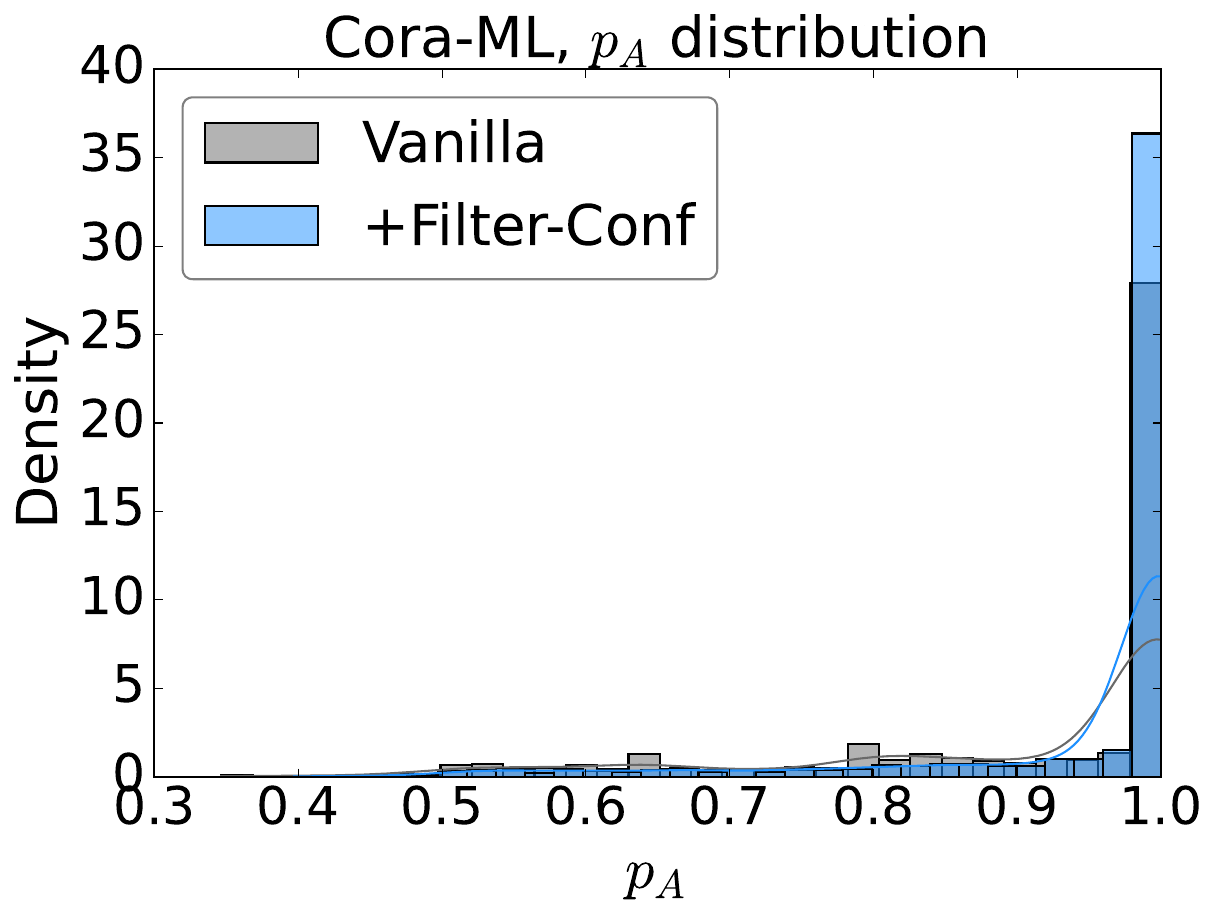}}
\vspace{-8pt}
\caption{Prediction consistency w/wo confidence filter.}
\label{fig:pA_distribution}
\vspace{-8pt}
\end{figure}

\subsubsection{\textbf{Overall Impact of AuditVotes}}
When combining both augmentation and the confidence filter, the full AuditVotes framework achieves the most significant improvement in certified accuracy. 
For edge deletion robustness ($r_d=20$), AuditVotes with SimAug and Conf achieves certified accuracy of $65.9\%$ (Citeseer), $66.7\%$ (Cora-ML), and $66.9\%$ (PubMed), representing improvements of $27.5\%$, $36.7\%$, and $14.0\%$, respectively, compared to SparseSmooth (Table~\ref{tab:certify_rd}).
Moreover, the advantage of AuditVotes is even more significant edge addition robustness (Table~\ref{tab:certify_ra}). When $r_a=20$, AuditVotes (SimAug+Conf) achieves $72.6\%$ (Citeseer), $71.3\%$ (Cora-ML), and $80.0\%$ (PubMed), while the baseline is not working well (with accuracy less than $15.0\%$ or OOM errors). Compared to SparseSmooth, AuditVotes improves certified accuracy by $393.9\%$ and $409.3\%$ on the Citeseer and Cora-ML datasets, respectively. These results highlight the effectiveness of AuditVotes, which integrates augmentation and the confidence filter to achieve the most substantial improvements in certified accuracy.



\subsection{Improving the Trade-off (Q2)}
\begin{figure}[!htb]
\centering
\subfigure[Cora-ML ($r_a$)]{\includegraphics[width=0.185\textwidth,height=2.65cm]{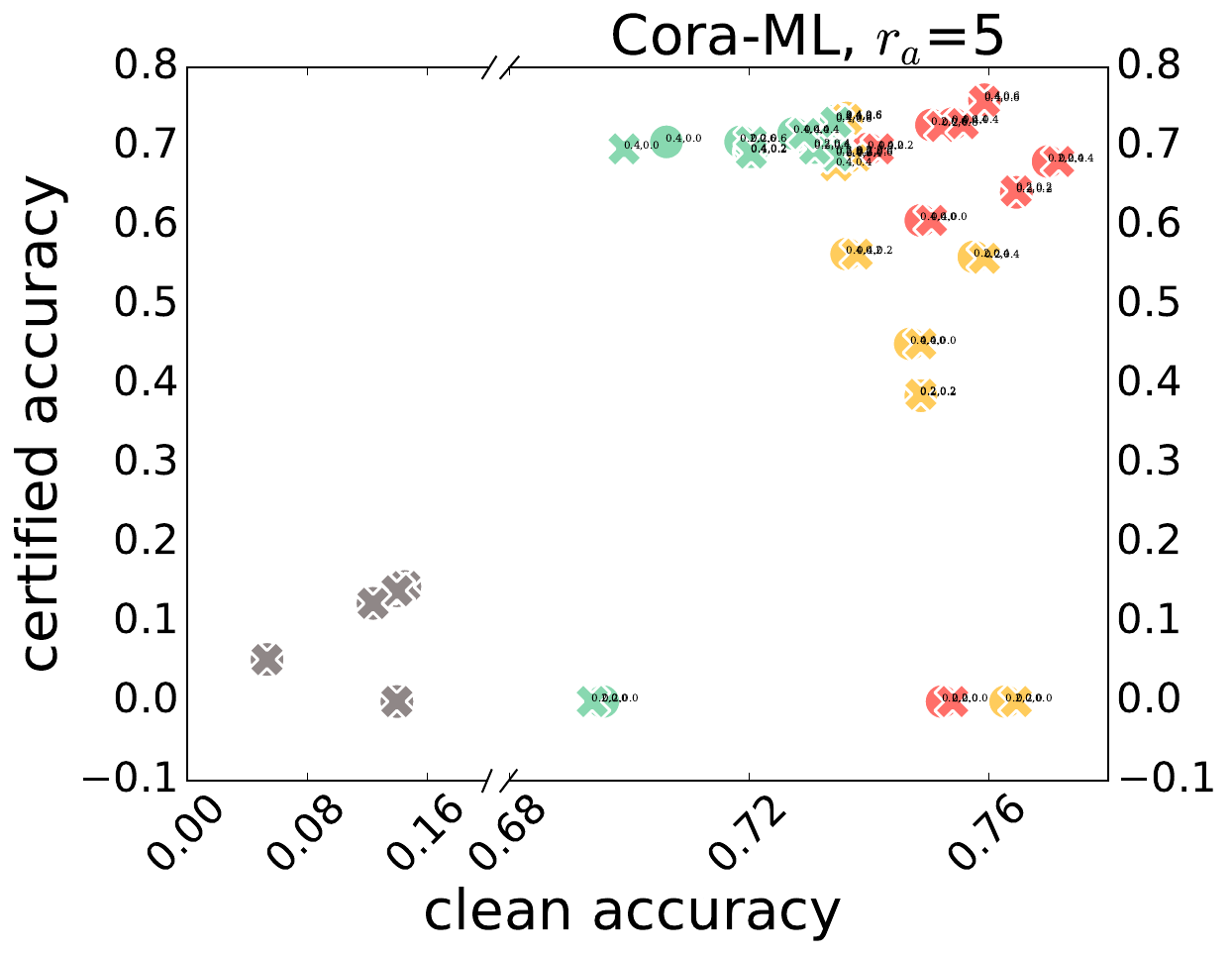}}
\subfigure[Citeseer ($r_a$)]{\includegraphics[width=0.285\textwidth,height=2.65cm]{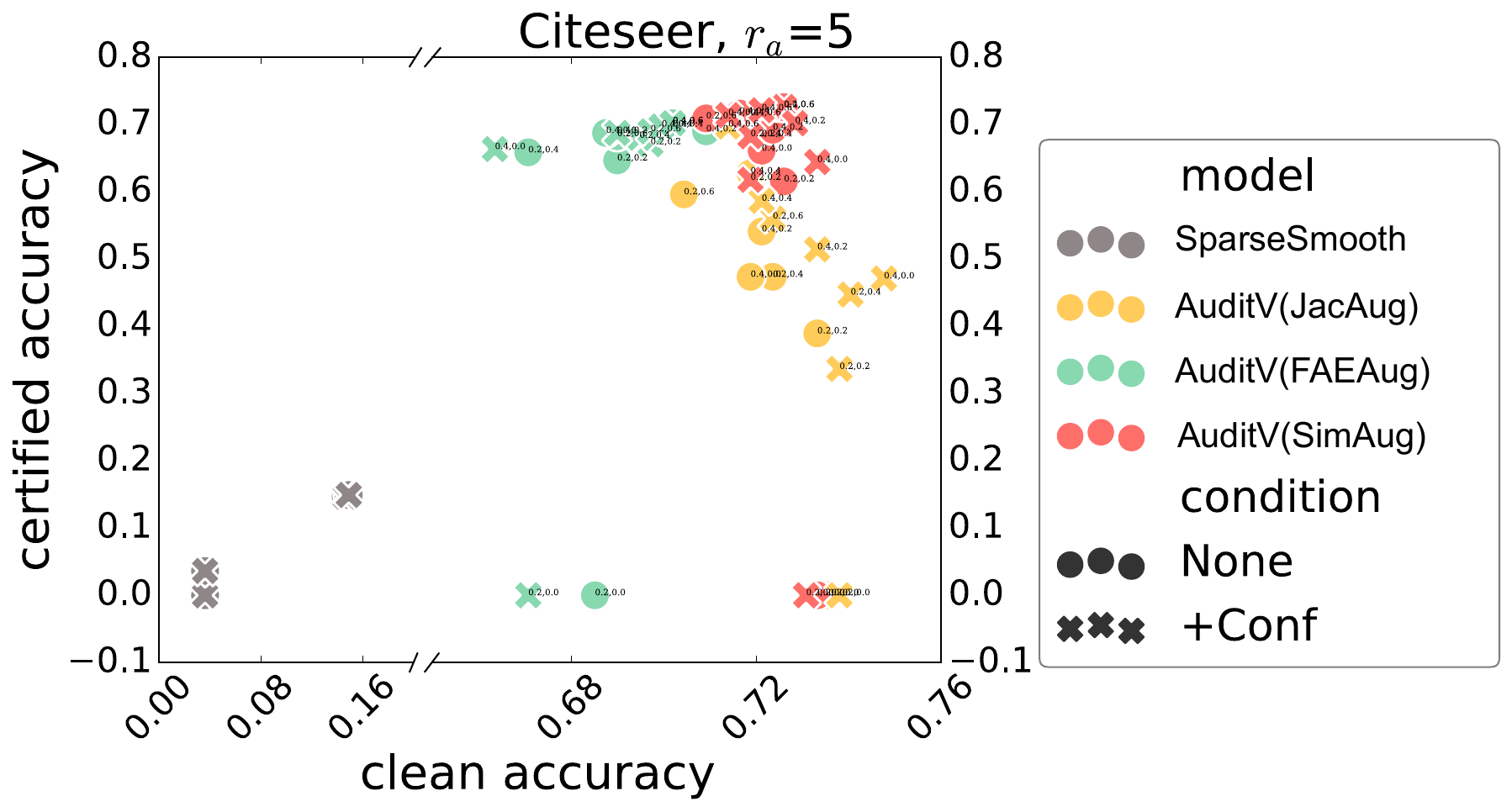}}
\caption{Clean accuracy and certified accuracy trade-off.}
\label{fig:Trade-off_ra}
\end{figure}

Higher noise levels ($p_+$ and $p_-$) in the smoothing distribution generally result in higher certified accuracy, assuming other conditions remain constant. However, increased noise often reduces clean accuracy, thereby diminishing the model's utility. 
Next, we address Q2 by demonstrating how AuditVotes effectively enhances the accuracy-robustness trade-off.

We evaluate clean accuracy and certified accuracy across various noise levels of $p_+$ (Figure~\ref{fig:Trade-off_ra}) and $p_-$ (Figure~\ref{fig:Trade-off_rd} in Appendix). A data point indicates a better trade-off when it is closer to the upper-right corner of the plot, representing both high clean accuracy and high certified accuracy. At identical noise levels, AuditVotes consistently achieves superior clean accuracy and certified accuracy.

When certifying edge addition robustness (Figure~\ref{fig:Trade-off_ra}), our model AuditVotes-SimAug+Conf (red cross) demonstrates the best trade-off on both the Cora-ML and Citeseer datasets. Among the other augmentations, JacAug (yellow) shows better clean accuracy in most cases, whereas FAEAug (green) achieves higher certified accuracy.
When certifying edge deletion robustness (Figure~\ref{fig:Trade-off_rd}), 
all AuditVotes variants consistently outperform SparseSmooth (with vanilla GCN as base classifier, represented by gray dots). This advantage is observed across all three datasets. 
These results highlight the robustness and versatility of AuditVotes in improving the trade-off between accuracy and robustness.


\subsection{More Advanced Empirical Accuracy (Q3)}
It is worth noting that randomized smoothing models can also be used as empirical defense modeling~\cite{lai2024node,weber2023rab}. In this section, we evaluate AuditVotes as a general empirical defense tool defending against actual edge modification evasion attacks. In Table~\ref{tab:empirical_nettack} and Table~\ref{tab:empirical_ig} (in Appendix~\ref{appendix:more_results}), we test the empirical robustness under Nettack and IG-attack with various attack power (i.e., the number of edge modification budgets per target node). The results show that AuditVotes achieves much higher robustness compared to regular robust GNN models. 
For example, under an attack power of 5 edges, the accuracy of AuditVotes remains nearly unchanged, whereas the accuracy of regular models declines sharply. On the 
Citeseer dataset, smoothed GCN with SimAug achieves an accuracy of $83.3\%$ while the accuracy of AirGNN is $18.0\%$ on target nodes. 
Furthermore, AuditVotes maintains clean accuracy comparable to regular models across all the datasets. In contrast, the accuracy of the baseline model is much lower, making it unable to serve as an applicable defense model. For instance, the clean accuracy of the SparseSmooth is $16.7\%$ on Citeseer, which is much worse than the Multilayer Perceptron (MLP) model. These results highlight the ability of AuditVotes to significantly improve both clean and empirical robust accuracy, effectively ``rescuing" smoothed GNNs for practical use as robust defense models.

\subsection{Wide Applicability of AuditVotes (Q4)}
In this section, we demonstrate the generality and wide applicability of AuditVotes. Specifically, we extend AuditVotes to other certifying schemes, including de-randomized smoothing frameworks for GNNs~\cite{xia2024gnncert} and randomized smoothing models for image classification tasks using Gaussian noise~\cite{cohen2019certified}.

\begin{figure}[!htb]
\centering
\subfigure[Citeseer ($r_a/r_d$)]{\includegraphics[width=0.235\textwidth,height=2.95cm]{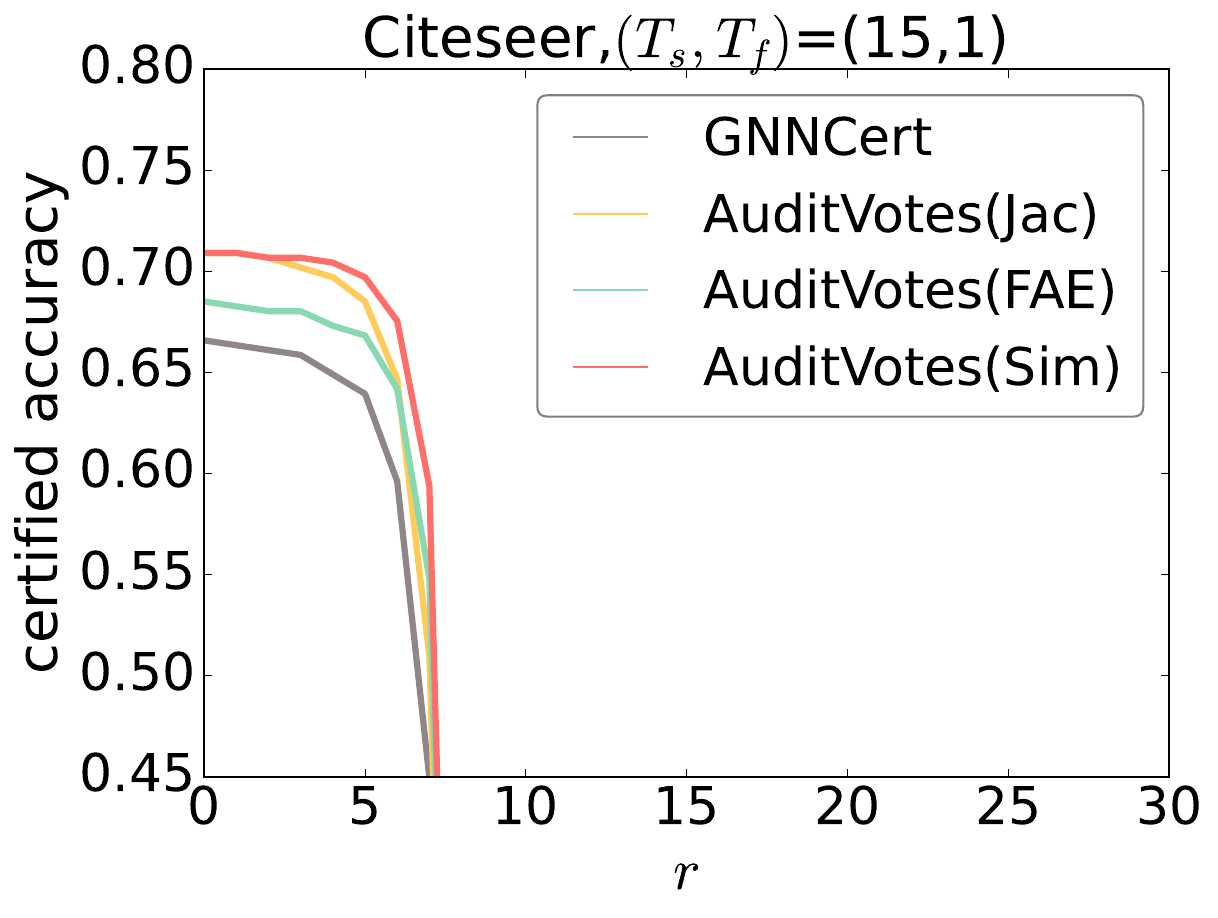}}
\subfigure[Cora-ML ($r_a/r_d$)]{\includegraphics[width=0.235\textwidth,height=2.95cm]{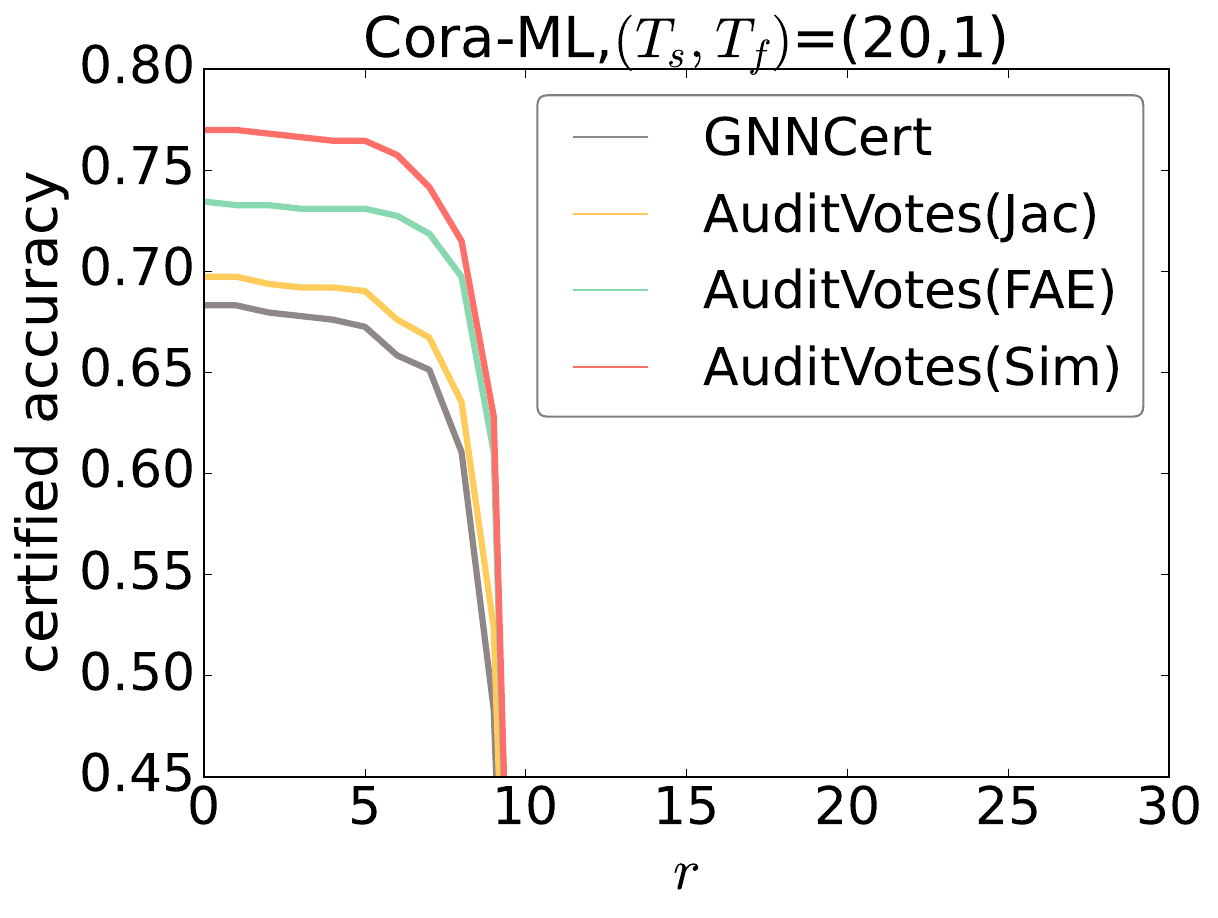}}
\vspace{-8pt}
\caption{Applying AuditVotes to GNNCert~\cite{xia2024gnncert}.}
\label{fig:gnncert}
\end{figure}

In Figure~\ref{fig:gnncert}, we apply the augmentation component of AuditVotes to the GNNCert model to address the challenge of overly sparse graphs. Both SimAug and FAEAug improve the clean accuracy and certified accuracy of GNNCert. For instance, on the Cora-ML dataset, SimAug increases the clean accuracy from $68.3\%$ to $77.0\%$ and boosts the certified accuracy (for arbitrary edge insertion/deletion with 7 edges) from $65.1\%$ to $74.2\%$. In Sec.~\ref {sec:runtime}, we also apply AuditVotes to AGNNCert to show its scalability. 

Furthermore, our conditional smoothing is highly applicable to randomized smoothing models in other domains. For example, we apply the confidence filter (Conf) with a threshold of $0.9$ to the Gaussian~\cite{cohen2019certified} randomized smoothing model used for image classification tasks. As shown in Table~\ref{tab:certify_cifar}, and Figure~\ref{fig:cer_image} (in Appendix~\ref{appendix:more_results}), Conf effectively improves certified performance. On the CIFAR-10 dataset, Conf increases clean accuracy by $9.8\%$ and certified accuracy by $104.2\%$ when the $l_2$-radius is $0.7$. Similarly, on the MNIST dataset, AuditVotes(Conf) enhances certified accuracy by $4.1\%$ at the same $l_2$-radius.

When compared to more advanced randomized smoothing training models, AuditVotes maintains superiority on the CIFAR-10 dataset (Figure~\ref{fig:cer_image}(a)) and achieves comparable performance on the MNIST dataset (Figure~\ref{fig:cer_image}(b)). Notably, Conf is computationally efficient, requiring minimal additional training time, whereas other models incur significant computational overhead. The limited improvement observed on the MNIST dataset is likely due to its already high baseline accuracy, which makes further enhancements more challenging.

\subsection{Efficiency and Scalability (Q5)}
\label{sec:runtime}

In this section, we evaluate the efficiency of AuditVotes in both node classification and image classification tasks. Table~\ref{tab:runtime_cora} presents the runtime performance when certifying edge addition perturbations (parameters: $p_+=0.2$, $p_-=0.6$) on the Cora-ML dataset.

It is worth noting that training our augmentation requires less than $20$ seconds, which is very efficient. During the testing (smoothing) phase, the edge intensity matrix, which is used for augmentation, is computed only once. Additionally, the augmentation techniques (FAEAug and SimAug) make the graph sparser (Figure~\ref{fig:e_sparsity}), resulting in faster testing. As a result, AuditVotes models FAEAug and SimAug exhibit lower total runtime compared to the vanilla SparseSmooth.

Moreover, our conditional smoothing model (Conf) also maintains high efficiency. It uses the same training procedure as the Gaussian smoothing model, requiring no additional computations during training. In the testing phase, Conf performs $N$ simple comparisons between confidence scores and a given threshold, introducing only a minor runtime increase. As shown in Table~\ref{tab:runtime_cora}, this results in negligible additional computational cost.

\begin{table}[!htb]
\centering
\caption{Certified accuracy comparison on CIFAR-10.}
\vspace{-8pt}
\begin{tabular}{lllll}
\toprule[0.9pt]
Certified $l_2$-radius: & 0.0 (clean) & 0.3 & 0.5 & 0.7 \\ \hline
Gaussian~\cite{cohen2019certified} & {\ul 0.758} & 0.565 & 0.410 & 0.262 \\
Stability~\cite{li2019certified} & 0.720 & 0.542 & 0.413 & 0.288 \\
CAT-RS~\cite{jeong2023confidence} & 0.711 & 0.627 & {\ul 0.564} & {\ul 0.489} \\
Diffusion~\cite{carlini2023certified} & \textbf{0.862} & {\ul 0.695} & 0.554 & 0.407\\
\textbf{+AuditVotes (Conf)} & {\ul 0.833} & \textbf{0.752} & \textbf{0.671} & \textbf{0.535} \\ \toprule[0.9pt]
\end{tabular}
\label{tab:certify_cifar}
\end{table}

In Table~\ref{tab:runtime_cifar}, we evaluate the runtime of various models on the CIFAR-10 dataset. The Conf model has the same training time as the Gaussian smoothing model, as it does not involve additional complexity during training. In contrast, baseline models such as Stability and CAT-RS require significantly higher training time. For example, CAT-RS requires approximately $\times11$ training time of the Gaussian model. During the testing (smoothing) phase, Conf only introduces less than $20$ minutes of additional runtime, demonstrating its computational efficiency.

\begin{table}[!htb]
\centering
\caption{Running time comparison on Cora-ML.}
\begin{tabular}{lrrr}
\toprule[0.9pt]
Smoothed models & Training & Testing & Total \\ \hline
\cellcolor{gray!10}SparseSmooth (GCN) & \cellcolor{gray!10}0:00:13 & \cellcolor{gray!10}0:09:33 & \cellcolor{gray!10}0:09:46 \\
+AuditVotes (JacAug) & 0:00:17 & 0:10:11 & 0:10:28 \\
+AuditVotes (FAEAug) & 0:00:27 & 0:08:51 & 0:09:18 \\
+AuditVotes (SimAug) & 0:00:31 & 0:08:33 & \textbf{0:09:04} \\
+AuditVotes (Conf) & 0:00:13 & 0:09:34 & 0:09:47 \\
+AuditVotes (JacAug+Conf) & 0:00:17 & 0:10:12 & 0:10:29 \\
+AuditVotes (FAEAug+Conf) & 0:00:27 & 0:08:52 & \underline{0:09:19} \\
+AuditVotes (SimAug+Conf) & 0:00:31 & 0:09:08 & 0:09:39 \\ \bottomrule[0.9pt]
\end{tabular}
\label{tab:runtime_cora}
\vspace{-4pt}
\end{table}

\begin{table}[!htb]
\centering
\caption{Running time comparison on CIFAR-10.}
\begin{tabular}{lrrr}
\toprule[0.9pt]
Models & Training & Testing & Total \\ \hline
Gaussian~\cite{cohen2019certified} & 00:52:59 & 01:59:41 & \textbf{02:52:40} \\
Stability~\cite{li2019certified} & 01:34:30 & 01:59:22 & 03:33:52 \\
CAT-RS~\cite{jeong2023confidence} & 11:26:47 & 01:59:37 & 13:26:24 \\
Diffusion~\cite{carlini2023certified} & - & 94:36:38 & 94:36:38 \\
\textbf{+AuditVotes (Conf)} & 00:52:59 & 02:12:24 & \underline{03:05:23} \\ \bottomrule[0.9pt]
\end{tabular}
\label{tab:runtime_cifar}
\end{table}


\begin{table*}[!ht]
\centering
\caption{Apply AuditVotes to AGNNCert~\cite{li2025agnncert} on large scale dataset (Amazon2M~\cite{chiang2019cluster}).}
\label{tab:amazon2m}
\begin{tabular}{rrrrrrrrr}
\hline
\multicolumn{1}{c}{Amazon2M dataset} & \multicolumn{5}{c}{Certified   accuracy (r)}                                       & \multicolumn{3}{c}{Runtime (s)} \\ \hline
\multicolumn{1}{c}{Models}           & 0 (clean)              & 5              & 10             & 20             & 30             & Training  & Testing  & Total    \\ \hline
Standard AGNNCert-E                           & 0.842          & 0.840          & 0.837          & 0.834          & 0.830          & 3369.842  & 148.297  & 3518.138 \\
\textbf{+AuditVotes (CosAug)}                 & \textbf{0.848} & \textbf{0.845} & \textbf{0.843} & \textbf{0.839} & \textbf{0.836} & 3482.895  & 150.298  & 3633.193 \\ \hline
Standard AGNNCert-N                           & 0.835          & 0.835          & 0.835          & 0.835          & 0.835          & 3428.633  & 145.512  & 3574.145 \\
\textbf{+AuditVotes (CosAug)}                 & \textbf{0.839} & \textbf{0.839} & \textbf{0.839} & \textbf{0.839} & \textbf{0.839} & 3622.470  & 154.939  & 3777.408 \\ \hline
\end{tabular}
\end{table*}

AGNNCert~\cite{li2025agnncert} is the first work that evaluates the scalable certified robustness on a large-scale dataset (Amazon2M~\cite{chiang2019cluster}). 
Following their work, we show our AuditVotes enhancement framework can be applied to AGNNCert and is also scalable to large graphs. Since the node features in the Amazon2M dataset are continuous instead of discrete, we employ cosine similarity instead of Jaccard similarity, termed as AuditVotes (\textbf{CosAug}). 

To pre-compute the similarity matrix for a large graph with $|V|$ nodes, its complexity is $O(|V|^2)$, which introduces a high computation workload. To reduce the complexity, we randomly sample $N$ nodes among $|V|$, and we only pre-compute the similarity matrix among the $N$ nodes. For example, in our paper, we set $N=\lfloor|V|/40\rfloor$ for the Amazon2M dataset. The experimental results are presented in Table \ref{tab:amazon2m}. It is observed that we have similar testing time compared to the standard AGNNCert-E or AGNNCert-N (less than 10s time increment). CosAug increases the training time by less than 200s (mainly in the pre-computation of the similarity matrix). 
With the minor time increment, our AuditVotes (CosAug) enhance both the clean accuracy (when $r=0$) and certified accuracy. These results show that our AuditVotes framework can also be applied to large-scale graphs efficiently.

\section{Related Works}
\label{sec:relatedW}
In this section, we discuss the related existing works in certified robustness and graph data augmentation.

\noindent\textbf{ - Certified Robustness.}
The mainstream approach to realize certified robustness is \textit{randomized smoothing}~\cite{cohen2019certified,lecuyer2019certified}, which is originally designed for the image classification domain. Then, it was adapted to graph domains with representative works ~\cite{lee2019tight,bojchevski2020efficient,jia2020certified,wang2021certified,lai2024node,lai2024collective}. These approaches add random noise to the input, and then thousands of samples are drawn from the noise to obtain a smoothed prediction. To avoid the huge sample size and obtain a deterministic guarantee, de-randomized smoothing schemes are proposed in image domain~\cite{levine2020randomized,levine2021improved} and graph domain~\cite{xia2024gnncert,yang2024distributed,li2025agnncert}. We mainly focus on the graph domain, improving both randomized and de-randomized smoothing schemes for node classification models. 

The mainstream of improving certified robustness consists of de-noised smoothing and more advanced training strategies. 
De-noised smoothing~\cite{salman2020denoised,carlini2023certified} employ diffusion model to de-noise the noisy input before the predictions, which can effectively improve the data quality. Nevertheless, these studies are in the image domains, and they can not be directly transferred to inductive graph learning tasks. We are the first to propose graph augmentations to de-noise the random graph. 

Another way to improve certified robustness is by improving training strategies~\cite{salman2019provably,li2019certified, Zhai2020MACER,jeong2020consistency,jeong2023confidence}. \cite{salman2019provably,li2019certified,gosch2024adversarial} employ adversarial training to improve the robustness of smoothed classifiers in classifying adversarial examples. \cite{Zhai2020MACER} propose attack-free robust training that maximizes the certified radius during training, which avoids the high computation in finding adversarial examples.  
\cite{jeong2020consistency} proposed consistency regularization term to reduce the variance of predictions. Similarly, \cite{jeong2023confidence} design a sample-wise and confidence-aware training objective. These models, employing adversarial training, designing prediction consistency regularization, and developing adaptive noise level training, are orthogonal to our work. Compared to these approaches, our conditional smoothing based on confidence score is a \textit{post-training} procedure that does not require extra computation during training. On the other hand, it is a general approach that can be applied to any base classifier using the training strategies above. 

\noindent\textbf{ - Graph Data Augmentation.}
Graph data augmentation~\cite{ding2022data,zhao2022graph} has received lots of research efforts, and it is widely used to enhance reliable graph learning. Graph augmentation can be used as an adversarial defense technique via cleaning the perturbed graph \cite{wu2019adversarial,zhang2020gnnguard,entezari2020all}. For instance, GCNJaccard~\cite {wu2019adversarial} uses the Jaccard similarity to quantify node feature similarity, removing the edges that connect dissimilar nodes to defend against add-edge adversarial attacks. Structural learning ~\cite{chen2020iterative,zhao2021data,jin2020graph} optimizes the edge connection as a parameter with the network parameter at the same time. For example, GAug~\cite{zhao2021data} uses graph auto-encoder to implement an edge predictor to improve the performance of GNNs on node classification tasks. Despite the advancement of graph data augmentation in empirical defense, it has not been developed in the context of inductive graph learning and enhancing randomized smoothing.

\section{Limitations}
\label{sec:limitation}
Despite the advantages of AuditVotes in enhancing clean accuracy and certified accuracy simultaneously, AuditVotes has limitations. The augmentation component of AuditVotes relies on the quality of node features; when node features are absent or extremely uninformative, augmentation contributes less. Nevertheless, our confidence filter is still meaningful in this case. It is worth noting that, when defending against node feature attacks, AuditVotes remains applicable via node-partitioning certification (e.g., AGNNCert-N~\cite{li2025agnncert}), because malicious node features are "masked" in the subgraphs.   

\section{Conclusion and Future Works}
\label{sec:conclu}

In this paper, we addressed critical limitations in the field of certifiably robust Graph Neural Networks (GNNs) by proposing AuditVotes, a general enhancement to the randomized smoothing framework. Our work, aiming to mitigate the trade-off between model accuracy and certified performance, is the first framework that can achieve both high model accuracy and certified accuracy for GNNs.
AuditVotes incorporates two novel components: augmentation and conditional smoothing, both of which are efficient, adaptive to noise, and suitable for inductive learning scenarios.

We instantiated AuditVotes with three augmentation strategies—JacAug, FAEAug, and SimAug—that are computationally efficient and generalizable to unseen nodes. These augmenters enhance the quality of randomized graphs, leading to significant improvements in clean and certified accuracy. Additionally, we introduced conditional smoothing based on prediction confidence to exclude low-quality votes and improve voting consistency, which further boosts certified accuracy with minimal computational overhead. We establish a theoretical guarantee for the AuditVotes to obtain certified robustness. Our extensive evaluations demonstrate that AuditVotes achieves not only substantial gains in clean, certified, and empirical accuracy but also exhibits broad applicability as a general framework for enhancing other smoothing schemes, such as GNNCert and Gaussian smoothing for image classification.


By providing higher model accuracy, stronger robustness guarantees, and wide applicability, AuditVotes paves the way for robust GNNs in security-sensitive applications where accuracy and robustness are paramount. We believe that our framework can inspire further research into exploring the more advanced designs of augmentation methods and filtering functions.


\section*{Acknowledgements}
This paper was proofread for grammar with OpenAI GPT-5. This work is partially supported by the Research Center for Culture \& Sci-Tech Integration Innovation, Key Research Base of Humanities and Social Sciences of Hubei Province (Project number: WK2026ZD01); the Fundamental Research Funds for the Central Universities (Project number: XJSJ26042); and the Hong Kong Research Grants Council General Research Fund (Project number: PolyU15230025).

\bibliographystyle{ACM-Reference-Format}
\balance
\bibliography{references}

\appendix

\section*{Ethical Considerations}

In this section, we discuss the ethical implications, societal impact, and steps taken to ensure the responsible conduct of this research. Our work aims to enhance the robustness and accuracy of Graph Neural Networks (GNNs) against adversarial attacks in security-sensitive applications such as fraud detection and social networks. 

\subsubsection*{Stakeholders and Impacts}

\begin{itemize}
    \item \textit{Users of GNN Systems:} Individuals and organizations relying on GNN-powered systems in domains such as financial fraud detection, social network moderation, and traffic monitoring benefit from improved robustness and accuracy, reducing the risk of adversarial attacks compromising their systems.
    \item \textit{Researchers and Practitioners:} Security researchers and machine learning practitioners gain access to a framework that advances certified robustness, improving the state of the art.
    \item \textit{Society at Large:} By improving the security of GNNs, our research indirectly contributes to safer and more trustworthy systems for end-users, promoting public confidence in AI technologies.
\end{itemize}

\subsubsection*{Ethical Principles}
We adhered to the principles outlined in \textit{The Menlo Report}:
\begin{itemize}
    \item \textit{Beneficence:} Our work aims to benefit society by improving the robustness of GNNs, ensuring their safe deployment in critical applications. The potential benefits outweigh any risks, as the research enhances the security of systems vulnerable to adversarial attacks.
    \item \textit{Respect for Persons:} We ensured the privacy and anonymity of datasets by exclusively using publicly available benchmark datasets (e.g., Cora-ML, Citeseer, PubMed) without collecting or using sensitive user data.
    \item \textit{Justice:} The research promotes fairness by providing defenses applicable to a wide range of GNN tasks, ensuring that advancements in robustness and accuracy are accessible to diverse applications.
    \item \textit{Respect for Law and Public Interest:} We followed all ethical and legal guidelines for the use of publicly available datasets, ensuring compliance with data usage agreements and privacy standards.
\end{itemize}

\subsubsection*{Potential Harms and Mitigations}

\begin{itemize}
    \item \textit{Misuse of Research:} While our framework is designed to enhance robustness, it could be analyzed by adversaries to understand defense mechanisms. Nevertheless, we provide certified defense without exposing specific vulnerabilities that could be exploited.
    \item \textit{Privacy Concerns:} Our research exclusively uses publicly available benchmark datasets. No private or sensitive user data was collected or processed, ensuring full respect for privacy rights.
\end{itemize}

\subsubsection*{Decision to Proceed and Publish}
The decision to conduct and publish this research was based on the following considerations:
\begin{itemize}
    \item \textit{Ethical Benefits:} The potential benefits of improving the robustness and accuracy of GNNs outweigh the limited risks.
    \item \textit{Transparency and Community Advancement:} Publishing this research contributes to the academic and practitioner communities, enabling further advancements in secure machine learning.
    \item \textit{Minimized Risks:} Through the use of publicly available datasets, compliance with ethical guidelines, and a focus on societal benefits, we mitigated potential harms to ensure a responsible and ethical research process.
\end{itemize}

\subsubsection*{Conclusion}
We believe this research aligns with ethical principles by prioritizing societal benefit, respecting privacy, and advancing security in critical machine learning applications. By addressing potential harms and ensuring transparency, we have taken the necessary steps to conduct and publish this research responsibly. We welcome feedback from the community to further improve the societal impact and ethical conduct of our work.

\section*{Open Science}
We implement this using Python (PyTorch 1.13.1) on an NVIDIA GeForce RTX 3090 Ti.  
All the code, data, settings, and usage are provided in our GitHub Repo:\\
https://github.com/Yuni-Lai/AuditVotes-Certified-Robustness.

\section*{Appendix}
\section{Proofs for Theorems}
(Note: We recommend that readers consult the proof in SparseSmooth~\cite{bojchevski2020efficient} and Gaussian~\cite{cohen2019certified} before reading the proofs below.)
\subsection{Proof for Theorem~\ref{theorem-condition}}
\label{AppendixA.1}
\begin{proof}
We employ a similar proof scheme as SparseSmooth~\cite{bojchevski2020efficient}, and the problem of certified robustness can be formulated as a linear programming problem in \eqref{opt:randomsmooth}. Specifically, given a node $v$, let $y_A$ and $y_B$ denote the top predicted class and the runner-up class, respectively. Let $\underline{p_A}$ and $\overline{p_B}$ denote the lower bound of $p_{v,y_A}$ and the upper bound of $p_{v,y_B}$, respectively. Let $p'_A$ and $p'_B$ denote the probability of predicting $y_A$ and $y_B$ given the perturbed graph $\mathcal{G}'$. By the definition of the smoothed classifier $g^c(\cdot)$ defined in Eq.~\eqref{eqn:consmooth_g}, we know that the prediction is $y_A$ if $p'_A>p'_B$. We employ the linear programming problem to find a worst-case classifier represented by vectors $\mathbf{s}$ and $\mathbf{t}$ such that the classification margin $\mu_{r_a,r_b}:=p'_A-p'_B$ under the perturbed graph is minimized. The vectors $\mathbf{s}\in[0,1]^{I}$ and $\mathbf{t}\in[0,1]^{I}$ encode the classifier that assigns class $y_A$ and class $y_B$ among the regions. More specifically, $s_i=\mathbb{P}(f(Z)=y_A|h(Z)=0,Z\in\mathcal{R}_i)$, and $t_i=\mathbb{P}(f(Z)=y_B|h(Z)=0,Z\in\mathcal{R}_i)$.
Given that the classifier $g^c(\cdot)$ predicts $y_A$ for the randomized clean graph $\phi(\mathcal{G})$ with probability at least $\underline{p_A}$, and predicts $y_B$ with probability at most $\overline{p_B}$, the worst-case classifier satisfies $\mathbf{s}^T\mathbf{r}=\underline{p_A}$, and $ \mathbf{t}^T\mathbf{r}=\overline{p_B}$. 
In the worst case, for $\phi(\mathcal{G}')$, $\mathbf{s}$ tends to assign the lowest probability of class $y_A$ and $\mathbf{t}$ tends to assign the highest probability of class $y_B$. Therefore, the worst-case classifier $\mathbf{s}$ assigns class $y_A$ in decreasing order of the constant likelihood regions until $\mathbf{s}^T\mathbf{r}=\underline{p_A}$, and $\mathbf{t}$ assigns class $y_B$ in increasing order of the constant likelihood regions until $\mathbf{t}^T\mathbf{r}=\overline{p_B}$. With this classifier represented by $\mathbf{s}$ and $\mathbf{t}$, the classification margin $\mu_{r_a,r_b}:=p'_A-p'_B=\mathbf{s}^T\mathbf{r}'-\mathbf{t}^T\mathbf{r}'$ is minimized. 

Next, we provide an example to further illustrate the process of decomposing the probability $p'_A-p'_B$ into vectors $\mathbf{s}^T\mathbf{r}'-\mathbf{t}^T\mathbf{r}'$ using the law of total probability. Assuming that there are two constant likelihood regions $\mathcal{R}_1$ and $\mathcal{R}_2$, then we can decompose the conditional probabilities $\underline{p_A}$ and $\overline{p_B}$ as follows:
\begin{align}
\underline{p_A}&=\mathbb{P}(f(\phi(\mathcal{G}))=y_A| h(\phi(\mathcal{G}))=0)\nonumber\\
&=\mathbb{P}(f(Z)=y_A|\phi(\mathcal{G})=Z \in \mathcal{R}_1, h(Z)=0)\nonumber\\
&\quad\times\mathbb{P}(\phi(\mathcal{G})=Z \in \mathcal{R}_1)\nonumber\\
&\quad+\mathbb{P}(f(Z)=y_A|\phi(\mathcal{G})=Z \in \mathcal{R}_2, h(Z)=0)\nonumber\\
&\quad\times\mathbb{P}(\phi(\mathcal{G})=Z \in \mathcal{R}_2)
=s_1 r_1+s_2r_2=\mathbf{s}^T\mathbf{r}.\nonumber
\end{align}
\begin{align}
\overline{p_B}&=\mathbb{P}(f(\phi(\mathcal{G}))=y_B| h(\phi(\mathcal{G}))=0)\nonumber\\
&=\mathbb{P}(f(Z)=y_B|\phi(\mathcal{G})=Z \in \mathcal{R}_1, h(Z)=0)\nonumber\\
&\quad\times\mathbb{P}(\phi(\mathcal{G})=Z \in \mathcal{R}_1)\nonumber\\
&\quad+\mathbb{P}(f(Z)=y_B|\phi(\mathcal{G})=Z \in \mathcal{R}_2, h(Z)=0)\nonumber\\
&\quad\times\mathbb{P}(\phi(\mathcal{G})=Z \in \mathcal{R}_2)
=t_1 r_1+t_2r_2=\mathbf{t}^T\mathbf{r}.\nonumber
\end{align}

Next, our goal is to estimate the prediction probabilities given arbitrary perturbed graph $\mathcal{G}'\in \mathcal{B}_{r_a,r_b}$: 
\begin{align}
p'_A&:=\mathbb{P}(f(\phi(\mathcal{G}'))=y_A| h(\phi(\mathcal{G}'))=0)\nonumber\\
&=\mathbb{P}(f(Z)=y_A|\phi(\mathcal{G}')=Z \in \mathcal{R}_1, h(Z)=0)\nonumber\\
&\quad\times\mathbb{P}(\phi(\mathcal{G}')=Z \in \mathcal{R}_1)\nonumber\\
&\quad+\mathbb{P}(f(Z)=y_A|\phi(\mathcal{G}')=Z \in \mathcal{R}_2, h(Z)=0)\nonumber\\
&\quad\times\mathbb{P}(\phi(\mathcal{G}')=Z \in \mathcal{R}_2)=s_1 r'_1+s_2r'_2=\mathbf{s}^T\mathbf{r}'.\nonumber
\end{align}
\begin{align}
p'_B&:= \mathbb{P}(f(\phi(\mathcal{G}'))=y_B| h(\phi(\mathcal{G}'))=0)\nonumber\\
&=\mathbb{P}(f(Z)=y_B|\phi(\mathcal{G}')=Z \in \mathcal{R}_1, h(Z)=0)\nonumber\\
&\quad\times\mathbb{P}(\phi(\mathcal{G}')=Z \in \mathcal{R}_1)\nonumber\\
&\quad+\mathbb{P}(f(Z)=y_B|\phi(\mathcal{G}')=Z \in \mathcal{R}_2, h(Z)=0)\nonumber\\
&\quad\times\mathbb{P}(\phi(\mathcal{G}')=Z \in \mathcal{R}_2)=t_1 r'_1+t_2r'_2=\mathbf{t}^T\mathbf{r}'.\nonumber
\end{align}
A similar decomposition can be obtained for more region numbers using the law of total probability. Finally, the prediction of sample $\mathcal{G}$ by the smoothed classifier defined in ~\eqref{eqn:consmooth_g} can be certified if:
\begin{align}
    \mu_{r_a,r_b} &:=p'_A-p'_B\nonumber\\
    &=\mathbb{P}(f(\phi(\mathcal{G}'))=y_A| h(\phi(\mathcal{G}'))=0)\nonumber\\
    &\quad-\mathbb{P}(f(\phi(\mathcal{G}'))=y_B| h(\phi(\mathcal{G}'))=0)\nonumber\\
    &=\mathbf{s}^T\mathbf{r}'-\mathbf{t}^T\mathbf{r}'
    >0.\nonumber
\end{align}

Next, the key to obtaining the certifying condition and solving the optimization problem is to find the consent likelihood $\frac{\mathbb{P}(\phi(\mathcal{G})=Z)}{\mathbb{P}(\phi(\mathcal{G}')=Z)}=c_i$, divide the regions $\mathbb{G}=\bigcup_{i=1}^I \mathcal{R}_i$, and get the probability $r$ and $r'$~\cite{bojchevski2020efficient,lee2019tight}. Let $\mathcal{G}'$ denote the perturbed graph among $\mathcal{B}_{r_a,r_b}(\mathcal{G})$, and $Z\in\mathbb{G}$ be any possible graph obtained by $\phi(\mathcal{G})$ or $\phi(\mathcal{G}')$. We now need to compute the likelihood ratio with condition $h(Z)=0$. \textbf{Notably, the probability $\mathbb{P}(\phi(\mathcal{G})=Z| h(Z)=0)$ only depends on the number of edge different between $Z$ and $\mathcal{G}$, and does not depend on the filter $h(Z)$ because the randomization $\phi(\cdot)$ is totally random and it has equal probability to delete the exiting edges, and equal probability to add non-exiting edges}. 
Then, we have the likelihood ratio $\Lambda(Z)$ as follows:
\begin{align}
\label{eqn:likelihood_ratio}
    \Lambda(Z)&=\frac{\mathbb{P}(\phi(\mathcal{G})=Z| h(Z)=0)}{\mathbb{P}(\phi(\mathcal{G}')=Z| h(Z)=0)}=\frac{\mathbb{P}(\phi(\mathcal{G})=Z)}{\mathbb{P}(\phi(\mathcal{G}')=Z)}.
\end{align}  
That is, our conditional smoothing model (with arbitrary $h(\cdot)$) does not affect the likelihood ratio, allowing us to utilize the same constant likelihood ratio region as in SparseSmooth~\cite{bojchevski2020efficient}. This underscores the seamless adaptability of the certifying condition to our conditional smoothing model. The only difference lies in the definition of $\underline{p_A}$ and $\overline{p_B}$. In SparseSmooth~\cite{bojchevski2020efficient}, the $\underline{p_A}$ and $\overline{p_B}$ are estimated among all the votes, while in our model, the probabilities $\underline{p_A}$ and $\overline{p_B}$ are estimated among the valid votes ($h(Z)=0$). Specifically, let $X$ denote the adjacency matrix of the clean graph $\mathcal{G}$, and $Y$ denote the adjacency matrix of a perturbed graph $\mathcal{G}'\in \mathcal{B}_{r_a,r_b}(\mathcal{G})$. We know that in the worst case, the attacker consumes all the attack budget so that $X$ and $Y$ have exactly $r_a+r_d$ different bits. Let $\mathcal{C}=\{(k,l)|X_{kl}\neq Y_{kl}\}$ denotes the location that $X$ and $Y$ are different, and $X_\mathcal{C}\in \{0,1\}^{|\mathcal{C}|}$ denote the elements of $X$ in location $\mathcal{C}$. We know that $X_\mathcal{C}$ must contains $r_a$ zeros and $r_d$ ones: $||\mathbf{1}-X_\mathcal{C}||_o=r_a, ||X_\mathcal{C}||_o=r_d$. 
We can divide the space $\mathbb{G}$ into $r_a+r_d+1$ disjoint regions: $\mathbb{G}=\bigcup_{i=0}^{r_a+r_d} \mathcal{R}_i$, where $\mathcal{R}_i$ contains all the adjacency matrix that can be obtained by flipping $i$ bits in $X_\mathcal{C}$ and have any combination of ones and zeros in the other location:
\begin{align}
    \mathcal{R}_i=\{Z\in\mathbb{G}:||X_\mathcal{C}-Z_\mathcal{C}||_o=i\}, i=0,1,\cdots,r_a+r_d.\nonumber
\end{align}
Equivalently, the region $\mathcal{R}_i$ contains all adjacency matrix that obtained by not flipping $i$ bits in $Y_\mathcal{C}$ because $Y_\mathcal{C}=1-X_\mathcal{C}$. 
Then we have the constant likelihood:
\begin{align}
    \Lambda(Z)_i&=\frac{\mathbb{P}(\phi(X)=Z| h(Z)=0)}{\mathbb{P}(\phi(Y)=Z| h(Z)=0)}=\frac{\mathbb{P}(\phi(X)=Z)}{\mathbb{P}(\phi(Y)=Z)}\nonumber\\
    &=[\frac{p_+}{1-p_-}]^{i-r_d}[\frac{p_-}{1-p_+}]^{i-r_a}. \nonumber
\end{align}
Given $r_a$ and $r_d$, this likelihood ratio $\Lambda(Z)_i$ is monotonically decreasing of $i$ if $p_-+p_+\leq1$, and monotonically decreasing, otherwise.

According to ~\cite{bojchevski2020efficient}, the probability $r_i=\mathbb{P}(\phi(\mathcal{G})\in\mathcal{R}_i)$ and $r'_i=\mathbb{P}(\phi(\mathcal{G}')\in\mathcal{R}_i)$ are Poisson-Binomial distributions:
\begin{align}
    r_i=&\mathbb{P}(\phi(\mathcal{G})\in\mathcal{R}_i)=PB([p_+,r_a],[p_-,r_b]),\nonumber\\
    r'_i=&\mathbb{P}(\phi(\mathcal{G}')\in\mathcal{R}_i)=PB([1-p_-,r_a],[1-p_+,r_b]),\nonumber
\end{align}
where $PB([p_+,r_a],[p_-,r_b])$ denote Poisson-Binomial distribution with parameter $p_+$ repeated for $r_a$ times, and $p_-$ repeated for $r_d$times: $PB(p_+,\cdots,p_+,p_-,\cdots,p_-)$. (These probabilities can be calculated following the same calculation procedure as in SparseSmooth~\cite{bojchevski2020efficient}. )

With consent likelihood $\frac{\mathbb{P}(\phi(\mathcal{G})=Z)}{\mathbb{P}(\phi(\mathcal{G}')=Z)}=c_i$, and the corresponding regions $\mathcal{R}_i$, and the probability $r$ and $r'$, we can solve the optimization problem~\eqref{opt:randomsmooth} and certify the robustness. In the worst case, for $\phi(\mathcal{G}')$, $\mathbf{s}$ tends to assign the lowest probability of class $y_A$ and $\mathbf{t}$ tends to assign the highest probability of class $y_B$. Therefore, the worst-case classifier $\mathbf{s}$ assigns class $y_A$ in decreasing order of the constant likelihood regions until $\mathbf{s}^T\mathbf{r}=\underline{p_A}$, and $\mathbf{t}$ assigns class $y_B$ in increasing order of the constant likelihood regions until $\mathbf{t}^T\mathbf{r}=\overline{p_B}$~\cite{bojchevski2020efficient}. With this classifier represented by $\mathbf{s}$ and $\mathbf{t}$, the classification margin $\mu_{r_a,r_b}:=p'_A-p'_B=\mathbf{s}^T\mathbf{r}'-\mathbf{t}^T\mathbf{r}'$ is minimized
\end{proof}

\subsection{Proof for Theorem~\ref{theorem-condition_image}}
\label{AppendixA.2}

\begin{proof}
     Given an image $x$, let $y_A$ and $y_B$ denote the top predicted class and the runner-up class, respectively. Let $\underline{p_A}$ and $\overline{p_B}$ denote the lower bound of $\mathbb{P}(f(x+\epsilon)=y_A|h(x+\epsilon)=0)$ and the upper bound of $\mathbb{P}(f(x+\epsilon)=y_B|h(x+\epsilon)=0)$, respectively.
     We have $g(x')=g(x)$ if $\mathbb{P}(f(x'+\epsilon)=y_A|h(x'+\epsilon)=0)>\max_{y_B} \mathbb{P}(f(x'+\epsilon)=y_B|h(x'+\epsilon)=0)$. Let's denote two random variables:
    \begin{align}
        &X:=x+\epsilon = \mathcal{N}(x,\sigma^2I),\nonumber\\
        &Y:=x+\delta+\epsilon = \mathcal{N}(x+\delta,\sigma^2I),\nonumber
    \end{align}
    where $\delta$ satisfies: $x+\delta=x'$. By the definition of $\underline{p_A}$ and $\overline{p_B}$, we know that: $\mathbb{P}(f(X)=y_A|h(X)=0)\geq\underline{p_A}$ and $\mathbb{P}(f(X)=y_B|h(X)=0)\leq\overline{p_B}$. According to Neymen-Pearson Lemma adapted by Gaussian~\cite{cohen2019certified} (Lemma 4), we know that:

    \textit{Let $s:\mathbb{R}^d\rightarrow \{0,1\}$ denote \textbf{any function} that outputs $0$ or $1$. If a half-space $A=\{Z\in \mathbb{R}^d:\delta^Tz\geq \beta\}$ for some $\beta$ and $\mathbb{P}(s(X)=1)\leq \mathbb{P}(X\in A)$, then $\mathbb{P}(s(Y)=1)\leq \mathbb{P}(Y\in A)$. Similarity, if a half-space $B=\{Z\in \mathbb{R}^d: \delta^Tz\leq \beta\}$ for some $\beta$ and $\mathbb{P}(s(X)=1)\geq \mathbb{P}(X\in A)$, then $\mathbb{P}(s(Y)=1)\geq \mathbb{P}(Y\in A)$.} 

    We define a half-space $A$ such that $\mathbb{P}(X\in A)=\underline{p_A}$, then:
    $\mathbb{P}(f(X)=y_A|h(X)=0)\geq \underline{p_A}=\mathbb{P}(X\in A).$
    Similarly, we define a half-space $B$ such that $\mathbb{P}(X\in B)=\overline{p_B}$, then:
    $\mathbb{P}(f(X)=y_B|h(X)=0)\leq \overline{p_B}=\mathbb{P}(X\in B).$
    Let $\mathbf{1}[\cdot]$ denote the indicator function, by applying the Neymen-Pearson Lemma above with $s(Z):=\mathbf{1}[f(Z)=c_A]\cdot\mathbf{1}[h(Z)=0]$ and $s(Z):=\mathbf{1}[f(Z)=c_B]\cdot\mathbf{1}[h(Z)=0]$, respectively, we have: 
    $$\mathbb{P}(f(Y)=y_A,h(Y)=0)\geq \mathbb{P}(Y\in A),$$
    $$\mathbb{P}(f(Y)=y_B,h(Y)=0)\leq\mathbb{P}(Y\in B).$$
    Assuming that $\mathbb{P}(h(Y)=0)\neq0$ (this assumption is valid because the classifier needs at least one vote for classification), by the definition of joint distribution, we have:
    $$\mathbb{P}(f(Y)=y_A|h(Y)=0)\geq \frac{\mathbb{P}(Y\in A)}{\mathbb{P}(h(Y)=0)},$$
    $$\mathbb{P}(f(Y)=y_B|h(Y)=0)\leq \frac{\mathbb{P}(Y\in B)}{\mathbb{P}(h(Y)=0)}.$$
    Then, we have $\mathbb{P}(f(Y)=y_A|h(Y)=0)\geq\mathbb{P}(f(Y)=y_B|h(Y)=0)$ if $\mathbb{P}(Y\in A)\geq\mathbb{P}(Y\in B)$. Next, we define the half-space following~\cite{cohen2019certified}:
    $A:=\{Z:\delta^T(Z-x)\leq\sigma||\delta||\Phi^{-1}(\underline{p_A})\},$ and
    $B:=\{Z:\delta^T(Z-x)\geq\sigma||\delta||\Phi^{-1}(1-\overline{p_B})\}.$
    We compute the probability that $\mathbb{P}(Y\in A)$and $\mathbb{P}(Y\in B)$:
    $$\mathbb{P}(Y\in A)=\Phi(\Phi^{-1}(\underline{p_A})-||\delta||/\sigma),$$
    $$\mathbb{P}(Y\in B)=\Phi(\Phi^{-1}(\overline{p_B})+||\delta||/\sigma).$$
    Finally, we have the certifying condition that: $g(x')=g(x)$ if 
    $||\delta||_2<\frac{\sigma}{2}(\Phi^{-1}(\underline{p_A})-\Phi^{-1}(\overline{p_B}))$.
\end{proof}

\section{More Implementation Details}
\label{Sec:implement_detail}

\subsection{Model Training}
\label{Sec:AppendixB.2}
\textbf{Base model.} For randomized smoothing models, 
 we follow the training setting in ~\cite{cohen2019certified,bojchevski2020efficient,lai2024node}. Specifically, during the training phase, we add the same random noise into the training data to improve the accuracy of the base model. For de-randomized smoothing models, we follow~\cite{xia2024gnncert} that includes all the subgraphs induced from the training graph for training. 

\textbf{Augmentation models.}
For augmentation training, we sample $90\%$ of existing edges in the training subgraph as the positive edges, and sample non-edges in a quantity $10$ times that of the positive edges as negative edges. These positive and negative edges are used to train the edge prediction models in FAEAug and SimAug with loss functions in Eq.~\eqref{eqn:loss_fae}. After the augmenter is trained, we apply it to the testing graph and pre-calculate the edge intensity matrix.

\subsection{Certificate Calculation}
\label{Sec:AppendixB.3}

\begin{algorithm}[!h]
\caption{AuditVotes for node classification.}  
\label{alg:certify_node}
\begin{algorithmic}[1]  
\REQUIRE Clean testing graph $\mathcal{G}$, smoothing distribution $\phi(G)$ with parameters $p_+$ and $p_-$, trained base classifier $f(\cdot)$, trained augmenter $\mathcal{A}(\cdot)$, conditional filtering function $h(\cdot)$, sample size $N$, confidence level $\alpha$, perturbation budget $r_a$ and $r_d$.
\STATE{Draw $N$ random graphs $\{\mathcal{G}_i|\sim \mathcal{G}_i \sim \phi(\mathcal{G})\}_{i=1}^N$.}
\STATE{Pre-process the random graphs with augmenter $\mathcal{A}(\mathcal{G}_i)$, for $i=1,\cdots, N$.}
\STATE{Count the votes for each class: $counts=|\{i: f(\mathcal{A}(\mathcal{G}_i))=y \cap h(\mathcal{A}(\mathcal{G}_i))=0 \}|$, for $y=1, \cdots, C$.}
\STATE{Total valid counts: $N_v=sum(counts)$.}
\STATE{$y_A,y_B=$ top two indices in $counts$.}
\STATE{$n_A,n_B=counts[y_A],counts[y_B]$.}
\STATE{$\underline{p_A},\overline{p_B}=\text{CP\_Bernoulli}(n_A,n_B,N_v,\alpha)$.}
\IF{Binomial($n_A+n_B,\frac{1}{2})>\alpha$}
    \RETURN ABSTAIN
\IF{$\mu_{r_a,r_b}>0$}
\RETURN Certified prediction $y_A$.
\ENDIF
\ENDIF
\end{algorithmic}  
\end{algorithm}
\textbf{Probability lower bound.} We employ Monte Carlo probability approximation following~\cite{bojchevski2020efficient,cohen2019certified,jia2020certified}. We draw $N$ samples to estimate the Clopper-Pearson Bernoulli confidence interval~\cite{clopper1934use} with adjusted confidence $\alpha/C$, where $\alpha$ is the confidence level and $C$ is the number of classes. For our proposed conditional smoothing, the sample size is adjusted to $N-N_e$, where $N_e$ is the number of exclusions.

\textbf{Certified accuracy.}
We describe the process of obtaining certified robustness in Algorithm~\ref{alg:certify_node}. We employ the same procedures to calculate the $\mu_{r_a,r_b}$ as in SparseSmooth~\cite{bojchevski2020efficient}, and the intuition of the calculation is included in the Proof of Theorem~\ref{theorem-condition} (Appendix~\ref{AppendixA.1}). The certified accuracy can be obtained by the ratio of nodes that are both correctly classified by the smoothed classifier (i.e., $y_A=y_{true}$), and the prediction is certified to be robust.

\subsection{More Baseline Description}
\label{appendix:baselines}

\textbf{Image Classification Task:} For the image classification task, we compare our proposed conditional smoothing (\textbf{Conf}) to four baselines:
\begin{itemize}
    \item[1)] Gaussian~\cite{cohen2019certified}: The initial randomized smoothing model. Gaussian noise is added to the input image, and then the smoothed classification is obtained by majority voting regarding the noisy input. Noisy samples are used for augmented training samples to improve the generalization. 
     \item[2)] Stability~\cite{li2019certified}: It introduced a stability regularization term in the loss function to improve the prediction consistency. Specifically, the stability regularization forces the predicted probability vector of the original data and the data with Gaussian noise to be close. 
    \item[3)]CAT-RS~\cite{jeong2023confidence}: It proposed a confidence-aware training strategy that uses prediction confidence to regularize the loss function, preventing the samples with low confidence from being highly (certified) robust while forcing high-confidence samples to be highly (certified) robust.
    \item[4)]Diffusion~\cite{carlini2023certified}: It employs an off-the-shelf denoise diffusion model to denoise the input, which improves the model's clean accuracy significantly. 
    \item[5)]\textbf{AuditVotes (Conf)}: We propose a conditional smoothing framework and use conference scores to filter the votes. Our model is a post-training process and does not need extra computation workload during training.   
\end{itemize}

\textbf{Adversarial Attack Defense:} Moreover, the smoothed models can also serve as empirical robust models defending against actual adversarial attacks. We also evaluate the effectiveness of our proposed AuditVotes as an empirical robust model under Nettack~\cite{zugner2018adversarial} and IG-attack~\cite{wu2019adversarial} (widely used structure attacks for graph data). We compare our models with regular (non-smoothed) robust GNNs:
\begin{itemize}
    \item[1)] GCN~\cite{kipf2016semi}: The most classical GCN model. 
    \item[2)] GAT~\cite{veličković2018graph}: Graph attention network (GAT) employs attention layers to learn the weights for neighbor nodes.
    \item[3)] MedianGCN~\cite{chen2021understanding}: It substitutes the weighted mean aggregation in GCN by median aggregation. 
    \item[4)] AirGNN~\cite{liu2021graph}:  Node-wise adaptive residual was added to the GNN model by adaptive message passing. 
\end{itemize}

\section{Novelty of This Paper}
\subsection{Novelty of introducing graph augmentation}
While diffusion denoising augmentations (e.g. Carlini 2023~\cite{carlini2023certified}) are well-studied for images, directly extending diffusion/score-based methods to graphs is impractical: they are computationally heavy (Table~\ref{tab:runtime_cifar}), non-scalable, and tailored to Gaussian noise in continuous spaces. Graph node classification often operates on very large graphs (e.g., Amazon2M with 2.45M nodes), and SparseSmooth (or GNNCert) does not employ Gaussian noise, making such pipelines impractical.

Existing graph augmentations~\cite{wu2019adversarial,zhao2021data,chen2020iterative} optimize a single, static graph to aid robust graph learning, and the training-based augmentations~\cite{zhao2021data,chen2020iterative} co-train edges with the task network, which is not applicable to smoothing scenarios because it is computationally heavy to train the augmentation for each random graph. In contrast, our augmentations are designed to process a large number of randomized graphs efficiently and generalize to unseen nodes/graphs. We precompute a single edge-intensity matrix and reuse it during testing, keeping runtime/memory overheads low and scaling to graphs with 2.45M nodes on a single 24GB 3090 GPU (Table~\ref{tab:amazon2m}). Existing approaches consider one-way pruning and are not calibrated to certification noise, while ours are bidirectional rewiring and noise-adaptive. We apply adaptive thresholds tied to ($p_+,p_-$) of SparseSmooth or $T_s$ of GNNCert, to systematically restore sparsity or homophily degraded by smoothing noise. Our augmentations are designed to be certificate-compatible, lightweight, inductive, noise-adaptive, and effective.

In spirit, this parallels Carlini 2023~\cite{carlini2023certified}’s employing existing diffusion to the image smoothing pipeline; here, we adapt graph augmentation to the inductive and large-scale graph setting. Our design makes it compatible with certification and practical at scale, yielding substantial certified-accuracy gains with only a negligible runtime increase.

\subsection{Novelty of introducing conditional smoothing}
Both for image and graph, it is the first time that we prove that a filter (the filter can be a confidence score or other filtering models) can be directly applied to the randomized smoothing model to enhance the performance, and the final certified criteria are nearly the same. Most importantly, this almost computation-free filtering (Conf) outperforms other baselines that are designed to improve the certified performance (Table~\ref{tab:certify_cifar} and Table~\ref{tab:runtime_cifar}). The baselines require extra training, while our filtering method does not, and Conf still achieves better certified accuracy.

\begin{table*}[!ht]
\centering
\caption{Certified accuracy ($r_d$) comparison across GNNCert and SparseSmooth w/wo AuditVotes applied (Cora-ML).}
\setlength{\tabcolsep}{1.5pt}
\begin{tabular}{lrrrrrrrrr}
\hline
 & \multicolumn{1}{c}{clean acc} & \multicolumn{4}{c}{certified acc ($r_d$)} & \multicolumn{1}{c}{maximum} & \multicolumn{1}{c}{runtime (s)} & \multicolumn{1}{c}{runtime (s)} & \multicolumn{1}{c}{Parameters} \\ \cline{1-6} \cline{8-9}
Models & \multicolumn{1}{c}{r=0} & \multicolumn{1}{c}{r=3} & \multicolumn{1}{c}{r=5} & \multicolumn{1}{c}{r=7} & \multicolumn{1}{c}{r=10} & \multicolumn{1}{c}{certifiable radius} & \multicolumn{1}{c}{training} & \multicolumn{1}{c}{inference} & \multicolumn{1}{c}{(smoothing or subgraphs)} \\ \hline
\rowcolor[HTML]{EFEFEF} 
Standard GNNCert & 0.683 & 0.678 & 0.673 & 0.651 & 0.057 & 10 & 6.104 & 0.019 & $T_s=20$ \\
+AuditVotes (JacAug) & 0.697 & 0.692 & 0.690 & 0.667 & 0.060 & 10 & 10.823 & 0.067 & $T_s=20$ \\
+AuditVotes (FAEAug) & 0.735 & 0.731 & 0.731 & 0.719 & 0.074 & 10 & 13.611 & 0.043 & $T_s=20$ \\
+AuditVotes (SimAug) & 0.770 & 0.766 & \textbf{0.765} & \textbf{0.742} & 0.069 & 10 & 27.478 & 0.068 & $T_s=20$ \\ \hline
\rowcolor[HTML]{EFEFEF} 
Standard SparseSmooth & 0.781 & 0.713 & 0.651 & 0.596 & 0.549 & \textbf{30} & 2.802 & 21.118 & $p_-=0.8,N=10,000$ \\
+AuditVotes   (JacAug) & 0.772 & 0.736 & 0.710 & 0.692 & 0.676 & \textbf{30} & 3.655 & 58.209 & $p_-=0.8,N=10,000$ \\
+AuditVotes   (FAEAug) & 0.758 & 0.720 & 0.671 & 0.662 & 0.623 & \textbf{30} & 12.672 & 26.023 & $p_-=0.8,N=10,000$ \\
+AuditVotes   (SimAug) & 0.793 & 0.738 & 0.704 & 0.685 & 0.660 & \textbf{30} & 19.799 & 58.394 & $p_-=0.8,N=10,000$ \\
+AuditVotes   (SimAug+Conf) & \textbf{0.812} & \textbf{0.772} & 0.756 & \textbf{0.742} & \textbf{0.715} & \textbf{30} & 19.799 & 59.346 & $p_-=0.8,N=10,000$ \\ \hline
\end{tabular}
\label{tab:cross_rd}
\end{table*}

\subsection{Novelty of Theoretical Proofs}

For SparseSmooth+Filter (Theorem~\ref{theorem-condition}), which is designed for discrete data (graphs), we establish the theoretical guarantee by re-decomposing the conditional probability such that the conditional probability under attack can be solved by a similar procedure as in SparseSmooth. The proof of the original SparseSmooth~\cite{bojchevski2020efficient} does not involve the conditional probability because they do not employ the filter, and it has not been studied in previous research. We can be solved the certification problem by a similar procedure as in SparseSmooth, because we find that the likelihood ratio is not affected by the conditional filter, since the likelihood ratio of a sample $Z$ being randomly drawn from the clean graph $\mathcal{G}$ and the perturbed graph $\mathcal{G}'$ is only depend on the number of edges they are different, and it is independent of the property of $Z$.

For Gaussian+Filter (Theorem~\ref{theorem-condition_image}), which is designed for image data, we establish the theoretical guarantee using the Neyman-Pearson Lemma. Our proof is novel in how to redefine the half-space and redefine the classifier $s(\cdot)$ in the Neyman-Pearson Lemma. Then the term $P(h(Y)=0)$ on both sides can be offset. The original proof in Gaussian~\cite{cohen2019certified} is designed for a marginal probability classifier instead of a conditional classifier.

\section{Additional Experimental Results}
\label{appendix:more_results}

\subsection{Cross Scheme Comparison}
\label{sec:appenx_cross}
In this section, we compare AuditVotes with two existing certification schemes: SparseSmooth and GNNCert. Due to the trade-off between clean accuracy and certified accuracy, we select the parameters of GNNCert and SparseSmooth that yield close clean accuracy, and then we compare their certified accuracy and maximum certifiable radius. Tables~\ref {tab:cross_ra} and~\ref {tab:cross_rd} present the cross-comparison results regarding various edge-addition and edge-deletion attack budgets. 
We also present the maximum certifiable radii and training and inference times for the different models. For GNNCert, the certified accuracy remains the same regardless of edge deletion or addition, whereas for SparseSmooth, we can select different optimal parameters for deletion or addition, respectively. 
It is worth noting that GNNCert+AuditVotes and SparseSmooth+AuditVotes have their own advantages. The former one is significantly more efficient during inference, while the latter one generally supports significantly larger certifiable radii. Notably, it is consistently observed that the model with our AuditVotes applied always achieves the best clean and certified accuracy.

\begin{table*}[!ht]
\centering
\caption{Certified accuracy ($r_a$) comparison across GNNCert and SparseSmooth w/wo AuditVotes applied (Cora-ML).}
\setlength{\tabcolsep}{1.5pt}
\begin{tabular}{lrrrrrrrrr}
\hline
 & \multicolumn{1}{c}{clean acc} & \multicolumn{4}{c}{certified acc ($r_a$)} & \multicolumn{1}{c}{maximum} & \multicolumn{1}{c}{runtime (s)} & \multicolumn{1}{c}{runtime (s)} & \multicolumn{1}{c}{Parameters} \\ \cline{1-6} \cline{8-9}
Models & \multicolumn{1}{c}{r=0} & \multicolumn{1}{c}{r=3} & \multicolumn{1}{c}{r=5} & \multicolumn{1}{c}{r=7} & \multicolumn{1}{c}{r=10} & \multicolumn{1}{c}{certifiable radius} & \multicolumn{1}{c}{training} & \multicolumn{1}{c}{inference} & \multicolumn{1}{c}{(smoothing or subgraphs)} \\ \hline
\rowcolor[HTML]{EFEFEF} 
Standard GNNCert & 0.683 & 0.678 & 0.673 & 0.651 & 0.057 & 10 & 6.104 & 0.019 & $T_s=20$ \\
+AuditVotes (JacAug) & 0.697 & 0.692 & 0.690 & 0.667 & 0.060 & 10 & 10.823 & 0.067 & $T_s=20$ \\
+AuditVotes (FAEAug) & 0.735 & 0.731 & 0.731 & 0.719 & 0.074 & 10 & 13.611 & 0.043 & $T_s=20$ \\
+AuditVotes (SimAug) & \textbf{0.770} & \textbf{0.766} & \textbf{0.765} & \textbf{0.742} & 0.069 & 10 & 27.478 & 0.068 & $T_s=20$ \\ \hline
\rowcolor[HTML]{EFEFEF} 
Standard SparseSmooth & 0.140 & 0.140 & 0.140 & 0.140 & 0.140 & \textbf{48} & 13.390 & 572.745 & $p_+=0.2,p_-=0.6,N=10,000$ \\
+AuditVotes   (JacAug) & 0.738 & 0.715 & 0.690 & 0.683 & 0.681 & \textbf{48} & 16.756 & 611.468 & $p_+=0.2,p_-=0.6,N=10,000$ \\
+AuditVotes   (FAEAug) & 0.719 & 0.713 & 0.706 & 0.704 & 0.703 & \textbf{48} & 26.895 & 531.285 & $p_+=0.2,p_-=0.6,N=10,000$ \\
+AuditVotes   (SimAug) & 0.754 & 0.733 & 0.715 & 0.710 & 0.708 & \textbf{48} & 30.612 & 512.534 & $p_+=0.2,p_-=0.6,N=10,000$ \\
+AuditVotes   (SimAug+Conf) & 0.761 & 0.745 & 0.733 & 0.726 & \textbf{0.720} & \textbf{48} & 30.574 & 547.963 & $p_+=0.2,p_-=0.6,N=10,000$ \\ \hline
\end{tabular}
\label{tab:cross_ra}
\end{table*}

\subsection{Other Base Classifier}
We put other experimental results here due to the space limit. We evaluate APPNP~\cite{gasteiger2019appnp} as the base node classifier in Figure~\ref{fig:appnp}. 
We observe that the APPNP+SimAug significantly outperforms the vanilla APPNP smoothing model regarding both clean accuracy and certified accuracy. When certifying edge deletion ($r_d$), conditional smoothing with the confidence filter (Conf) can further improve the performance. 

\begin{figure}[htb]
\centering
\subfigure[Citeseer]{\includegraphics[width=0.235\textwidth,height=3.2cm]{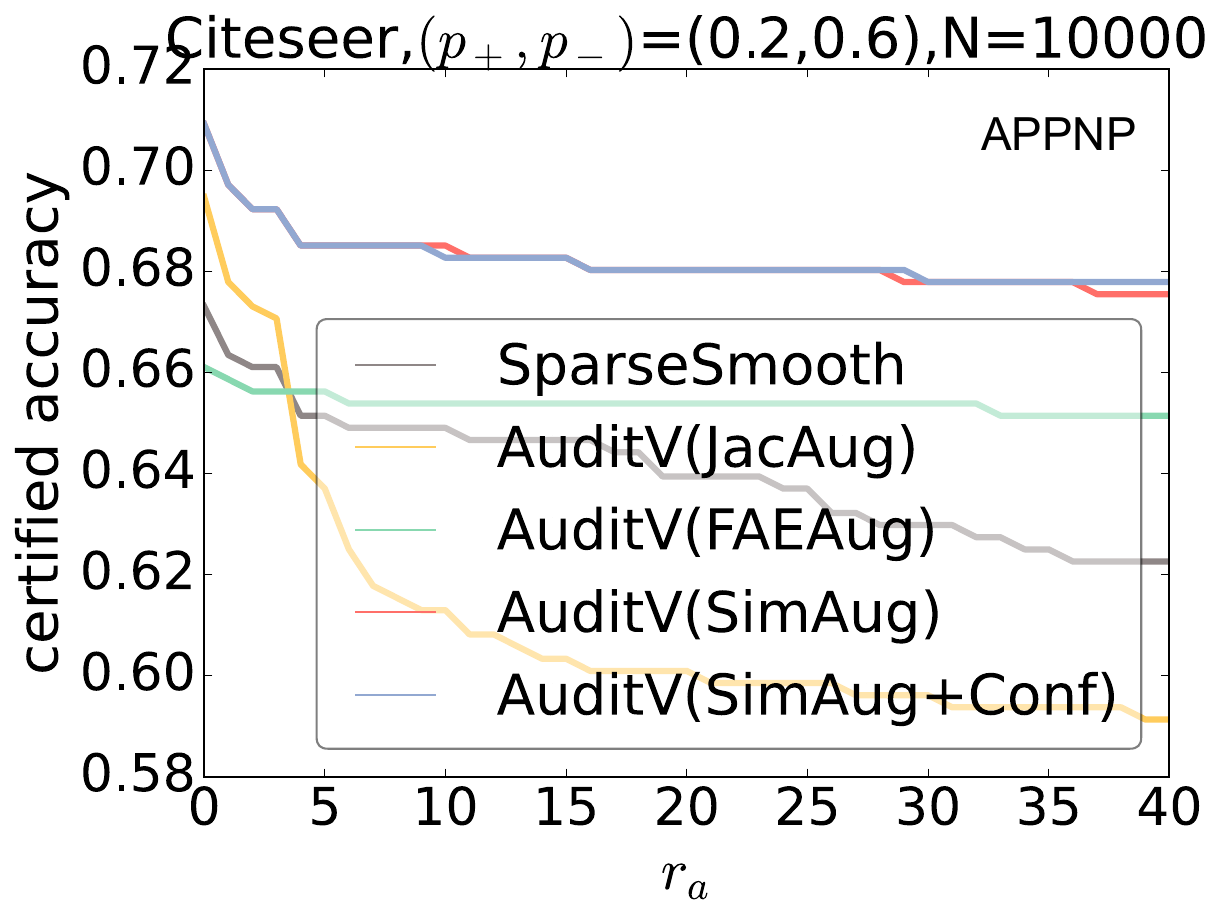}}
\subfigure[Citeseer]{\includegraphics[width=0.235\textwidth,height=3.2cm]{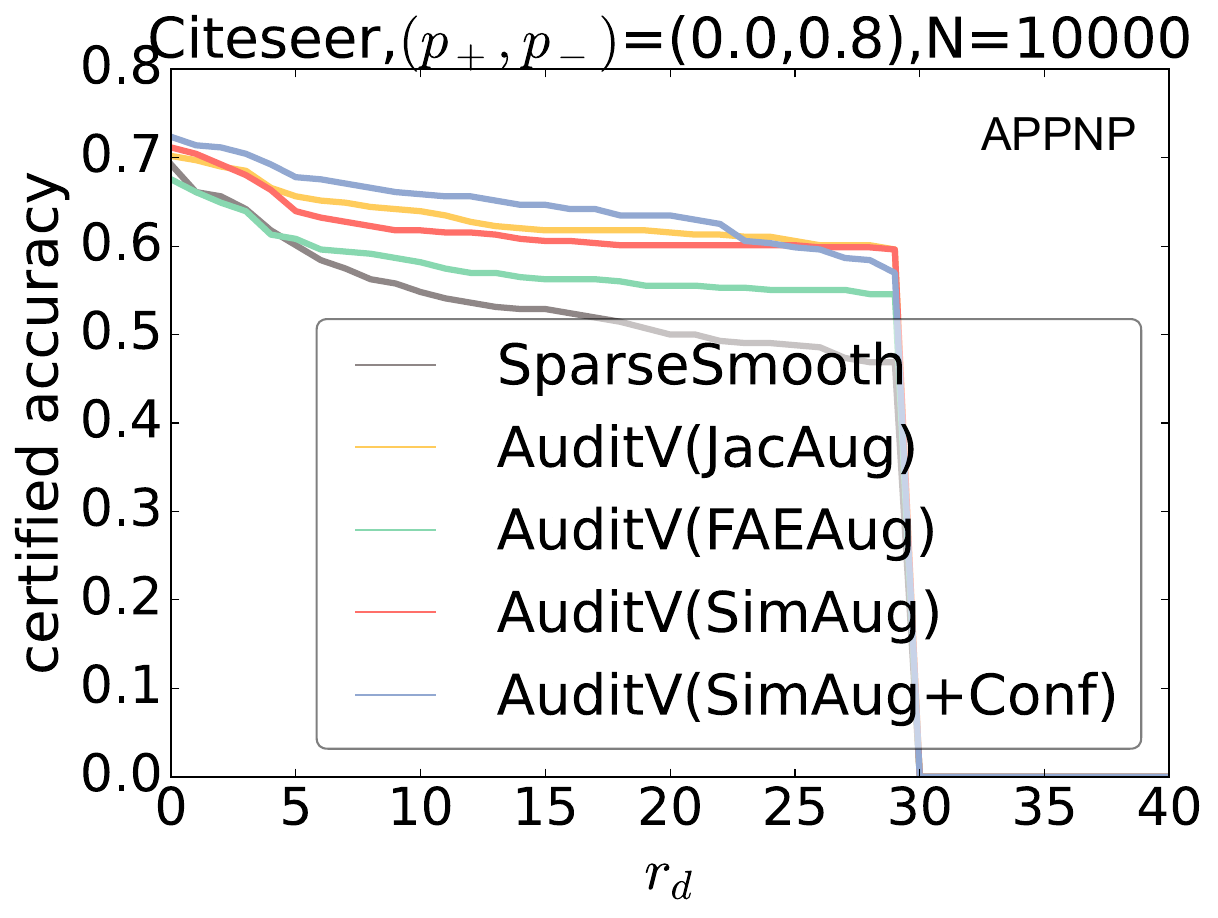}}
\vspace{-7pt}
\caption{Using APPNP as the base GNN model for randomized smoothing.}
\label{fig:appnp}
\end{figure}


\subsection{Other Filtering Functions}
\label{sec:other_filter} 
Previous studies~\cite{zhu2020beyond} have revealed that homophily is an important factor for node classification performance. Empirically, the graph neural networks perform better for nodes with higher homophily. However, the homophily requires the label information, which is not available during the testing. Instead, we use the pseudo label (predicted by the base classifier) to calculate the node homophily. Given a node $v$, the node homophily $homo(\cdot)$ is calculated as follows: 
\begin{equation}
\label{eqn:homo_defi}
    homo(\mathcal{G})_v= \frac{|\{u\in\mathcal{N}(v):\hat{y}_v=\hat{y}_u\}|}{|\mathcal{N}(v)|},
\end{equation}
where $\mathcal{V}$ is the set of nodes, $\mathcal{N}(v)$ is the neighbor of node $v$, and $\hat{y}_v$ is the predicted class of node $v$. If all the neighbors of the node $v$ have the same (different) predicted label, then the homophily is $homo=1$ ($0$). In particular, if a node is an isolated node, we set its homophily $homo=0$.


\begin{table}[!ht]
\centering
\caption{Evaluation of MLP filtering function on CiteSeer dataset.}
\begin{tabular}{cllll}
\hline
 & \multicolumn{4}{c}{certified accuracy   ($r_d$)} \\ \cline{2-5} 
\multirow{-2}{*}{Models} & \multicolumn{1}{c}{\textbf{0}} & \multicolumn{1}{c}{\textbf{5}} & \multicolumn{1}{c}{\textbf{7}} & \multicolumn{1}{c}{\textbf{10}} \\ \hline
\rowcolor[HTML]{EFEFEF} 
SparseSmooth & 0.700 & 0.596 & 0.570 & 0.550 \\
AuditV(JacAug) & 0.695 & 0.620 & 0.613 & 0.599 \\
AuditV(JacAug+Conf) & 0.663 & 0.630 & 0.627 & 0.623 \\
AuditV(JacAug+\textbf{MLP}) & \textbf{0.712} & 0.647 & 0.644 & 0.635 \\
AuditV(FAEAug) & 0.680 & 0.620 & 0.601 & 0.582 \\
AuditV(FAEAug+Conf) & 0.675 & 0.644 & 0.639 & \textbf{0.630} \\
AuditV(FAEAug+\textbf{MLP}) & 0.671 & \textbf{0.656} & \textbf{0.651} & 0.625 \\ \hline
\end{tabular}
\label{tab:MLP_filter}
\end{table}

JSDivergence~\cite{zhang2021detection} measures the discrepancy of a probability vector, aiming to detect victim nodes targeted by an attacker. The intuition is that the attacker prefers adding edges between nodes with different classes, and this might enlarge the probability discrepancy between neighbors. Specifically, for a node $v$, the $JSD(\cdot)$ is calculated as follows:
$$JSD(\mathcal{G})_v=H(\frac{1}{|\mathcal{N}(v)|}\sum_{u\in\mathcal{N}(v)}p_u)-\frac{1}{|\mathcal{N}(v)|}\sum_{u\in\mathcal{N}(v)} H(p_u),
$$
where $p_u$ is the softmax probability vector of node $u$ predicted by base classifier $f$, and $H(\cdot)$ denotes the Shannon entropy. 

To jointly utilize these metrics, we employ an MLP model with three combined features as its input: $h(\phi(\mathcal{G}))=MLP(Conf(\phi(\mathcal{G}))$, $Homo(\phi(\mathcal{G})),JSD(\phi(\mathcal{G})))$. Then, we can sample some noise graphs $\phi(\mathcal{G})$ and train the MLP model on the training nodes with labels of whether they are correctly classified by the base classifier $f$. Nevertheless, we occasionally observe slight improvement when using combined metrics (Table~\ref{tab:MLP_filter}). Compared to using confidence scores alone, it requires more computation and hyperparameter tuning during training of the MLP model. As a result, we recommend the confidence filtering (Conf), which is efficient and effective.


\begin{table*}[!ht]
\centering
\caption{Empirical robust accuracy among regular robust GNNs and smoothed GNNs. Attacker: Nettack~\cite{zugner2018adversarial} (evasion setting), with 30 targeted nodes. The attack power is the edge number the attacker can manipulate for each target node.}
\vspace{-8pt}
\setlength{\tabcolsep}{1.5pt}
\begin{tabular}{c|c|cccc|cccc}
\toprule[0.9pt]
\multirow{2}{*}{Datasets} & Defense models & \multicolumn{4}{c|}{Regular} & \multicolumn{4}{c}{Smoothed ($p_+=0.2$,   $p_-=0.3$, $N=1000$)} \\ \cline{2-10} 
 & Attack power & GCN & GAT & MedianGCN & AirGNN & SparseSmooth & \textbf{AuditV(JacAug)} & \textbf{AuditV(FAEAug)} & \textbf{AuditV(SimAug)} \\ \toprule[0.9pt]
\multirow{6}{*}{Citeseer} & 0 & 0.789 & 0.790 & 0.722 & {\ul 0.800} & 0.167 & {\ul 0.800} & 0.733 & \textbf{0.833} \\
 & 1 & 0.527 & 0.453 & 0.647 & 0.673 & 0.167 & {\ul 0.800} & 0.700 & \textbf{0.833} \\
 & 2 & 0.240 & 0.273 & 0.340 & 0.527 & 0.167 & {\ul 0.767} & 0.700 & \textbf{0.833} \\
 & 3 & 0.147 & 0.207 & 0.220 & 0.327 & 0.167 & {\ul 0.767} & 0.700 & \textbf{0.833} \\
 & 4 & 0.153 & 0.153 & 0.107 & 0.240 & 0.167 & {\ul 0.767} & 0.700 & \textbf{0.833} \\
 & 5 & 0.127 & 0.133 & 0.100 & 0.180 & 0.167 & {\ul 0.767} & 0.700 & \textbf{0.833} \\ \hline
\multirow{6}{*}{PubMed} & 0 & 0.793 & 0.800 & 0.767 & 0.760 & OOM & {\ul 0.833} & {\ul 0.833} & \textbf{0.867} \\
 & 1 & 0.447 & 0.533 & 0.587 & 0.720 & OOM & {\ul 0.833} & {\ul 0.833} & \textbf{0.867} \\
 & 2 & 0.313 & 0.380 & 0.267 & 0.653 & OOM & {\ul 0.833} & {\ul 0.833} & \textbf{0.867} \\
 & 3 & 0.200 & 0.200 & 0.200 & 0.560 & OOM & {\ul 0.833} & {\ul 0.833} & \textbf{0.867} \\
 & 4 & 0.167 & 0.153 & 0.167 & 0.447 & OOM & {\ul 0.833} & {\ul 0.833} & \textbf{0.867} \\
 & 5 & 0.167 & 0.147 & 0.140 & 0.420 & OOM & {\ul 0.833} & {\ul 0.833} & \textbf{0.867} \\ \bottomrule[0.9pt]
\end{tabular}
\label{tab:empirical_nettack}
\end{table*}

\begin{table*}[!ht]
\centering
\caption{Empirical robust accuracy comparison among regular robust GNNs and smoothed GNNs. Attacker: IG-attack~\cite{wu2019adversarial} (evasion setting), with 30 targeted nodes. (The attack encounters OOM on the PubMed dataset.)}
\vspace{-7pt}
\setlength{\tabcolsep}{1.5pt}
\begin{tabular}{c|c|cccc|cccc}
\toprule[0.9pt]
\multirow{2}{*}{Datasets} & Defense models & \multicolumn{4}{c|}{Regular} & \multicolumn{4}{c}{Smoothed ($p_+=0.2$, $p_-=0.3$, $N=1000$)} \\ \cline{2-10} 
 & Attack power & GCN & GAT & MedianGCN & AirGNN & SparseSmooth & \textbf{AuditV(JacAug)} & \textbf{AuditV(FAEAug)} & \textbf{AuditV(SimAug)} \\ \toprule[0.9pt]
\multirow{6}{*}{Cora-ML} & 0 & {\ul 0.989} & 0.933 & \textbf{1.000} & 0.978 & 0.667 & 0.933 & 0.967 & 0.967 \\
 & 1 & 0.922 & 0.856 & 0.900 & {\ul 0.944} & 0.667 & 0.933 & \textbf{0.967} & {\ul 0.933} \\
 & 2 & 0.822 & 0.767 & 0.800 & 0.856 & 0.667 & {\ul 0.933} & \textbf{0.967} & {\ul 0.933} \\
 & 3 & 0.756 & 0.700 & 0.589 & 0.744 & 0.667 & {\ul 0.933} & \textbf{0.967} & {\ul 0.933} \\
 & 4 & 0.567 & 0.667 & 0.344 & 0.556 & 0.667 & {\ul 0.933} & \textbf{0.967} & {\ul 0.933} \\
 & 5 & 0.400 & 0.444 & 0.244 & 0.467 & 0.667 & {\ul 0.933} & \textbf{0.967} & {\ul 0.933} \\ \hline
\multirow{6}{*}{Citeseer} & 0 & 0.789 & 0.789 & 0.722 & {\ul 0.800} & 0.167 & {\ul 0.800} & 0.733 & \textbf{0.833} \\
 & 1 & 0.767 & 0.622 & 0.700 & 0.767 & 0.167 & {\ul 0.800} & 0.733 & \textbf{0.833} \\
 & 2 & 0.489 & 0.456 & 0.378 & 0.522 & 0.167 & {\ul 0.800} & 0.733 & \textbf{0.833} \\
 & 3 & 0.333 & 0.322 & 0.300 & 0.378 & 0.167 & {\ul 0.800} & 0.733 & \textbf{0.833} \\
 & 4 & 0.333 & 0.289 & 0.189 & 0.333 & 0.167 & {\ul 0.800} & 0.733 & \textbf{0.833} \\
 & 5 & 0.311 & 0.278 & 0.178 & 0.300 & 0.167 & {\ul 0.800} & 0.733 & \textbf{0.833} \\ \bottomrule[0.9pt]
\end{tabular}
\label{tab:empirical_ig}
\end{table*}

\subsection{Empirical robustness evaluation }
We visualize the number of edges changed by Nettack and IG-attack in Figure~\ref{fig:edge_change}. We provide the results of clean accuracy and certified accuracy trade-off for certifying edge-deletion ($r_d$) in Figure~\ref{fig:Trade-off_rd}. Empirical robustness comparison against Nettack and IG-attack is provided in Table~\ref{tab:empirical_nettack} and Table~\ref{tab:empirical_ig}. Figure~\ref{fig:e_sparsity} visualizes the edge sparsity of the original graph, the smoothed graph before and after augmentation. 

\subsection{Image classification evaluation}
Certified accuracy evaluations on CIFAR-10 and MNIST are provided in Figure~\ref{fig:cer_image}. Results discussion is provided in the main paper.

\begin{figure}[htb]
\centering
\subfigure[CIFAR-10]{\includegraphics[width=0.235\textwidth,height=2.95cm]{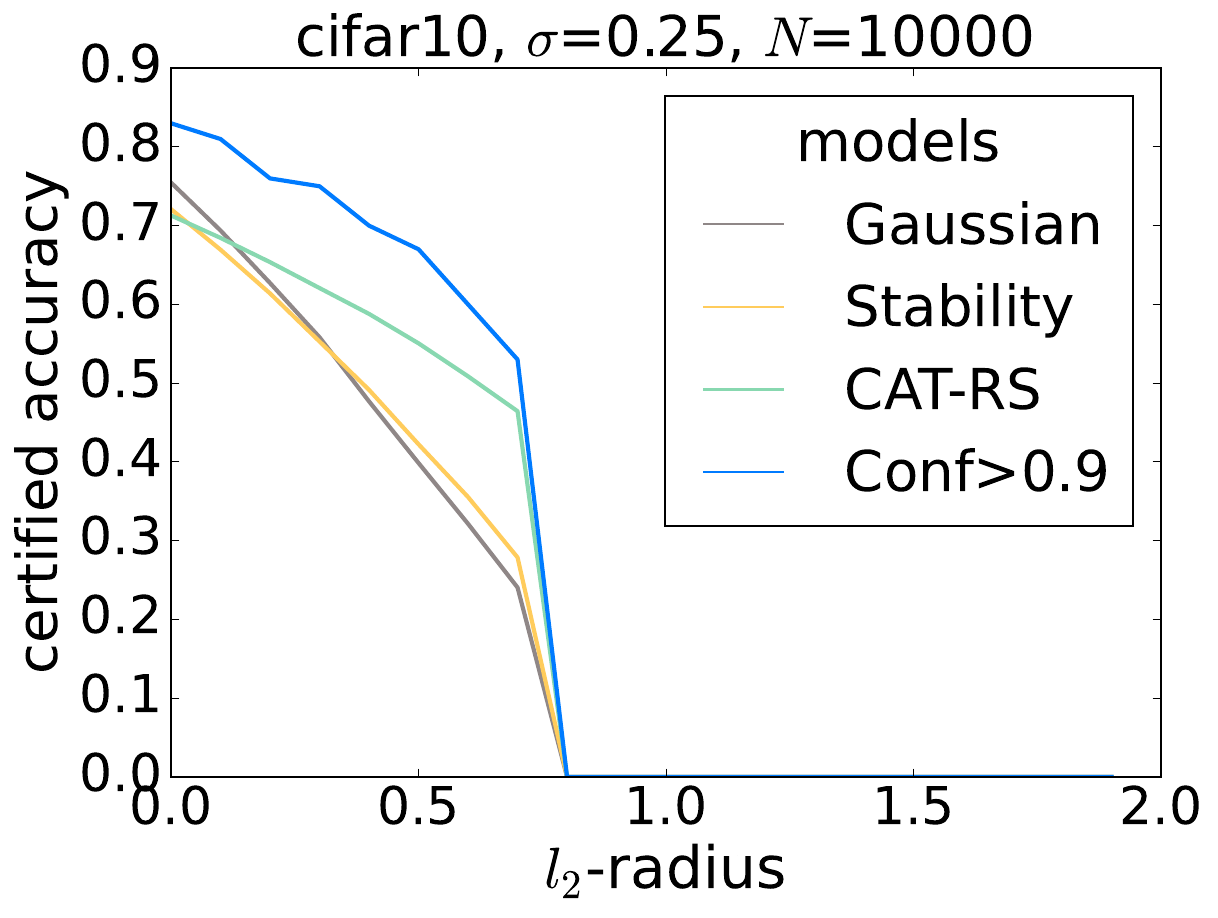}}
\subfigure[MNIST]{\includegraphics[width=0.235\textwidth,height=2.95cm]{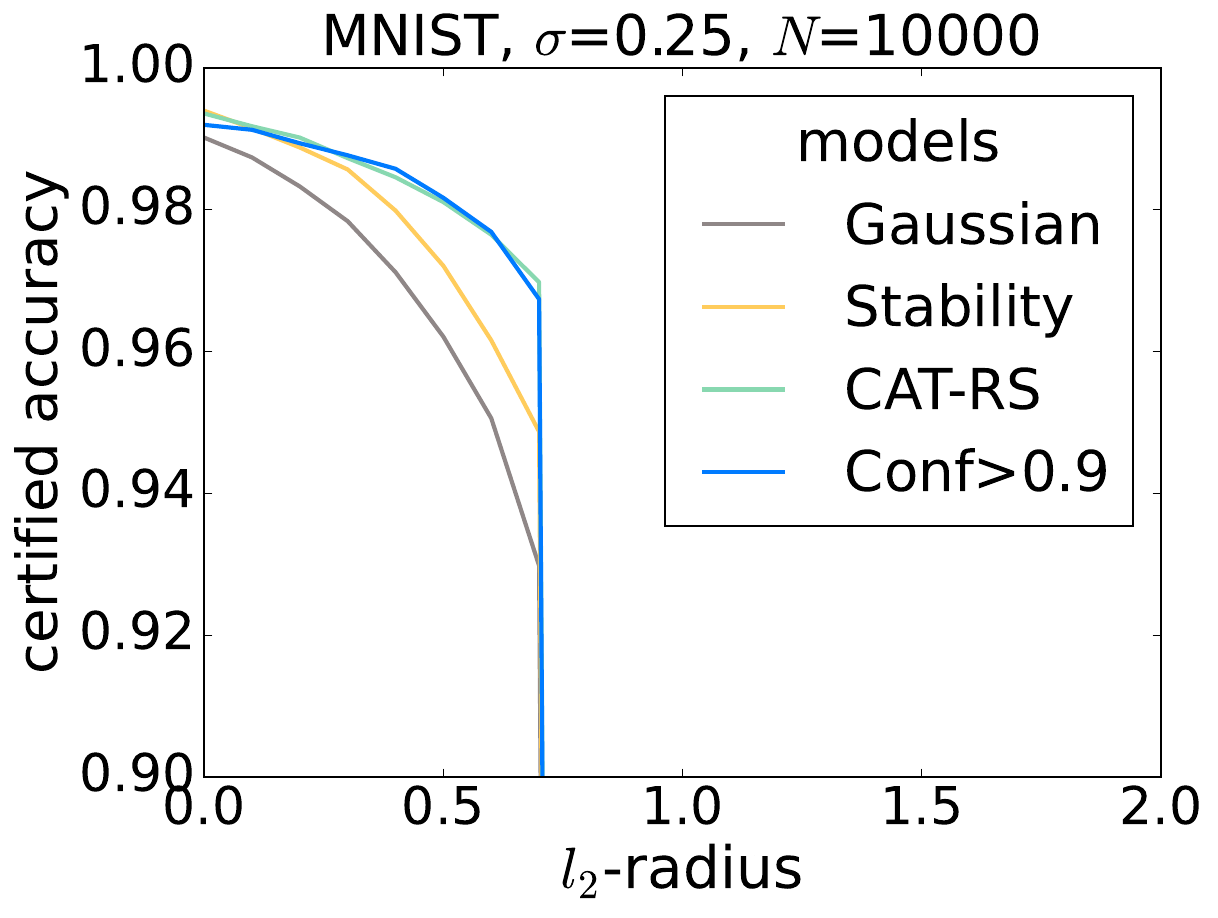}}
\vspace{-8pt}
\caption{Applying AuditVotes (Conf) to Gaussian~\cite{cohen2019certified} model for the image classification task.}
\label{fig:cer_image}
\end{figure}

\begin{figure}[!htb]
\centering
\subfigure[MNIST]{\includegraphics[width=0.235\textwidth,height=3.2cm]{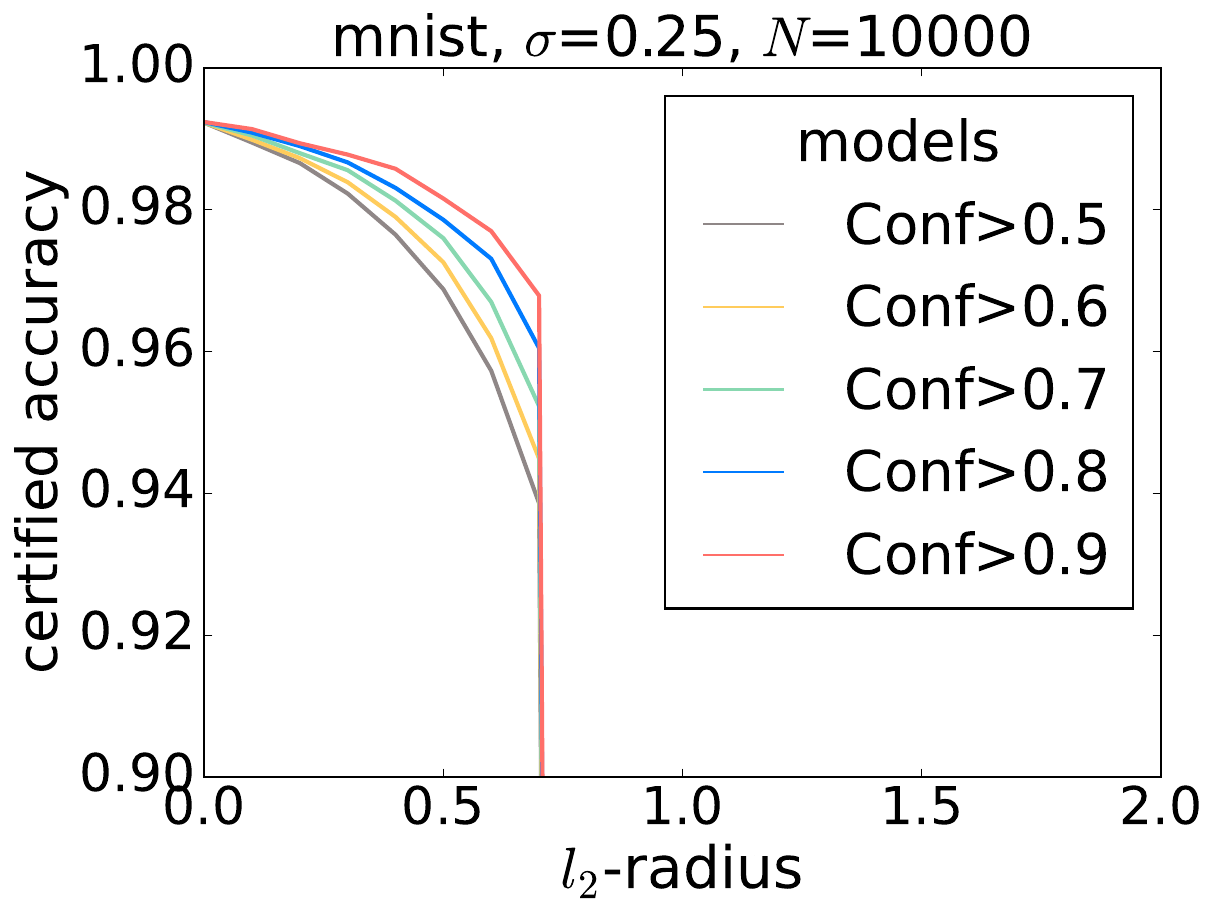}}
\subfigure[CIFAR-10]{\includegraphics[width=0.235\textwidth,height=3.2cm]{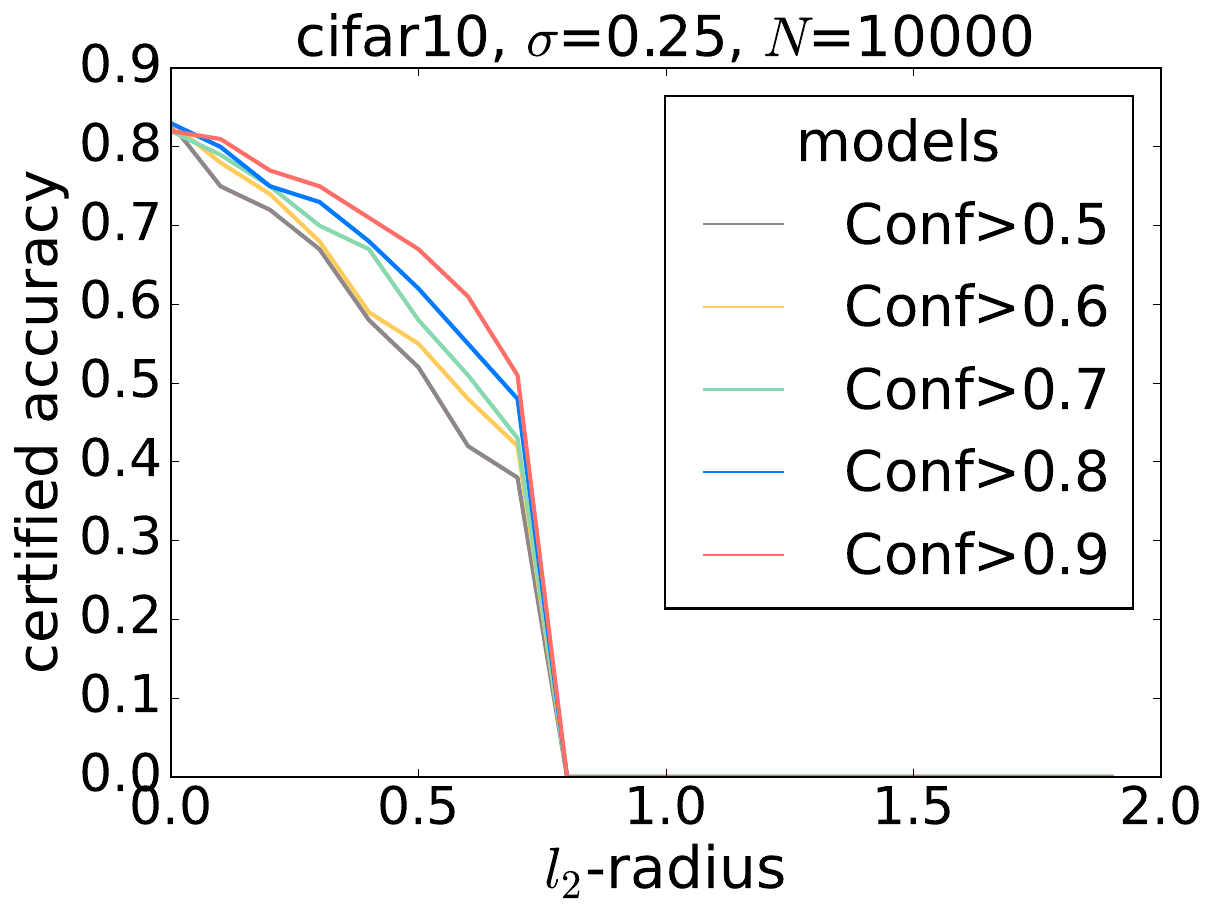}}
\caption{Certified accuracy of conditional smoothing with various confidence thresholds on image datasets.}
\label{fig:conf_thre_image}
\end{figure}

\begin{figure}[!ht]
\centering
\subfigure[Cora-ML]{\includegraphics[width=0.185\textwidth,height=2.65cm]{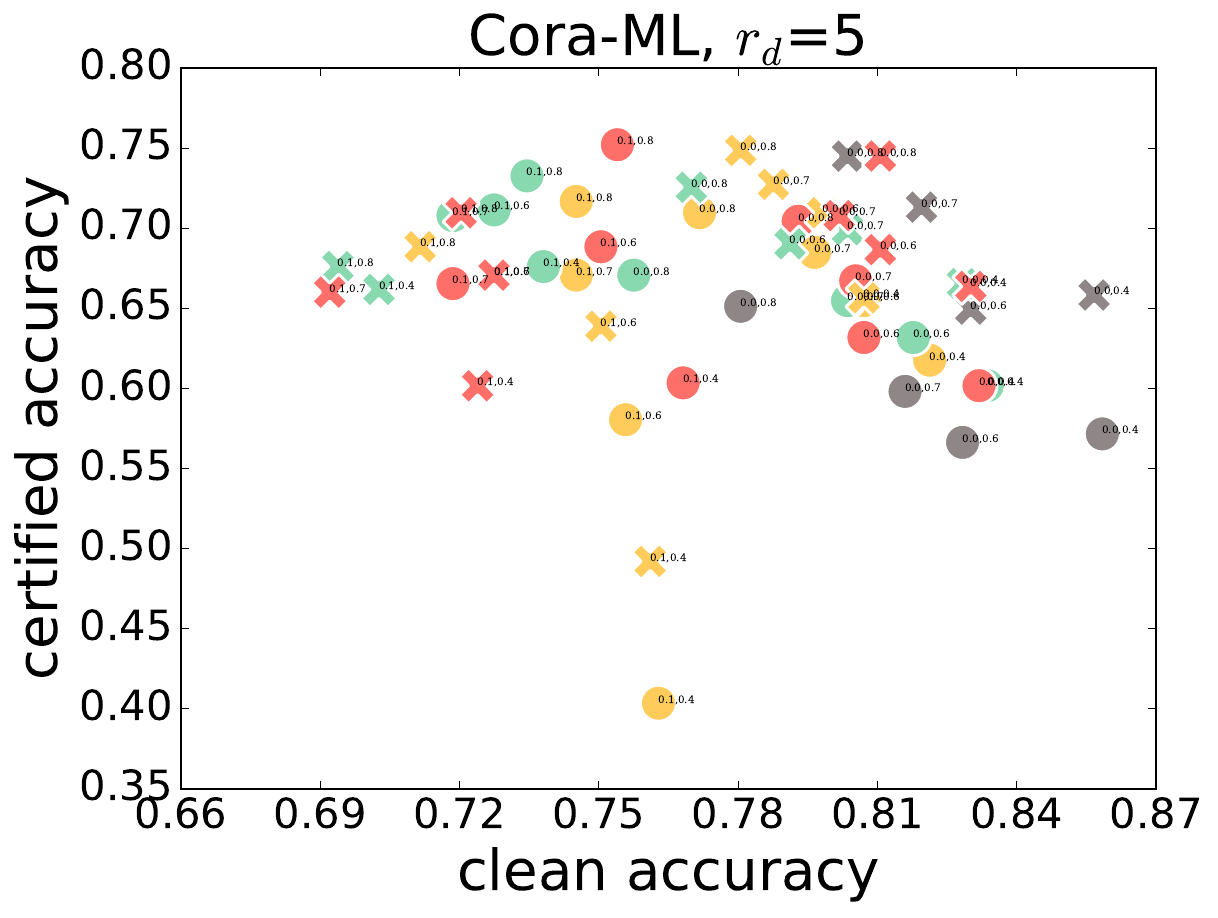}}
\subfigure[Citeseer]{\includegraphics[width=0.285\textwidth,height=2.65cm]{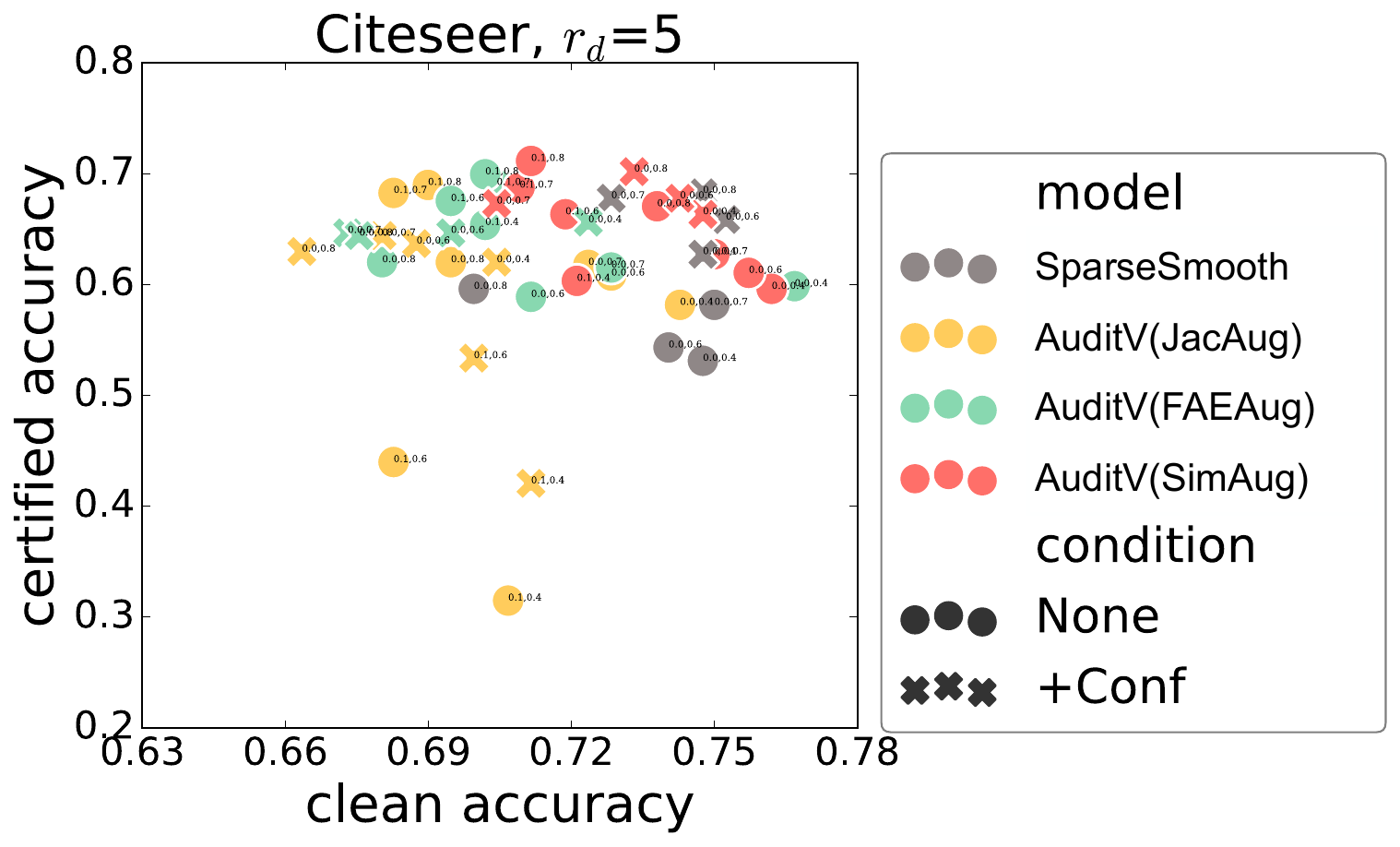}}
\subfigure[PubMed (smaller $r_d$)]{\includegraphics[width=0.185\textwidth,height=2.65cm]{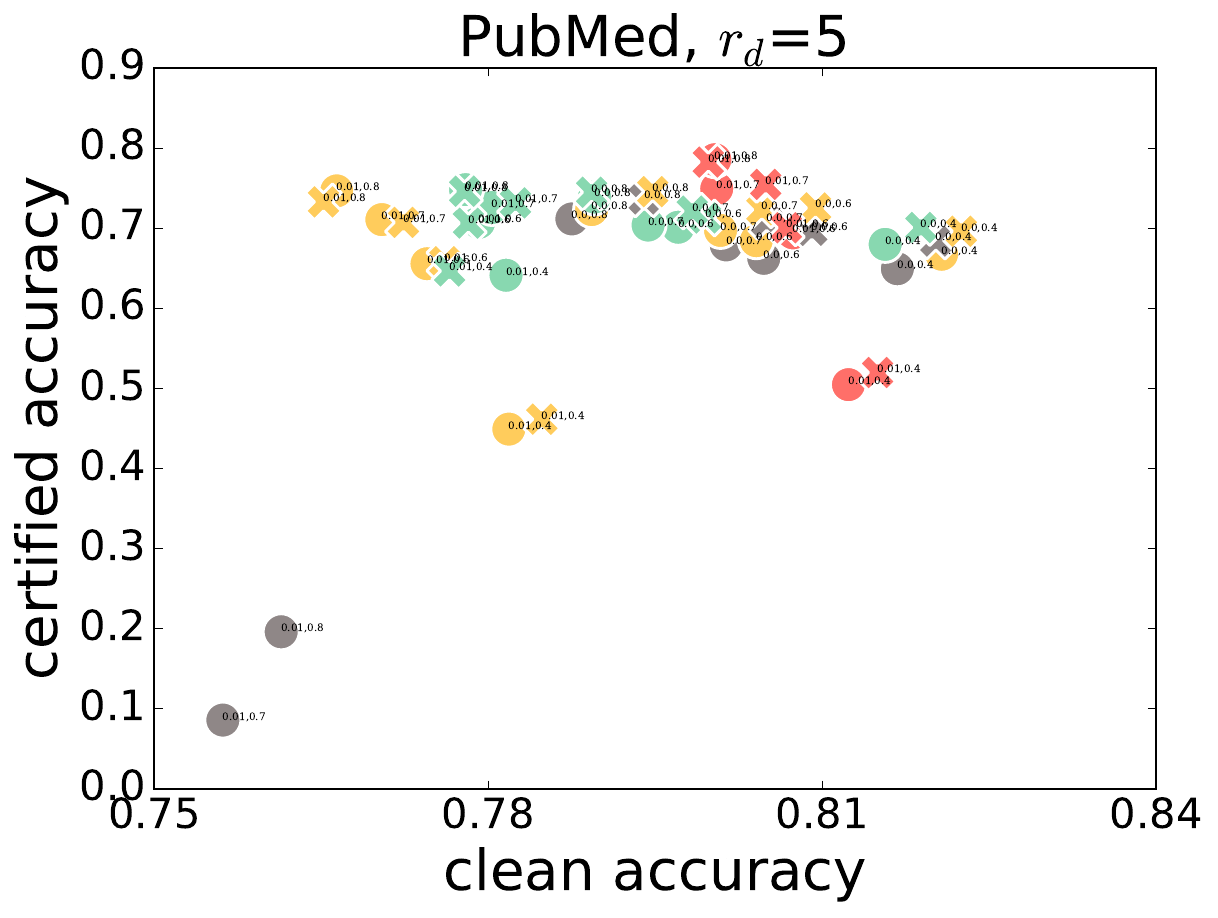}}
\subfigure[PubMed (larger $r_d$)]{\includegraphics[width=0.285\textwidth,height=2.65cm]{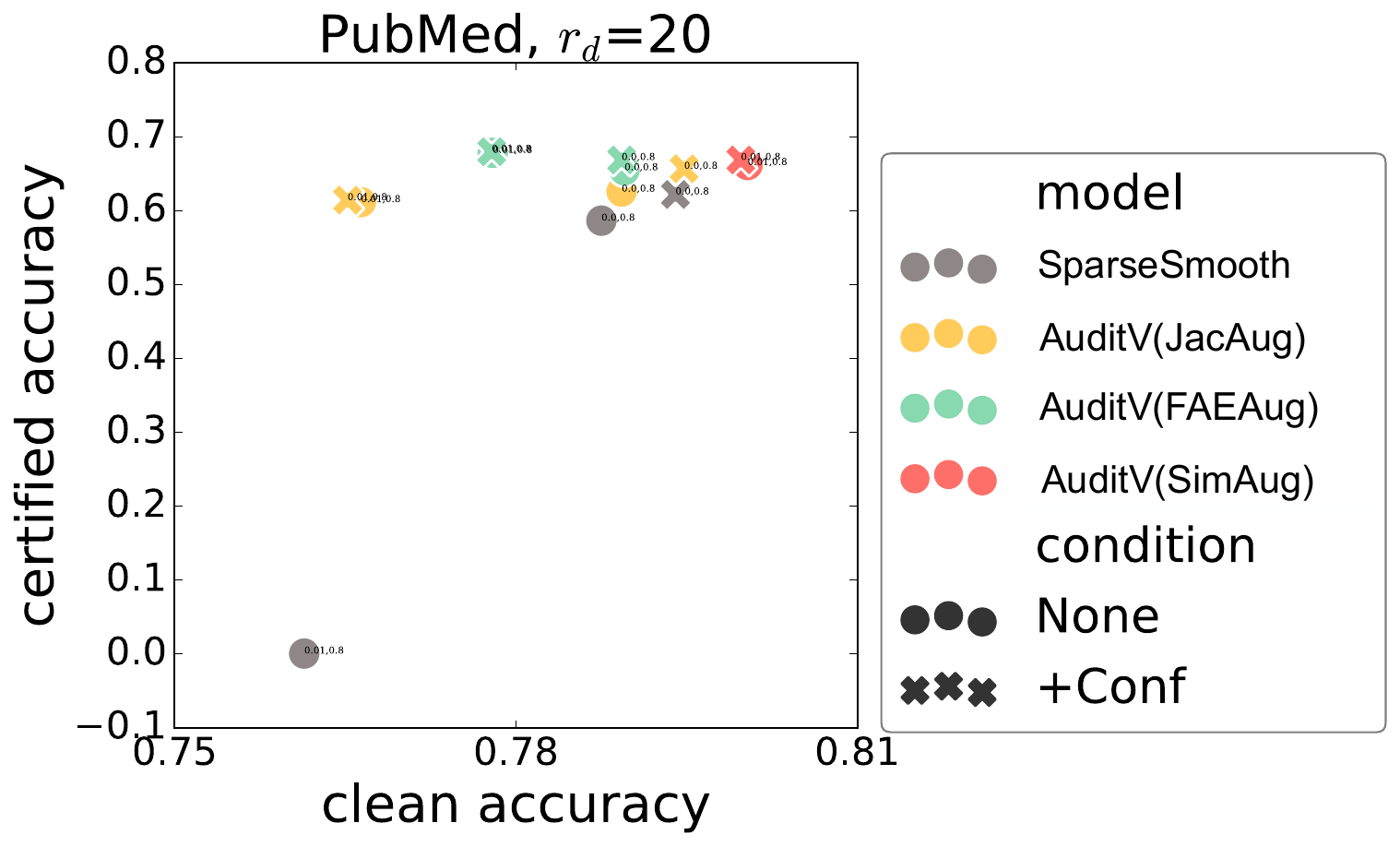}}
\vspace{-8pt}
\caption{Clean accuracy and certified accuracy trade-off.}
\label{fig:Trade-off_rd}
\end{figure}

\begin{figure}[!ht]
    \centering
    \includegraphics[width=0.40\textwidth,height=2.6cm]{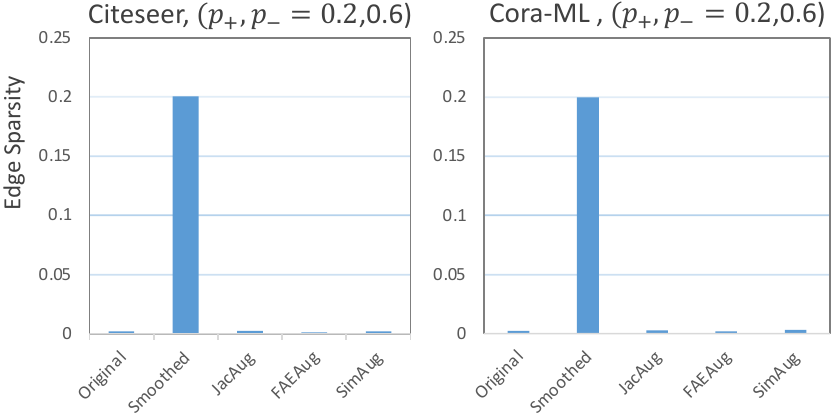}
    \vspace{-5pt}
    \caption{Edge sparsity of original graph, randomized graph with and without augmentations.}
    \label{fig:e_sparsity}
    \vspace{-8pt}
\end{figure}

\begin{figure}[htb]
\centering
\includegraphics[width=0.43\textwidth,height=2.5cm]{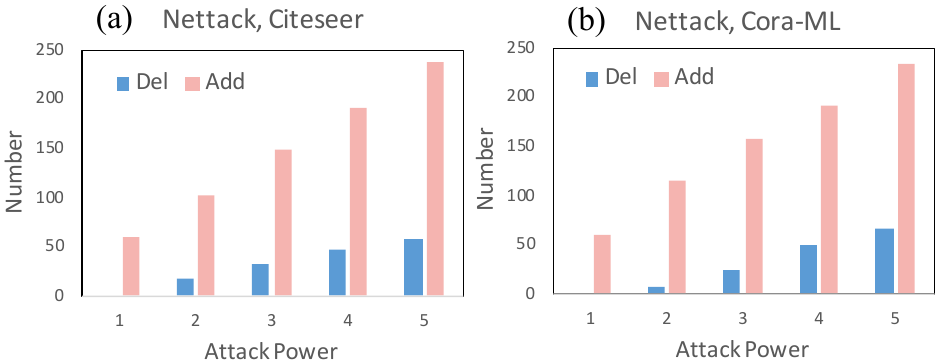}
\includegraphics[width=0.43\textwidth,height=2.5cm]{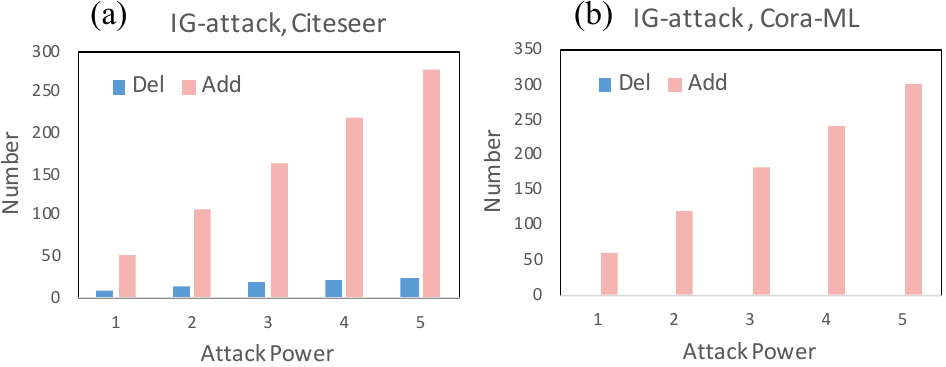}
\vspace{-5pt}
\caption{Attackers prefer adding edges rather than deleting edges.}
\label{fig:edge_change}
\end{figure}

\subsection{Hyper-parameter Analysis}
\label{sec:Hyperparameter}
In this section, we conduct a hyper-parameter analysis regarding the threshold for the confidence filter. In Figure~\ref{fig:conf_thre_graph}, we present the edge deletion certified accuracy of conditional smoothing with various confidence thresholds on graph datasets. In Figure~\ref{fig:conf_thre_image}, we present the $l_2$-norm certified accuracy of conditional smoothing with various confidence thresholds on the CIFAR-10 dataset. We note that in different datasets, the optimal confidence thresholds might vary. For convenience, we simply set the same confidence threshold for all datasets. In practice, this confidence threshold can be adjusted according to the performance of the validation nodes. 

\begin{figure}[!htb]
\centering
\subfigure[Citeseer]{\includegraphics[width=0.235\textwidth,height=3.2cm]{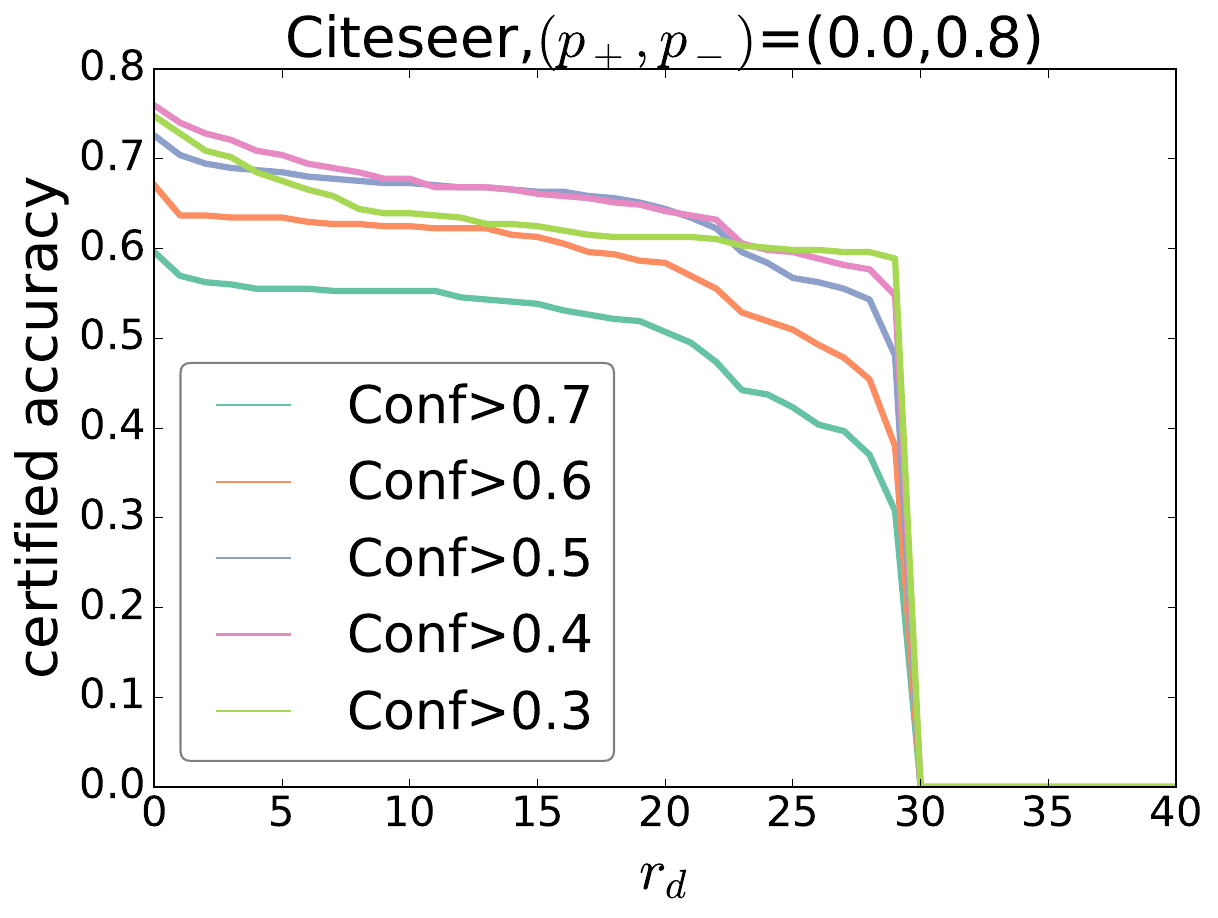}}
\subfigure[Cora-ML]{\includegraphics[width=0.235\textwidth,height=3.2cm]{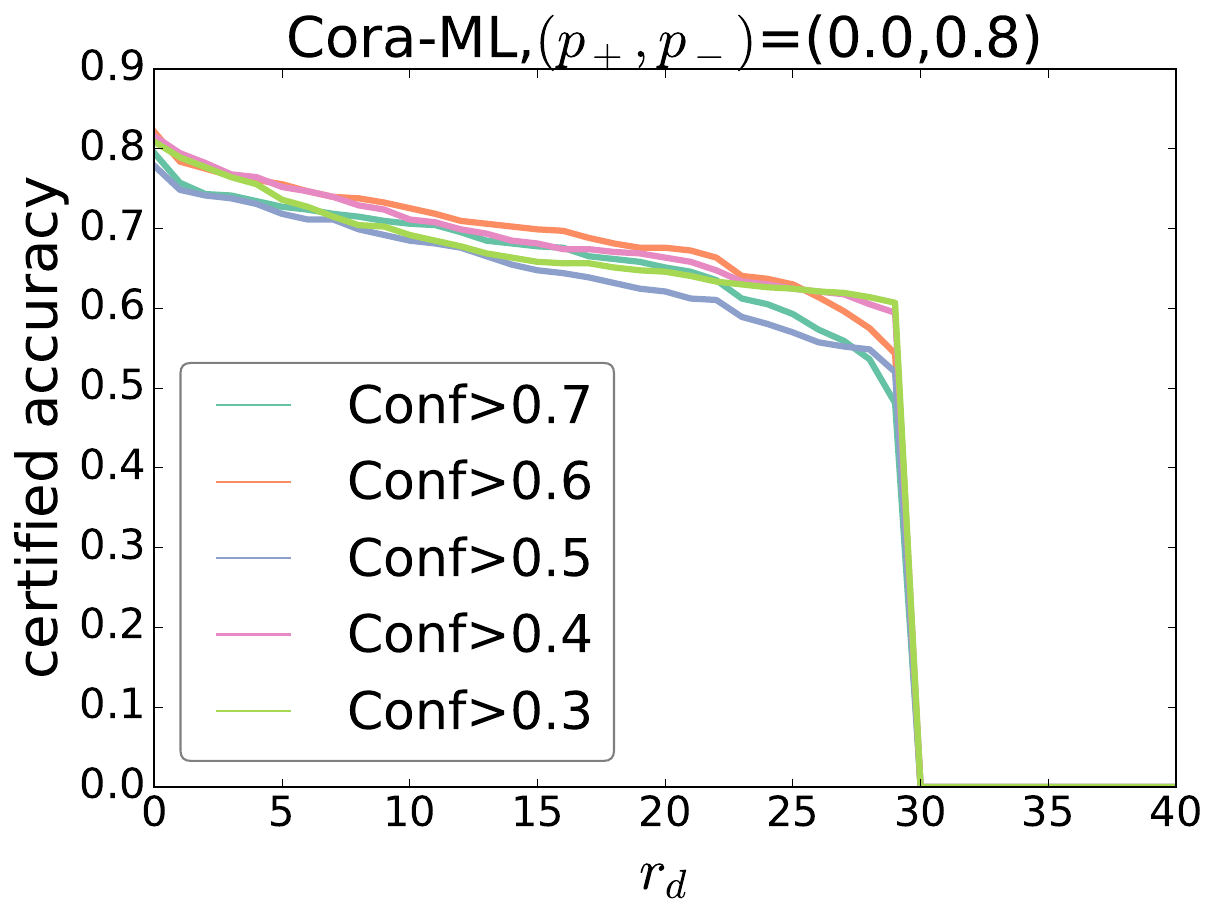}}
\caption{Certified accuracy of conditional smoothing with various confidence thresholds on graph datasets.}
\label{fig:conf_thre_graph}
\end{figure}

\subsection{Potential Real-world Security-critical Applications}
In this section, we present an example of potential real-world security-critical applications. Real-world financial networks are naturally modeled as graphs in which each node represents a user account, and each edge represents a monetary transaction. GNNs are widely deployed as anomaly detectors to flag suspicious or malicious accounts~\cite{cheng2025graph}. However, motivated adversaries can attempt to evade detectors by manipulating the transaction graph~\cite{wu2024safeguarding}, e.g., by adding carefully crafted transaction edges. A GNN with a provable defense certifies that the model’s prediction for a node cannot be flipped even if an attacker modifies up to \(r_a\) incident transaction edges. We report certified accuracy, i.e., the fraction of nodes that are both correctly classified and provably robust to \(r_a\) edge manipulations.

To illustrate security-critical usage, we evaluate two Anti–Money Laundering (AML) datasets commonly used in financial crime research: (i) T-Finance~\cite{tang2022rethinking}, a real-world transaction network collected for anomaly detection on graphs; and (ii) AMLSim, a synthetic, multi-agent simulator that generates banking transactions together with known money-laundering patterns~\cite{suzumura2021anti}. In both datasets, nodes are accounts and edges are transactions; labels indicate whether an account is illicit or benign. 

Across both datasets, integrating our AuditVotes into a certified GNN substantially improves both clean and certified accuracy. On T-Finance, for example, certified accuracy at \(r_a\in\{0,3,5,7,10,20,30\}\) increases from as low as 0.155 under SparseSmooth to 0.868 with our AuditVotes-augmented model, representing a large gain in certifiable robustness without sacrificing accuracy. On AMLSim, which stresses detectors with diverse laundering motifs, AuditVotes likewise raises certified accuracy across budgets while preserving clean performance. Tables~\ref{tab:cert_tfinance} and \ref{tab:cert_amlsim} provide the full results.

\begin{table}[!ht]
\centering
\caption{Certified Accuracy on Anti–Money Laundering Datasets (T-Finance \cite{tang2022rethinking}) with $p_+=0.2$ and $p_-=0.6$. }
\label{tab:cert_tfinance}
\setlength{\tabcolsep}{1.5pt}
\begin{tabular}{lrrrrrrr}
\hline
 & \multicolumn{7}{c}{Certified Accuracy ($r_a$)} \\ \hline
Models & 0 (Clean) & 3 & 5 & 7 & 10 & 20 & 30 \\ \hline
SparseSmooth & 0.368 & 0.239 & 0.216 & 0.207 & 0.201 & 0.169 & 0.155 \\
+AuditV(F) & \textbf{0.868} & \textbf{0.868} & \textbf{0.868} & \textbf{0.868} & \textbf{0.868} & \textbf{0.868} & \textbf{0.868} \\ \hline
\end{tabular}
\end{table}

\begin{table}[!ht]
\centering
\caption{Certified Accuracy on Anti–Money Laundering Datasets (AMLSim \cite{suzumura2021anti}) with $p_+=0.2$ and $p_-=0.6$.}
\label{tab:cert_amlsim}
\setlength{\tabcolsep}{1.5pt}
\begin{tabular}{lrrrrrrr}
\hline
 & \multicolumn{7}{c}{Certified Accuracy ($r_a$)} \\ \hline
Models & 0 (Clean) & 3 & 5 & 7 & 10 & 20 & 30 \\ \hline
SparseSmooth & 0.509 & 0.509 & 0.509 & 0.509 & 0.509 & 0.509 & 0.509 \\
+AuditV(F) & \textbf{0.755} & \textbf{0.746} & \textbf{0.746} & \textbf{0.746} & \textbf{0.746} & \textbf{0.744} & \textbf{0.742} \\ \hline
\end{tabular}
\end{table}

Overall, these results indicate that provably robust GNNs can provide actionable, certification-backed decisions in financial fraud detection—precisely the type of security-critical environment where resilience to adaptive, edge-manipulating adversaries is essential.

\subsection{Evaluation On Heterophilic Graph}
While our main paper focuses on homophily graphs, our augmentation function FAEAug does not rely on any homophily assumption. FAEAug learns to reconstruct edge patterns from node features—capturing any feature-edge correlation present in the data, whether homophilous or heterophilous. In Table~\ref{tab:cert_actor}, we evaluate FAEAug on a heterophilic graph dataset (Actor~\cite{tang2009social}) and a heterophilic GNN (H2GCN~\cite{zhu2020beyond}) to show its compatibility. The results show that the certified accuracy consistently increases, reaching 27.56\% when $r_d=20$.

\begin{table}[ht!]
\centering
\caption{Certified Accuracy on heterophilic graph dataset (Actor~\cite{tang2009social}) and a heterophilic GNN (H2GCN~\cite{zhu2020beyond}) with $p_+=0.0$ and $p_-=0.8$.}
\label{tab:cert_actor}
\setlength{\tabcolsep}{1.5pt}
\begin{tabular}{lrrrrrrr}
\hline
 & \multicolumn{7}{c}{Certified Accuracy ($r_d$)} \\ \hline
Models & 0 (Clean) & 3 & 5 & 7 & 10 & 20 & 30 \\ \hline
SparseSmooth & 0.296 & 0.259 & 0.240 & 0.229 & 0.216 & 0.185 & 0.000 \\
+AuditV(F) & \textbf{0.301} & \textbf{0.286} & \textbf{0.269} & \textbf{0.262} & \textbf{0.252} & \textbf{0.236} & 0.000 \\ \hline
\end{tabular}
\end{table}

\end{document}